\documentclass{article}
\usepackage[utf8]{inputenc}
\usepackage{authblk}
\usepackage{setspace}
\usepackage[margin=1.25in]{geometry}
\usepackage{graphicx}
\graphicspath{ {./figures/} }
\usepackage{subcaption}
\usepackage{amsmath}
\usepackage{booktabs}
\usepackage{wrapfig}
\usepackage{amssymb}
\usepackage{mathtools}
\usepackage{amsthm}
\usepackage{url}
\usepackage[numbers]{natbib}
\usepackage{xcolor}
\usepackage{hyperref}
\usepackage{bbm}
\usepackage{nicefrac}
\usepackage{multirow}
\usepackage{multicol}
\usepackage{dsfont}
\usepackage{amsfonts}
\usepackage{adjustbox}

\theoremstyle{plain}
\newtheorem{theorem}{Theorem}[section]

\newtheorem{lemma}[theorem]{Lemma}
\newtheorem{corollary}[theorem]{Corollary}
\theoremstyle{definition}

\newtheorem{assumption}[theorem]{Assumption}
\theoremstyle{remark}

\newcommand{\norm}[1]{\left\lVert#1\right\rVert}
\newcommand{\abs}[1]{\left\vert#1\right\vert}

\newcommand{\R}{\mathbb R}

\renewcommand{\thefootnote}{\fnsymbol{footnote}}

\def\Cov{\mbox{Cov}}

\def\Var{\mbox{Var}}

\def\E{\mathbb{E}}

\def\qv{\mathbf q}

\newcommand{\Ic}{\mathcal{I}}

\newcommand{\Qc}{\mathcal{Q}}
\newcommand{\Rc}{\mathcal{R}}

\newcommand{\Tc}{\mathcal{T}}

\newcommand{\Vc}{\mathcal{V}}

\newcommand{\Zc}{\mathcal{Z}}

\def\1v{\mathbf 1}
\def\0v{\mathbf 0}

\def\1{\mathbbm{1}}
\def\P{\mathbb{P}}

\DeclareMathOperator*{\esssup}{ess\,sup}
\newcommand{\ind}{\mathds{1}}
\newcommand{\epshat}{\widehat\epsilon}
\newcommand{\FH}{F_{\widehat\epsilon}^{H}}
\newcommand{\FZ}{F_{\widehat\epsilon}}
\newcommand{\FhatH}{\widehat F_{T,\hat\epsilon}^{H}}
\newcommand{\floor}[1]{\lfloor#1\rfloor}

\colorlet{severeUC}{black!40}  
\definecolor{mildUC}{RGB}{180,65,65}

\title{Conformal Prediction for Time Series with \\ Deep Sequence Models}

\author[1]{Junghwan Lee$^\dagger$}
\author[1]{Jonghyeok Lee$^\dagger$}
\author[1]{Yao Xie$^*$}

\affil[1]{H. Milton Stewart School of Industrial and Systems Engineering, Georgia Institute of Technology}

\date{}

\begin{document}

\maketitle

\begingroup
\renewcommand{\thefootnote}{\fnsymbol{footnote}}
\footnotetext[2]{Equal contribution.}
\footnotetext[1]{Correspondence: yao.xie@isye.gatech.edu}
\endgroup

\begin{abstract}
Recent advances in deep learning for time series prediction have amplified the need for reliable uncertainty quantification. Conformal prediction has gained attention as a distribution-free framework for constructing prediction intervals with coverage guarantees. However, its coverage guarantees rely on data exchangeability, an assumption generally violated in time series. Active research has focused on developing conformal prediction methods for time series that overcome this limitation. While deep sequence models, such as recurrent neural networks and Transformers, have often been used in conformal prediction for time series, limited work has systematically studied how deep sequence models can be utilized in conformal prediction for time series. In this work, we systematically investigate the use of deep sequence models in conformal prediction for time series through three approaches: conditional quantile regression, conditional quantile function estimation, and localized conformal prediction. We provide a theoretical analysis establishing asymptotic conditional coverage guarantees for all three approaches under suitable assumptions. Through comprehensive experiments on real-world datasets, we demonstrate the effectiveness of leveraging deep sequence models into conformal prediction for time series. 
\end{abstract}


\section{Introduction}

Time series prediction aims to forecast future outcomes from past sequential observations of features and outcomes~\citep{box2015time}, and has been widely applied across a broad range of downstream tasks, such as climate modeling, biological science, and finance~\citep{lim2021time}. Recent advances in deep learning have led to the development of various foundation models for time series prediction~\citep{miller2024survey, kim2025comprehensive, wen2023transformers, kong2025deep}. The growing use of such foundation models highlights the need for reliable uncertainty quantification in time series prediction. Conformal prediction has been actively adopted in time series prediction as a reliable and distribution-free framework for uncertainty quantification, constructing prediction sets with desired coverage guarantees~\citep{zhou2025conformal}.

\begin{figure}[ht]
    \centering
    \includegraphics[width=0.85\linewidth]{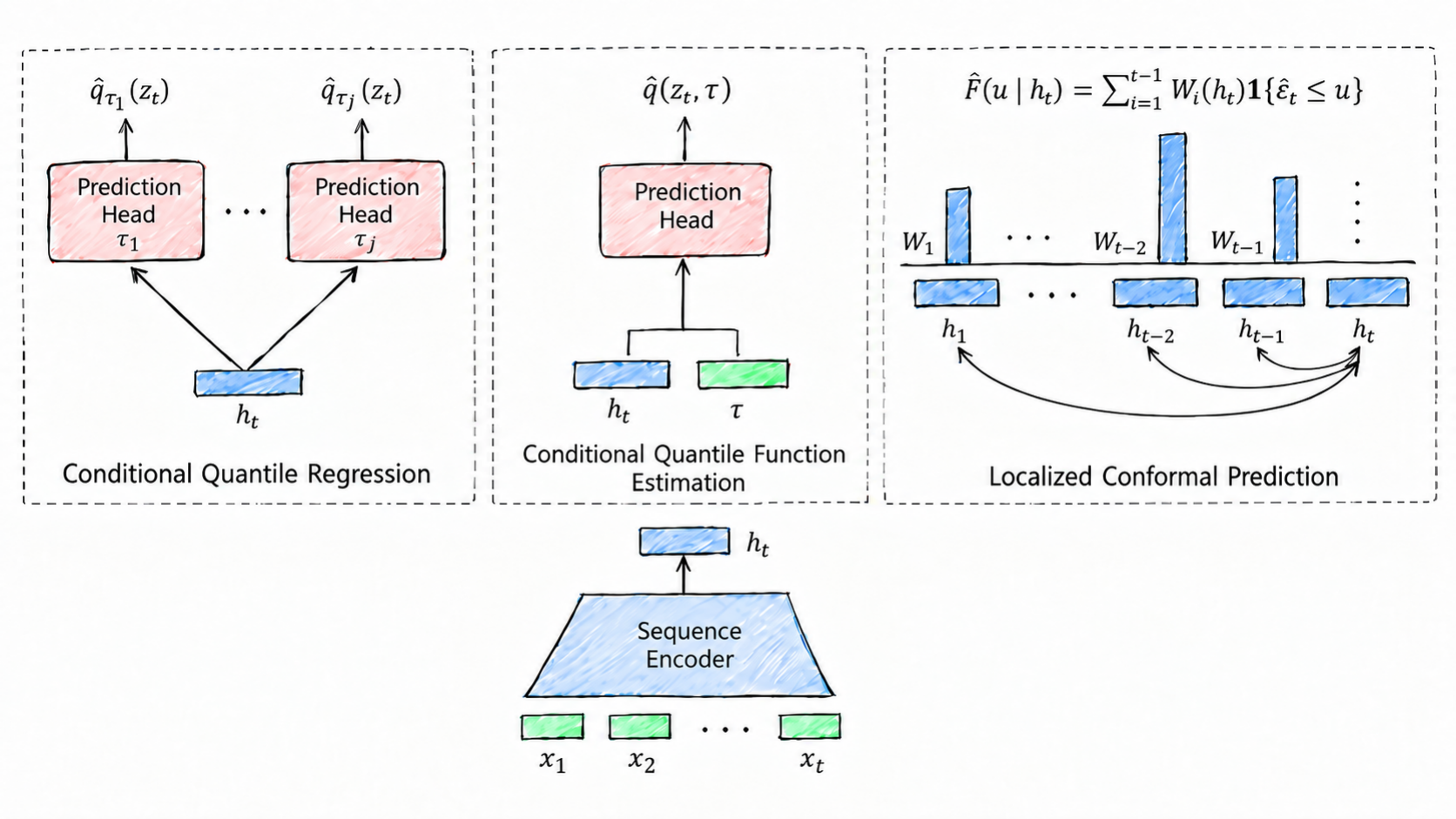}
    \caption{Three approaches for leveraging deep sequence models in conformal prediction for time series. All methods use a deep sequence model to encode the time-series context, and the resulting representation is used differently within each method: (1) quantile regression with quantile-specific prediction heads conditioned on the representation; (2) quantile function estimation for arbitrary quantile levels conditioned on the representation; and (3) similarity-based weighting computed from the representations for estimating the conditional cumulative distribution function.}
    \label{fig:methods_figure}
\end{figure}

Conformal prediction provides prediction intervals with coverage guarantees under the assumption that the data are exchangeable, without requiring additional distributional assumptions~\citep{vovk2005algorithmic,vovk2012conditional}. Despite these attractive properties for uncertainty quantification, applying conformal prediction to time series forecasting remains challenging as the temporal dependence present in time series generally violates the exchangeability assumption.  Consequently, substantial research has focused on extending conformal prediction to time series by developing methods that account for temporal dependence and the resulting violations of exchangeability~\citep{xu2021conformal,xu2023conformal,auer2024conformal,lee2024kernel,menadjiev2026distmatch}. Within this line of research, deep sequence models, such as recurrent neural networks~\citep{elman1990finding,hochreiter1997long,cho2014learning} and Transformers~\citep{vaswani2017attention}, have been increasingly adopted due to their ability to capture complex temporal dependencies~\citep{lee2024transformer,chen2024conformalized,li2025neural,cini2025relational,neglia2026rescp,lee2026flow}. However, a systematic understanding of how deep sequence models can be leveraged and their practical effectiveness in conformal prediction for time series remains limited.

To address this gap, we make the following contributions:
\begin{itemize}
    \item We propose three methods that leverage deep sequence models for conformal prediction for time series: conditional quantile regression, conditional quantile function estimation, and localized conformal prediction.

    \item We establish asymptotic conditional coverage guarantees for all three methods under suitable assumptions.

    \item We conduct comprehensive experiments on real-world datasets, demonstrating the competitive performance of the proposed methods and providing practical insights into their use.
\end{itemize}

\subsection{Related Works}
\label{sec:literature_review}

\paragraph{Conformal prediction beyond the exchangeability assumption.}


A line of work on conformal prediction under distribution shift focuses on adaptively updating the calibration procedure online. \citet{gibbs2021adaptive} address distribution shifts in sequential data by adaptively updating a coverage level based on past coverage errors, and \citet{zaffran2022adaptive} extend this framework to time series by proposing an aggregation-based approach to improve its adaptivity. More recent work along this line includes~\citet{bhatnagar2023improved,angelopoulos2023conformal,angelopoulos2024online,gibbs2024conformal,areces2025online}. Another line of work on conformal prediction under distribution shift between the calibration and test set is by reweighting the data points. \citet{tibshirani2019conformal} address covariate shift by reweighting calibration samples according to the likelihood ratio between the test and training covariate distributions, requiring this ratio to be known or accurately estimated. \citet{barber2023conformal} uses predefined weights, such as exponentially decaying weights that assign greater importance to more recent observations. However, determining appropriate weights remains challenging when the underlying shift is unknown, and inaccurate weight estimates or poorly chosen weighting schemes may lead to coverage degradation or inefficient prediction intervals. Although not specifically designed to address non-exchangeability, localized conformal prediction~\citep{guan2023localized} uses similarity-based weights to emphasize calibration samples near the test point and adapt prediction intervals to local distributional characteristics.



\paragraph{Conformal prediction for time series.}

Early approaches combine ensemble-based predictors with sequential conformal calibration. \citet{xu2021conformal,xu2023conformal} use bootstrap ensembles with a rolling window of prediction residuals, and \citet{jensen2022ensemble} combine ensemble quantile regression with conformal calibration to adapt prediction interval widths. Related to the idea of localized conformal prediction~\citep{guan2023localized}, a growing body of work reweights calibration observations based on their relevance to the current prediction. \citet{auer2024conformal} use modern Hopfield networks to assign greater weight to observations from similar temporal regimes, and \citet{lee2024kernel} employ a reweighted Nadaraya--Watson estimator to obtain data-adaptive weights for quantile estimation of prediction residuals. \citet{chen2024conformalized} use deep sequence models to learn latent representations for attention-based reweighting of calibration samples. \citet{neglia2026rescp} use similarities between reservoir states to reweight historical residuals, capturing local temporal dynamics without training a calibration network.

Another important body of work estimates conditional quantiles of future prediction residuals given historical observations. \citet{xu2023sequential} use quantile random forests to exploit dependencies among residuals, and \citet{lee2024transformer} extend this approach using a Transformer to leverage the past residuals and covariates. \citet{cini2025relational} combine sequence models with graph neural networks to estimate residual quantiles across correlated time series while learning the relational structure from data. \citet{menadjiev2026distmatch} introduce DistMatch, which groups residual histories into distributionally similar bins using Kolmogorov--Smirnov statistics and performs bin-specific quantile estimation.

\subsection{Problem Setup}
\label{sec:problem_setup}

Consider a sequence of observations $\{ (X_t, Y_t) \}_{t \ge 1}$, where $X_t \in \mathbb{R}^d$ denotes a $d$-dimensional feature vector and $Y_t \in \mathbb{R}$ denotes a continuous scalar outcome. Given a forecast horizon $\ell \ge 1$, let $Z_t$ denote the contextual information available at time $t$, used to construct a prediction interval for $Y_{t+\ell}$. For instance, if the most recent $w$ features are used as context, then $Z_t = [X_{t-(w-1)}, \ldots , X_t]$.

We assume that a point prediction model $\widehat f$ is available to produce
\begin{equation}
    \widehat Y_{t + \ell} = \widehat f(Z_t'), \quad \epshat_{t+\ell} = Y_{t+\ell} - \widehat f(Z_t').
\end{equation}
Here, $Z'_t$ denotes the information used by the point prediction model and need not coincide with $Z_t$. The choice of $Z'_t$ depends on the prediction model $\widehat f$ and can include any information available to the predictor at the time of prediction. For example, a standard regression model may use $Z'_t=X_t$, whereas a time-series foundation model uses a longer historical context. We use the signed prediction residual $\epshat_{t + \ell}$ as the nonconformity score to construct prediction intervals.

Write $T$ for the number of observed context--residual pairs used to fit a quantile model or to calibrate a fixed representation. All the methods we propose estimate residual quantiles, denoted by $\widehat q_{T, \tau}(z)$. For fixed levels $0 < \alpha_l < \alpha_h < 1$ with $\alpha_h - \alpha_l = 1 - \alpha$, define
\begin{equation}\label{eq:common-interval}
    \widehat C_T^{\alpha, \ell}(z, \widehat y) = [\widehat y + \widehat q_{T, \alpha_l}(z), \widehat y + \widehat q_{T, \alpha_h}(z)].
\end{equation}
The observed pairs are $\{(Z_i,\epshat_{i+\ell})\}_{i=1}^T$, so the last calibration outcome is available at time $T+\ell$. The prediction interval at time $t \ge T + \ell$ is evaluated at $(Z_t, \widehat Y_{t+\ell})$. Throughout the analysis, $t_T\ge T+\ell$ is an arbitrary deterministic sequence of forecast origins.


We seek narrow intervals with coverage conditional on the observed context, a stronger criterion than average marginal coverage~\citep{vovk2012conditional}. However, exact distribution-free conditional coverage with finite-length intervals is generally impossible~\citep{lei2014distribution}. Our theoretical target is therefore
\begin{equation}\label{eq:conditional-target}
    \P \left( Y_{t_T+\ell}\in\widehat C_T^{\alpha,\ell}(Z_{t_T},\widehat Y_{t_T+\ell}) \mid Z_{t_T} \right) = 1 - \alpha +o_P(1).
\end{equation}
Since the coverage error is bounded by one, \eqref{eq:conditional-target} also implies marginal coverage converging to $1-\alpha$.

\section{Methods}

In this section, we propose three approaches to leverage deep sequence models in conformal prediction for time series. Figure~\ref{fig:methods_figure} provides an overview of the three proposed methods. The three methods use a sequence encoder but use its encoded representation in different ways: as a finite collection of quantile heads, a head that also takes the quantile level as input, or kernel weights for observed calibration residuals. Given the context $z_t$ at time $t$, an encoder $\operatorname{Enc}_{\theta'}(\cdot)$ maps $z_t$ to a latent representation $h_t$ that summarizes the relevant contextual information, as $h_t=\operatorname{Enc}_{\theta'}(z_t)$, where $\theta'$ denotes the parameters of the encoder. We write $H_t=\operatorname{Enc}_{\theta'}(Z_t)$ for the corresponding random representation.


\subsection{Conditional Quantile Regression using Deep Sequence Models}
\label{sec:cqr}

Conditional quantile regression with deep sequence models has been explored in probabilistic time series forecasting \citep{wen2017multi,chen2020probabilistic,lim2021temporal}. Building on this idea, a straightforward way to leverage deep sequence models in conformal prediction for time series is to model the conditional quantiles of prediction residuals given their historical context. Specifically, a deep sequence model encodes the historical time series context into a latent representation, which is then used as input to multiple prediction heads, one for each target quantile level. The resulting quantile estimates are used to construct prediction intervals. 

Let $\mathcal{T}=\{\tau_1<\ldots<\tau_J\} \subset (0,1)$ denote the set of quantile levels at which the conditional quantiles of the prediction residuals are estimated.
We assume that the number of quantile levels $J$ is fixed throughout training and inference.
For notational simplicity, we omit the subscript
when referring to a generic quantile level $\tau\in\mathcal{T}$.
In an unconstrained multi-head parameterization, for each $\tau\in\mathcal{T}$ the corresponding quantile-specific prediction head $g_{\theta^\tau}(\cdot)$ takes $h_t$ as input and produces an estimate of the conditional $\tau$-quantile of the prediction residual $\widehat q_{T,\tau}$, given by
\[
\widehat q_{T,\tau}(z_t)=g_{\theta^\tau}(h_t)=g_{\theta^\tau}\left(\operatorname{Enc}_{\theta'}(z_t)\right),
\]
where $\theta^\tau$ denotes the parameters of the prediction head associated with quantile level $\tau$. 

Rather than employing a separate encoder for each quantile level in $\mathcal{T}$, a common architecture uses a single shared encoder together with multiple quantile-specific prediction heads \citep{wen2017multi,lim2021temporal}. Transformer \citep{vaswani2017attention} and recurrent neural networks \citep{hochreiter1997long,cho2014learning} are common choices for the encoder.

Although the model is trained using quantile loss, quantile crossing may occur due to estimation, approximation, or optimization errors \citep{bondell2010noncrossing}. To prevent quantile crossing, we parameterize the prediction heads using nonnegative increments. The head corresponding to the lowest quantile level predicts its quantile value directly, while each subsequent head produces a nonnegative increment that is added to the preceding quantile estimate. Specifically,
we define
\[
\begin{aligned}
\widehat q_{T,\tau_1}(z_t)
&= g_{\theta^{\tau_1}}(h_t),\\
\widehat q_{T,\tau_j}(z_t)
&= \widehat q_{T,\tau_{j-1}}(z_t)
+ \sigma\!\left(g_{\theta^{\tau_j}}(h_t)\right),
\quad j = 2,\ldots,J,
\end{aligned}
\]
where
$\sigma:\mathbb{R}\to[0,\infty)$ is a nonnegative activation function, such as ReLU or softplus. This construction guarantees
\[
\widehat{q}_{T,\tau_1}(z_t) \leq \cdots \leq \widehat{q}_{T,\tau_J}(z_t),
\]
for every context $z_t$. We refer to this conditional quantile regression approach using a deep sequence encoder as DSCP-QR throughout the paper.


\subsection{Conditional Quantile Function Estimation using Deep Sequence Models}
\label{sec:method_cqfe}

Conditional quantile regression with deep sequence models provides a direct way to estimate prediction residual quantiles given historical context. However, the multi-head formulation requires a separate prediction head for each target quantile level, restricting estimation to a predefined set of levels. To address this limitation, we consider conditional quantile function estimation, in which the quantile level is provided as an additional model input alongside the encoded historical context. This allows a single prediction head to estimate arbitrary quantile levels. Similar formulations have been explored in distributional reinforcement learning \citep{dabney2018implicit,yang2019fully}, uncertainty estimation \citep{tagasovska2019single}, and probabilistic time series forecasting \citep{gouttes2021probabilistic}. 


Rather than specifying a finite set of target quantile levels, we allow the quantile level $\tau$ to vary continuously over $(0,1)$.
The prediction head combines $h_t$ with the requested scalar level $\tau$ to estimate the corresponding conditional quantile of the prediction residual:
\[
\widehat q_{T, \tau}(z_t)
=
g_{\theta''}\left(h_t, \tau \right)
=
g_{\theta''}\left(
\operatorname{Enc}_{\theta'}(z_t),\tau
\right),
\]
where $\theta''$ denotes the prediction-head parameters. 

Unlike the finite-head approach, this construction does not restrict the endpoints to quantile levels specified during model construction. By varying the requested levels, the same trained model can produce intervals with different nominal coverage levels and different allocations of nominal miscoverage between the lower and upper tails, without additional prediction heads or retraining.

Similar to conditional quantile regression, quantile crossing may occur when estimating quantiles using the learned conditional quantile function. However, the nonnegative-increment parameterization used in \S~\ref{sec:cqr} is not applicable here, since the conditional quantile function is designed to estimate quantiles at arbitrary levels rather than at a finite set of prespecified quantile levels. To prevent quantile crossing, we therefore construct the prediction head as a neural network that is partially monotonic with respect to the quantile level $\tau$, while remaining unrestricted with respect to the context representation $h_t$. Specifically, the prediction head is constrained to satisfy
$$
g_{\theta''}(h_t,\tau_1)
\leq
g_{\theta''}(h_t,\tau_2),
\quad
\text{for any } \tau_1<\tau_2.
$$
Consequently, the estimated conditional quantile function is nondecreasing in the quantile level, thereby preventing quantile crossing. The details of the partially monotonic prediction head are provided in Appendix~\ref{sec:monotonic_prediction_head}. We refer to this conditional quantile function estimation approach using a deep sequence encoder as DSCP-QFE throughout the paper.

\subsection{Localized Conformal Prediction using Deep Sequence Models}

Localized conformal prediction \citep{guan2023localized} assigns greater weight to calibration samples that are similar to the test sample, rather than weighting all calibration samples equally. Building on this idea for time series prediction setup, we use a sequence model to encode historical contexts into latent representations and compute similarity-based calibration weights in the learned representation space. These weights are then used to estimate the conditional cumulative distribution function (CDF) of the prediction residuals. 

We train the encoder using quantile losses at multiple fixed quantile levels. This encourages the learned representation to retain information about the context for conditional distribution of the residuals. Consequently, contexts with similar conditional distribution are expected to exhibit similar learned representations, making distances in the representation space informative for identifying relevant calibration samples. The resulting representations therefore can be used to construct similarity-based calibration weights, assigning greater weight to calibration contexts that are closer to the test context in the learned representation space.


For this method, the latent representation serves as a learned state variable for local calibration.
For an $\ell$-step ahead prediction, our goal is to estimate the true conditional CDF of the prediction residual,
\[
F^H_{\epshat}(u \mid h) \coloneqq \P \left(\epshat_{t+\ell} \le u \mid H_t = h \right).
\]
Given a calibration set $\{(H_i,\epshat_{i+\ell})\}_{i=1}^T$, nonnegative kernel $K$, bandwidth $b_T > 0$, and representation metric $d_H$, we define the weighted empirical CDF $\FhatH (\cdot \mid h)$ at a query representation $h$ as
\begin{align*}
K_{T,i}(h) &= K(d_H(H_i, h) / b_T),&
S_T(h) &= \sum_{i=1}^T K_{T,i}(h),\\
W_{T,i}(h) &= K_{T,i}(h) / S_T(h),&
\FhatH (u \mid h) &= \sum_{i=1}^T W_{T,i}(h) \ind\{\epshat_{i+\ell} \le u\},
\end{align*}
when $S_T(h) > 0$; otherwise set $\FhatH(u \mid h) = \ind\{u \ge 0\}$.

We obtain quantiles by inversion of this weighted empirical CDF:
\begin{align*}
\widehat q^H_{T,\tau}(h) &= \inf\{u: \FhatH(u \mid h) \ge \tau\},\\
\widehat q_{T,\tau}(z) &= \widehat q^H_{T,\tau}(\operatorname{Enc}_{\theta'}(z)),\qquad
&\widehat F_{T,\epshat}(u \mid z) = \FhatH(u \mid \operatorname{Enc}_{\theta'}(z)).
\end{align*}
Here, the inverse is simply computed by sorting the calibration residuals and accumulating their weights. It is nondecreasing in $\tau$, so the resulting interval \eqref{eq:common-interval} has ordered endpoints and quantile crossing will not occur. We refer to this localized conformal prediction using a deep sequence encoder as DSCP-Local throughout the paper.



\section{Theoretical Analysis}
\label{sec:theory}

Our analysis establishes asymptotic conditional coverage through quantile consistency for DSCP-QR and DSCP-QFE, and conditional CDF consistency for DSCP-Local. Throughout, $\widehat f$ is fixed, $t_T \ge T+\ell$ is deterministic, and $\alpha_l, \alpha_h$ are fixed with $\alpha_h - \alpha_l = 1 - \alpha$. Write $W_t = (Z_t, \epshat_{t+\ell})$, $\Zc = \operatorname{supp}(P_Z)$, and $q_\tau^\star(z)$ for the true conditional residual quantile. The full assumptions and proofs are provided in Appendix~\ref{app:proofs}.


\subsection{Asymptotic Conditional Coverage for DSCP-QR}\label{subsec:theory_QR}

DSCP-QR fits all target quantiles through a shared encoder. Therefore, we require a single encoder--head configuration to approximate the target quantiles jointly and training to attain the joint empirical minimum up to an $o_P(1)$ error. To relate this training criterion to population loss under temporal dependence, we assume $W_t$ is stationary and $\alpha$-mixing. These requirements and the remaining regularity conditions are stated precisely in Assumptions \ref{assump:stationary_mixing_process}--\ref{assump:joint_empirical_risk_minimization}.

\begin{theorem}[Conditional coverage for DSCP-QR]\label{thm:main-quantile-coverage}\label{thm:main-qr-coverage}
Under Assumptions \ref{assump:stationary_mixing_process}--\ref{assump:joint_empirical_risk_minimization}, the DSCP-QR interval~\eqref{eq:common-interval}, with $\alpha_l,\alpha_h \in \Tc$, satisfies
\begin{equation}\label{eq:main-quantile-coverage}
    \P\left(Y_{t_T+\ell}\in\widehat C_T^{\alpha,\ell}(Z_{t_T},\widehat Y_{t_T+\ell})\mid Z_{t_T}\right) = 1 - \alpha + o_P(1).
\end{equation}
\end{theorem}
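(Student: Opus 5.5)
The argument has two parts. First, the coverage statement is reduced to consistency of the two fitted quantile functions at the random test context $Z_{t_T}$. Second, that consistency is obtained from an M-estimation argument for the joint quantile loss under $\alpha$-mixing.

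\textbf{Reduction to quantile consistency.} Since $Y_{t_T+\ell}=\widehat Y_{t_T+\ell}+\epshat_{t_T+\ell}$, the event $Y_{t_T+\ell}\in\widehat C_T^{\alpha,\ell}(Z_{t_T},\widehat Y_{t_T+\ell})$ is exactly
\[
\widehat q_{T,\alpha_l}(Z_{t_T})\le \epshat_{t_T+\ell}\le \widehat q_{T,\alpha_h}(Z_{t_T}).
\]
The fitted map $\widehat q_{T,\tau}$ depends only on $\{W_i\}_{i\le T}$. The test pair $W_{t_T}$ is separated from that sample by a gap of at least $\ell$, so by $\alpha$-mixing it is asymptotically independent of the fitted map. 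I would make this precise by conditioning on $\mathcal F_T=\sigma(W_1,\dots,W_T)$ and applying the mixing coupling inequality (Berbee / Davydov type) to the indicator functional. If the assumptions instead treat the fitted model as fixed given the training sample, this step is immediate.

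With the fitted map treated as fixed, the conditional coverage equals
\[
F(\widehat q_{T,\alpha_h}(Z_{t_T})\mid Z_{t_T})-F(\widehat q_{T,\alpha_l}(Z_{t_T})^-\mid Z_{t_T}),
\]
where $F(\cdot\mid z)$ is the stationary conditional residual CDF. This CDF is continuous, and Lipschitz with a bounded density near $q^\star_\tau(z)$ by the regularity assumptions, so it suffices to show $\widehat q_{T,\tau}(Z_{t_T})-q^\star_\tau(Z_{t_T})=o_P(1)$ for $\tau\in\{\alpha_l,\alpha_h\}$. Indeed, $F(q^\star_{\alpha_h}\mid z)-F(q^\star_{\alpha_l}\mid z)=\alpha_h-\alpha_l=1-\alpha$.

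\textbf{Consistency from the joint risk.} Let $R(q)=\sum_{\tau\in\Tc}\E\,\rho_\tau(\epshat_{t+\ell}-q_\tau(Z_t))$ be the population joint pinball risk and $R_T$ its empirical counterpart.

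1. Use the uniform law of large numbers for stationary $\alpha$-mixing sequences, over the (bracketing-entropy controlled) encoder--head class, to get $\sup_q|R_T(q)-R(q)|=o_P(1)$.
2. Combine the approximation assumption (some class element has joint risk within $o(1)$ of $R(q^\star)$) with approximate joint empirical risk minimization to conclude $R(\widehat q_T)-R(q^\star)=o_P(1)$.
3. Use the local identification from a positive conditional density around $q^\star_\tau(z)$, via Knight's identity: the excess risk $\E[\rho_\tau(\epshat-q)-\rho_\tau(\epshat-q^\star)\mid Z]$ is bounded below by $c\min(|q-q^\star|^2,|q-q^\star|)$. This turns small excess risk into $\E\min\bigl(|\widehat q_{T,\tau}(Z)-q^\star_\tau(Z)|^2,\,|\widehat q_{T,\tau}(Z)-q^\star_\tau(Z)|\bigr)=o_P(1)$, and hence $\widehat q_{T,\tau}(Z)-q^\star_\tau(Z)\to0$ in $P_Z$-probability, with $Z\sim P_Z$ independent of the fit.

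The nonnegative-increment reparameterization is harmless here: $q^\star$ is itself monotone in $\tau$, so the approximation assumption can be stated directly for the constrained class.

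\textbf{Transfer to the test point and main obstacle.} Convergence in $P_Z$-probability for an independent $Z$ transfers to $Z_{t_T}$ using stationarity ($Z_{t_T}\sim P_Z$) together with the asymptotic independence from the first step. The two residual quantiles converge, so the coverage expression converges to $1-\alpha$ in probability; it is bounded, so convergence in probability is the stated $o_P(1)$.

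I expect the main obstacle to be the dependence between the fitted map and the test point. Both the uniform law of large numbers for a neural class under mixing and the decoupling of $\widehat q_T$ from $W_{t_T}$ rest on the precise mixing-rate and complexity conditions. I would first try a blocking argument with independent blocks for the uniform law, and Berbee's coupling to replace $W_{t_T}$ by an independent copy, at total-variation cost $\beta(t_T-T)$ or an $\alpha$-mixing analogue applied to indicator functions.
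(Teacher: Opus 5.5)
Your M-estimation chain is the paper's: uniform risk convergence, then approximation plus approximate ERM giving $\mathcal R(\widehat{\mathbf q}_T)-\mathcal R(\mathbf q^\star)\xrightarrow{p}0$, then a local-density lower bound on the conditional excess loss. The gap is in carrying consistency to the test point.

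You stop at integrated consistency, i.e.\ $\widehat q_{T,\tau}(Z)-q^\star_\tau(Z)\to 0$ in probability for an independent $Z\sim P_Z$. You then invoke asymptotic independence between the fitted map and $W_{t_T}$, and that independence is not available under the hypotheses.

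\begin{itemize}
\item The theorem allows every deterministic $t_T\ge T+\ell$, in particular $t_T=T+\ell$. There the separation between training sample and test pair stays at $\ell$, and $\alpha_W(t_T-T)=\alpha_W(\ell)$ need not be small.
\item Even with a growing gap, only strong mixing is assumed. $\alpha$-mixing bounds $|\P(A\cap B)-\P(A)\P(B)|$. It does not bound $|\E g(\widehat q_T,W_{t_T})-\E g(\widehat q_T,W')|$ for a general bounded $g$ and an independent copy $W'$. That Berbee-type coupling is what $\beta$-mixing provides. The paper assumes $\beta$-mixing only for DSCP-Local, and even there it manufactures a growing gap by bounding through a prefix of the calibration sample.
\item For the same reason, writing the coverage as $F(\widehat q_{T,\alpha_h}(Z_{t_T})\mid Z_{t_T})-F(\widehat q_{T,\alpha_l}(Z_{t_T})^-\mid Z_{t_T})$ is unjustified: given $Z_{t_T}$, the test residual need not be independent of the fit.
\end{itemize}
An integrated error bound alone cannot stop the fit from being poor exactly where the dependent test context tends to fall.

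The missing idea is to upgrade to sup-norm consistency; after that, dependence between fit and test point never needs to be handled.
\begin{itemize}
\item Under Assumption~\ref{assump:model_class}, $\overline{\mathcal Q}_\tau$ is compact in $\norm{\cdot}_\infty$, as the continuous image of a compact parameter set. Also $D_\tau(q)=R_\tau(q)-R_\tau(q^\star_\tau)$ is $1$-Lipschitz in that norm.
\item If $q(z_0)\neq q^\star_\tau(z_0)$ at some $z_0\in\Zc$, continuity of $q$ and $q^\star_\tau$ gives a relative neighborhood on which $|q-q^\star_\tau|\ge c>0$. It has positive $P_Z$-mass because $\Zc=\operatorname{supp}(P_Z)$, so your lower bound gives $D_\tau(q)>0$.
\item Compactness then yields $\inf\{D_\tau(q): q\in\overline{\mathcal Q}_\tau,\ \norm{q-q^\star_\tau}_\infty\ge\varepsilon\}>0$. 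Hence $d_T=\max_{\tau}\norm{\widehat q_{T,\tau}-q^\star_\tau}_\infty\xrightarrow{p}0$.
\item Compare with the oracle event $A_T^\star=\{q^\star_{\alpha_l}(Z_{t_T})\le\epshat_{t_T+\ell}\le q^\star_{\alpha_h}(Z_{t_T})\}$. Its probability given $Z_{t_T}$ is exactly $1-\alpha$, by stationarity and continuity of the conditional CDF at the quantiles.
\item For every $\eta>0$, the fitted coverage event satisfies $\widehat A_T\triangle A_T^\star\subseteq\{d_T>\eta\}\cup\bigcup_{\tau\in\{\alpha_l,\alpha_h\}}\{|\epshat_{t_T+\ell}-q^\star_\tau(Z_{t_T})|\le\eta\}$.
\item By stationarity the last probabilities do not depend on $T$, and they vanish as $\eta\downarrow 0$ because there is no atom at $q^\star_\tau$.
\item Combining with $|\P(\widehat A_T\mid Z_{t_T})-(1-\alpha)|\le\P(\widehat A_T\triangle A_T^\star\mid Z_{t_T})$ and Markov's inequality finishes the proof.
\end{itemize}
This route also needs no upper bound on the density, which is not assumed; only $f_{\epshat}\ge c_f$ near the quantiles is. Separately, the independent-block coupling you plan for the uniform law would again need $\beta$-mixing and rates that are not assumed. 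It is unnecessary: Chebyshev's inequality gives pointwise convergence, with variance at most $B_X^2/T+8B_X^2T^{-1}\sum_{k<T}\alpha_W(k)\to 0$ by a Ces\`aro argument, where $B_X=B_{\epshat}+B_Q$. A finite $\norm{\cdot}_\infty$-net of the compact class and the $1$-Lipschitz property of $\rho_\tau$ then make it uniform.
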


\textit{Proof sketch.}
    For the quantile loss $\rho_\tau(u)=u(\tau-\ind\{u<0\})$, fixed weights $w_\tau>0$ with $\sum_{\tau\in\Tc}w_\tau=1$, and $\qv=(q_\tau)_{\tau\in\Tc}$, define the expected quantile losses
    \[
        R_\tau(q) = \E \left[\rho_\tau\{\epshat_{t+\ell}-q(Z_t)\}\right],\quad
        \Rc(\qv) = \sum_{\tau\in\Tc} w_\tau R_\tau(q_\tau).
    \]
    Let $\widehat{\Rc}_T$ be the empirical risk obtained by replacing the expectations in $R_\tau$ with averages over $\{(Z_i,\epshat_{i+\ell})\}_{i=1}^T$, and let $r_T$ be the optimization error in Assumption~\ref{assump:joint_empirical_risk_minimization}. Write $\widehat{\qv}_T = (\widehat q_{T,\tau})_{\tau\in\Tc}$, $\qv^\star = (q_\tau^\star)_{\tau\in\Tc}$, and let $\mathfrak Q_T$ be the jointly feasible model class. Comparing the training loss $\widehat{\Rc}_T$ with $\Rc$ gives
    \[
        \begin{aligned}
        0 &\le\Rc(\widehat{\qv}_T)-\Rc(\qv^\star)\\
        &\le 2\underbrace{\sup_{\qv\in\mathfrak Q_T}|\widehat{\Rc}_T(\qv)-\Rc(\qv)|}_{\text{empirical--population error}}
        \quad+\underbrace{\inf_{\qv\in\mathfrak Q_T}\Rc(\qv)-\Rc(\qv^\star)}_{\text{joint approximation error}}
        +\underbrace{r_T}_{\text{optimization error}} \xrightarrow{p} 0.
        \end{aligned}
    \]
    The first term vanishes by mixing and compactness; the remaining terms vanish by assumption. Thus the fitted quantiles achieve the same limiting expected loss as the true quantiles. The density and compactness conditions ensure that any fixed uniform quantile error entails a strictly positive loss gap, so
    \[
        \max_{\tau\in\Tc}\sup_{z\in\Zc}|\widehat q_{T,\tau}(z)-q_\tau^\star(z)| \xrightarrow{p} 0.
    \]
    Let $\widehat A_T$ and $A_T^\star$ be the coverage events for the fitted and true quantile endpoints. Stationarity and CDF continuity at the true endpoints give $\P(\widehat A_T\triangle A_T^\star)\to0$. Since $\P(A_T^\star\mid Z_{t_T})=1-\alpha$,
    \[
        \E\left|\P(\widehat A_T\mid Z_{t_T})-(1-\alpha)\right|
        \le\P(\widehat A_T\triangle A_T^\star) \longrightarrow 0,
    \]
    which implies \eqref{eq:main-quantile-coverage} (see Appendix \ref{subsec:theory-qr}).

\subsection{Asymptotic Conditional Coverage for DSCP-QFE}

DSCP-QFE learns one quantile function across levels sampled from $\nu$. As in DSCP-QR, we require accurate approximation and optimization, now for the loss averaged over these levels, under stationary $\alpha$-mixing data. The sampling distribution $\nu$ has full support on a fixed compact interval $\Ic \subset (0,1)$, so every neighborhood of a target level contributes to the objective. The full conditions are stated in Assumption \ref{assump:stationary_mixing_process} and Assumptions \ref{assump:qf-sampling}--\ref{assump:qf-optimization}.

\begin{theorem}[Conditional coverage for DSCP-QFE]\label{thm:main-qfe-coverage}
Under Assumption \ref{assump:stationary_mixing_process} and Assumptions \ref{assump:qf-sampling}--\ref{assump:qf-optimization}, the DSCP-QFE interval \eqref{eq:common-interval}, with $\alpha_l,\alpha_h \in \Ic$, satisfies~\eqref{eq:main-quantile-coverage}.
\end{theorem}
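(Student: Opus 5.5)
The plan is to reduce DSCP-QFE to the structure of the proof of Theorem~\ref{thm:main-qr-coverage}, replacing the finite sum over $\Tc$ by an integral against $\nu$.

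\textbf{Population risk and consistency.} For a candidate quantile function $q(z,\tau)$, define the level-averaged risk
\[
\Rc_\nu(q)=\int_{\Ic} R_\tau\bigl(q(\cdot,\tau)\bigr)\,\nu(d\tau),
\]
and let $\widehat\Rc_{T}$ be its empirical counterpart over $\{(Z_i,\epshat_{i+\ell})\}_{i=1}^T$. The empirical version is computed either with exact integration in $\tau$ or with levels sampled from $\nu$. In the sampled case, the Monte Carlo error is absorbed into the empirical--population term. The true function $q^\star(z,\tau)=q^\star_\tau(z)$ minimizes $R_\tau$ pointwise in $\tau$, so it minimizes $\Rc_\nu$.

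The same three-term decomposition as in the DSCP-QR sketch then gives $\Rc_\nu(\widehat q_T)-\Rc_\nu(q^\star)\xrightarrow{p}0$, where $\widehat q_T(z,\tau)=g_{\theta''}(\operatorname{Enc}_{\theta'}(z),\tau)$. The three terms are controlled as follows.
\begin{itemize}
\item The uniform deviation $\sup_q|\widehat\Rc_T-\Rc_\nu|$ vanishes by stationarity and $\alpha$-mixing (Assumption~\ref{assump:stationary_mixing_process}). The argument is a uniform law of large numbers over the model class, which is compact or has bounded entropy, with $\tau$ treated as an extra coordinate. The loss $\rho_\tau(u)$ is Lipschitz in both $u$ and $\tau$, which should make this extension routine.
\item The approximation error vanishes by the relevant assumption in Assumptions~\ref{assump:qf-sampling}--\ref{assump:qf-optimization}.
\item The optimization error $r_T$ vanishes by the same set of assumptions.
\end{itemize}

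\textbf{From loss gap to uniform quantile error (main obstacle).} The key step is to show that a vanishing excess risk forces
\[
\sup_{\tau\in\Ic}\sup_{z\in\Zc}|\widehat q_T(z,\tau)-q^\star_\tau(z)|\xrightarrow{p}0,
\]
or at least this convergence at $\tau\in\{\alpha_l,\alpha_h\}$. The difficulty is that $\Rc_\nu$ averages over $\tau$. A large error at a single level therefore does not by itself create a positive loss gap; it must persist over a $\nu$-positive set of levels. I would handle this in four steps.
\begin{enumerate}
\item Use the conditional density lower bound near the true quantiles (Knight's identity). This gives
\[
R_\tau(q)-R_\tau(q^\star_\tau)\ge c\,\E\bigl[\min\{(q-q^\star_\tau)^2,\,|q-q^\star_\tau|\}\bigr].
\]
\item Use equicontinuity of the model class in $(z,\tau)$, from compactness and continuity, together with continuity of $q^\star$. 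Suppose $|\widehat q_T-q^\star|>\eta$ at some point $(z_0,\tau_0)$. Then the error exceeds $\eta/2$ on a neighborhood of that point.
\item This neighborhood has positive $P_Z\otimes\nu$ mass, because $\Zc$ is the support of $P_Z$ and $\nu$ has full support on $\Ic$.
\item Hence the loss gap is bounded below by a positive constant depending only on $\eta$.
\end{enumerate}
A compactness or contradiction argument then yields the uniform convergence. The partial monotonicity in $\tau$ can serve as a fallback if equicontinuity in $\tau$ is unavailable. In that case, $L^1(\nu)$ convergence combined with monotonicity of both $\widehat q_T(z,\cdot)$ and $q^\star(z,\cdot)$, and continuity of $q^\star$ in $\tau$, upgrades to pointwise convergence at interior levels. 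For this reason $\alpha_l,\alpha_h$ should lie in $\Ic$, and ideally in its interior or with one-sided control at the endpoints.

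\textbf{Coverage.} Finally, I would repeat the last step of the DSCP-QR argument verbatim. Let $\widehat A_T$ be the event that $\epshat_{t_T+\ell}\in[\widehat q_{T,\alpha_l}(Z_{t_T}),\widehat q_{T,\alpha_h}(Z_{t_T})]$, and let $A_T^\star$ be the corresponding event with the true quantiles. The symmetric difference $\widehat A_T\triangle A_T^\star$ is contained in
\[
\{|\epshat_{t_T+\ell}-q^\star_{\alpha_j}(Z_{t_T})|\le\delta_T\}\quad\text{for some } j\in\{l,h\},
\]
where $\delta_T$ is the uniform error. By stationarity and continuity of the conditional CDF at the true quantiles, its probability tends to $0$. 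It remains to handle the dependence between $\widehat q_T$ and the test point, since $t_T\ge T+\ell$ may be close to the training window. Because the bound uses only the uniform error $\delta_T\to_p 0$, one can intersect with $\{\delta_T\le\eta\}$ and let $\eta\downarrow0$. Then $\P(A_T^\star\mid Z_{t_T})=1-\alpha$ together with $\E|\P(\widehat A_T\mid Z_{t_T})-\P(A_T^\star\mid Z_{t_T})|\le\P(\widehat A_T\triangle A_T^\star)$ gives \eqref{eq:main-quantile-coverage}.
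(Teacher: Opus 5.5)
Your proposal is correct and follows essentially the same route as the paper. The steps match in order: uniform convergence of the $\nu$-averaged risk under mixing, the three-term excess-risk decomposition, identification via the local density lower bound with joint continuity and full support of $P_Z\otimes\nu$ on $\Zc\times\Ic$, compactness to get a uniform separation, and finally the DSCP-QR coverage-event comparison.

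The differences are cosmetic. The paper gets the separation from continuity of the excess risk on the compact set $\{q\in\overline{\mathfrak Q}:\|q-q^\star\|_\infty\ge\varepsilon\}$ rather than from an explicit equicontinuity and uniform-mass bound. It handles sampled levels by treating $(W_t,U_t)$ as a jointly stationary, strongly mixing process, so your monotonicity fallback is never needed.
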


\textit{Proof sketch.}
    Write $\widehat q_T(z,\tau)=\widehat q_{T,\tau}(z)$, $q^\star(z,\tau)=q_\tau^\star(z)$ and $\Rc(q)=\int_{\Ic}R_\tau(q(\cdot,\tau))\nu(d\tau)$, with $R_\tau$ defined above. The same decomposition as DSCP-QR gives
    \[
        \begin{aligned}
        0 &\le \Rc(\widehat q_T)-\Rc(q^\star)\\
        &= \int_{\Ic}\underbrace{\big[R_\tau(\widehat q_T(\cdot,\tau))-R_\tau(q_\tau^\star)\big]}_{\text{additional expected loss at level }\tau \ge 0}\nu(d\tau)
        \xrightarrow{p} 0.
        \end{aligned}
    \]
    The local density condition identifies the true quantile at each level. Joint continuity and full support ensure that an error at any $(z,\tau)$ contributes positive loss on a neighborhood of that pair. Compactness therefore converts the displayed loss convergence into
    \[
        \sup_{z\in\Zc,\tau\in\Ic}|\widehat q_T(z,\tau)-q^\star(z,\tau)| \xrightarrow{p} 0.
    \]
    At the two fixed endpoint levels, the coverage-event comparison from DSCP-QR yields \eqref{eq:main-quantile-coverage} (see Appendix \ref{subsec:theory-qf}).

\subsection{Asymptotic Conditional Coverage for DSCP-Local}
\label{subsec:theory_LCP}

For DSCP-Local, the encoder is fixed during calibration and inference, and the weighted residual CDF targets $\FH(u\mid h)=\P(\epshat_{t+\ell}\le u\mid H_t=h)$. The primary coverage guarantee is therefore conditional on the representation $H_t$.

We assume $V_t = (H_t,\epshat_{t+\ell})$ is strictly stationary and $\beta$-mixing. Local estimation requires continuous conditional residual CDFs that vary little among nearby representations, and sufficient probability mass in these neighborhoods. With a bounded, compactly supported kernel, the bandwidth shrinks to eliminate smoothing bias while satisfying the dependence-adjusted variance condition. The precise conditions are stated in Assumptions \ref{ass:H1}--\ref{ass:H4}.

\begin{theorem}[Representation-conditional consistency and coverage]\label{thm:main-local-coverage}
Under Assumptions \ref{ass:H1}--\ref{ass:H4}, the DSCP-Local estimator with a fixed encoder satisfies
\begin{align}
\sup_u|\FhatH(u\mid H_{t_T})-\FH(u\mid H_{t_T})|&\xrightarrow{p}0,\label{eq:main-H-cdf}\\
\P\left(Y_{t_T+\ell}\in\widehat C_T^{\alpha,\ell}(Z_{t_T},\widehat Y_{t_T+\ell})\mid H_{t_T}\right)
&= 1-\alpha+o_P(1).\label{eq:main-H-coverage}
\end{align}
\end{theorem}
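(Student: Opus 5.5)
The plan is to prove \eqref{eq:main-H-cdf} first, by the usual bias--variance split for a Nadaraya--Watson-type estimator, evaluated at a fixed point $h$. Then I transfer it to the random query $H_{t_T}$. Finally, I convert CDF consistency into coverage via quantile inversion.

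\emph{Fixed-point consistency.} For a fixed $h$ in the support of $P_H$, write
\[
\FhatH(u\mid h)-\FH(u\mid h)=\frac{\frac1{T}\sum_i K_{T,i}(h)\bigl[\ind\{\epshat_{i+\ell}\le u\}-\FH(u\mid H_i)\bigr]}{\frac1T S_T(h)}+\sum_i W_{T,i}(h)\bigl[\FH(u\mid H_i)-\FH(u\mid h)\bigr].
\]
\textbf{Bias term.} Because $K$ is compactly supported, only indices with $d_H(H_i,h)\le c\,b_T$ carry weight. The assumed local continuity of $\FH$ in $h$ (the modulus condition in Assumptions \ref{ass:H1}--\ref{ass:H4}) then bounds this term by $\omega(c\,b_T)\to0$, uniformly in $u$.

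\textbf{Denominator.} The small-ball mass condition gives $\E K_{T,1}(h)\ge \kappa\,\P(d_H(H_1,h)\le c' b_T)=:\kappa\,\pi_T(h)$. A $\beta$-mixing variance bound, using Berbee coupling or a covariance inequality with blocking, shows that $S_T(h)/(T\,\E K_{T,1}(h))\to1$ in probability. This requires the dependence-adjusted condition $T\pi_T(h)/(\text{block length})\to\infty$ from Assumption \ref{ass:H4}.

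\textbf{Centered numerator.} This term is a sum of bounded martingale-like terms. It has conditional mean zero given $H_i$ by stationarity, because $\FH(u\mid H_i)=\E[\ind\{\epshat_{i+\ell}\le u\}\mid H_i]$. However, it is not an exact martingale, so I again use blocking under $\beta$-mixing to bound its variance by $O(T\pi_T(h)\cdot m_T)$. Here $m_T$ is the block length.

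\textbf{Uniformity in $u$.} I first get pointwise convergence over a finite grid of $u$-values at which $\FH(\cdot\mid h)$ has increments at most $\epsilon$. Continuity of $\FH(\cdot\mid h)$ makes such a grid possible. Monotonicity of both CDFs then upgrades grid convergence to the supremum over $u$, as in the Glivenko--Cantelli argument.

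\emph{Random query point.} $H_{t_T}$ depends on data and is weakly dependent on the calibration sample. The gap $t_T-T\ge\ell$ may be small, so I handle this step with $\beta$-mixing coupling. Take a gap $g_T\to\infty$ with $\beta(g_T)\to0$. Discard the last $g_T$ calibration points, which is negligible since $g_T/(T\pi_T)\to0$. Berbee's lemma then replaces $H_{t_T}$ by an independent copy $\tilde H$ with law $P_H$, at total-variation cost $\beta(g_T)$. Conditional on $\tilde H=h$, the fixed-point bound applies.

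To make the conclusion hold in probability over $h\sim P_H$, I need the rates to be uniform enough. I would integrate the fixed-$h$ error bound against $P_H$ and apply dominated convergence, since the error is bounded by $1$. This requires only pointwise convergence for $P_H$-a.e.\ $h$. I expect this coupling step, together with the variance control under $\beta$-mixing with a random center, to be the main obstacle; it needs care with indices near $T$.

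\emph{Coverage.} Write $F(\cdot)=\FH(\cdot\mid H_{t_T})$. Continuity of $F$ and uniform convergence $\sup_u|\widehat F-F|=o_P(1)$ give
\[
F\bigl(\widehat q^H_{T,\tau}(H_{t_T})\bigr)=\tau+o_P(1)\quad\text{for }\tau\in\{\alpha_l,\alpha_h\}.
\]
This follows from the inversion sandwich $\widehat F(\widehat q)\ge\tau$ and $\widehat F(\widehat q-)\le\tau$, together with the fact that the weights have mass $o_P(1)$ on any single residual value. The event $Y_{t_T+\ell}\in\widehat C$ is the event $\widehat q_{\alpha_l}\le\epshat_{t_T+\ell}\le\widehat q_{\alpha_h}$.

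The coverage probability conditional on $H_{t_T}$ is not exactly $F(\widehat q_{\alpha_h})-F(\widehat q_{\alpha_l})$, because $\widehat q$ also depends on calibration data that are correlated with $\epshat_{t_T+\ell}$. The same decoupling handles this. Under the coupled independent copy, the conditional probability equals $F(\widehat q_{\alpha_h})-F(\widehat q_{\alpha_l}^-)$, which by continuity of $F$ equals $\alpha_h-\alpha_l+o_P(1)=1-\alpha+o_P(1)$. The coupling error is $\beta(g_T)\to0$ in $L^1$, which yields \eqref{eq:main-H-coverage}.
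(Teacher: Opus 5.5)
Your argument for \eqref{eq:main-H-cdf} is essentially the paper's. It uses Chebyshev at a fixed $h$ for the kernel sum and the weighted indicator sum, bounds the bias by $\omega_T(h)$, and upgrades a finite grid in $u$ to the supremum by monotonicity. It then drops the last $g_T$ calibration pairs (the paper takes $g_T=\lfloor\sqrt{Tp_T}\rfloor$), integrates the bounded prefix error against $P_H$ by dominated convergence, and pays a $\beta_V$ coupling cost to replace $H_{t_T}$ by an independent draw. No blocking is needed for the variance. The bound $|\Cov(X_i,X_{i+k})|\le C\min\{p_T,\beta_V(k)\}$ gives $\Var(\sum_i X_i)\le CT\Lambda_T$ directly, and this is exactly what Assumption~\ref{ass:H4} controls.

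The coverage step, however, has a gap as written. You assert that under the coupled copy $(\tilde H,\tilde\varepsilon)$ the conditional coverage equals $F(\widehat q_{\alpha_h})-F(\widehat q_{\alpha_l}-)$. That requires the quantile estimator to be independent of the coupled test pair. Berbee's coupling makes $(\tilde H,\tilde\varepsilon)$ independent only of the prefix $(V_1,\dots,V_{T-g_T})$. But $\widehat q^H_{T,\tau}$ uses all $T$ pairs, and the last $g_T$ of them stay dependent on the test pair. Coupling against the whole sample instead costs $\beta_V(t_T-T)$, which need not vanish when $t_T=T+\ell$.

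The missing idea is that no second decoupling is needed. Let $D_T=\sup_u|\FhatH(u\mid H_{t_T})-\FH(u\mid H_{t_T})|$. On $\{D_T\le\eta\}$, the definition of the generalized inverse gives the deterministic bracket $q^{H,\star}_{\tau-\eta}(H_{t_T})\le\widehat q^H_{T,\tau}(H_{t_T})\le q^{H,\star}_{\tau+\eta}(H_{t_T})$ for $\tau\in\{\alpha_l,\alpha_h\}$. So off $\{D_T>\eta\}$, the coverage event $A_T=\{\widehat q^H_{T,\alpha_l}(H_{t_T})\le\epshat_{t_T+\ell}\le\widehat q^H_{T,\alpha_h}(H_{t_T})\}$ is sandwiched between two oracle events. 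These oracle events involve only $(H_{t_T},\epshat_{t_T+\ell})$. By stationarity and continuity of $\FH$, their conditional probabilities given $H_{t_T}$ are exactly $1-\alpha\mp2\eta$, however the calibration data depend on the test residual. Hence $|\P(A_T\mid H_{t_T})-(1-\alpha)|\le2\eta+\P(D_T>\eta\mid H_{t_T})$. Taking expectations, applying Markov's inequality, and using \eqref{eq:main-H-cdf} finishes the proof. All the dependence is absorbed into $\P(D_T>\eta)$, which your CDF step already controls. Your own observation $|F(\widehat q_\tau)-\tau|\le D_T$ essentially yields this bracket. Use it as an event inclusion, rather than plugging the data-dependent $\widehat q$ into a conditional probability.
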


\textit{Proof sketch.}
    Local mass, CDF regularity, and the bandwidth condition yield uniform CDF consistency for $P_H$-almost every fixed representation. A $\beta$-mixing coupling argument extends consistency to the dependent test representation $H_{t_T}$. The estimated quantiles are then bracketed by oracle quantiles at neighboring probability levels, which gives \eqref{eq:main-H-coverage} (see Appendix \ref{subsubsec:H-conditional}).

Extension to context-conditional coverage requires residual sufficiency: the conditional residual distribution given $Z_t$ depends on $Z_t$ only through $H_t$. This condition permits dimension reduction while preserving the conditional distribution relevant to coverage. Detailed discussion can be found in Appendix \ref{subsubsec:Z-conditional}.

\begin{corollary}[Context-conditional coverage for DSCP-Local]\label{cor:main-Z-coverage}
Under the conditions of Theorem~\ref{thm:main-local-coverage}, additionally suppose $W_t$ is strictly stationary and, for every $u$ and $P_Z$-almost every $z$,
\begin{equation}\label{eq:main-residual-sufficiency}
\P(\epshat_{t+\ell}\le u\mid Z_t=z)=\FH(u\mid\operatorname{Enc}_{\theta'}(z)).
\end{equation}
Then the DSCP-Local interval also satisfies \eqref{eq:main-quantile-coverage}.
\end{corollary}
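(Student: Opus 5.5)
The plan is to reduce the context-conditional statement to the representation-conditional one, Theorem~\ref{thm:main-local-coverage}, using residual sufficiency \eqref{eq:main-residual-sufficiency}.

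\textbf{Step 1: rewrite the coverage event.} Write $\widehat q_{T,\tau}(z)=\widehat q^H_{T,\tau}(\operatorname{Enc}_{\theta'}(z))$, and note $\widehat Y_{t+\ell}=\widehat f(Z'_t)$. Then the coverage event is
\[
\widehat A_T=\{\widehat q^H_{T,\alpha_l}(H_{t_T})\le \epshat_{t_T+\ell}\le \widehat q^H_{T,\alpha_h}(H_{t_T})\}.
\]
The endpoints are random through the calibration sample. So I will compare $\widehat A_T$ with the oracle event
\[
A_T^\star=\{q^{H}_{\alpha_l}(H_{t_T})\le\epshat_{t_T+\ell}\le q^H_{\alpha_h}(H_{t_T})\},
\]
where $q^H_\tau(h)$ is the $\tau$-quantile of $\FH(\cdot\mid h)$. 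The proof of Theorem~\ref{thm:main-local-coverage} brackets the estimated quantiles between oracle quantiles at neighboring levels $\tau\pm\delta$, using \eqref{eq:main-H-cdf}. This gives
\[
\P(\widehat A_T\triangle A_T^\star)\le \P\bigl(\epshat_{t_T+\ell}\in \bigcup_{\tau\in\{\alpha_l,\alpha_h\}}[q^H_{\tau-\delta}(H_{t_T}),q^H_{\tau+\delta}(H_{t_T})]\bigr)+o(1).
\]
Conditional on $H_{t_T}$, the first term is at most $4\delta$ plus a vanishing term. Letting $\delta\downarrow0$ yields $\P(\widehat A_T\triangle A_T^\star)\to0$. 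This step can be taken directly from the appendix proof of Theorem~\ref{thm:main-local-coverage}.

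\textbf{Step 2: oracle coverage given $Z_{t_T}$.} Stationarity of $W_t$ makes the conditional law of $\epshat_{t_T+\ell}$ given $Z_{t_T}=z$ the same as at any $t$. By \eqref{eq:main-residual-sufficiency} that law has CDF $\FH(\cdot\mid \operatorname{Enc}_{\theta'}(z))$, for $P_Z$-a.e.\ $z$. The oracle endpoints are deterministic functions of $Z_{t_T}$, and $\FH$ is continuous (Assumptions~\ref{ass:H1}--\ref{ass:H4}). Hence
\[
\P(A_T^\star\mid Z_{t_T})=\FH(q^H_{\alpha_h}\mid H_{t_T})-\FH(q^H_{\alpha_l}\mid H_{t_T})=\alpha_h-\alpha_l=1-\alpha \quad\text{a.s.}
\]

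\textbf{Step 3: conclude.} By Jensen's inequality for conditional expectation,
\[
\E\bigl|\P(\widehat A_T\mid Z_{t_T})-(1-\alpha)\bigr|=\E\bigl|\E[\ind_{\widehat A_T}-\ind_{A_T^\star}\mid Z_{t_T}]\bigr|\le\P(\widehat A_T\triangle A_T^\star)\to0.
\]
Markov's inequality then gives \eqref{eq:main-quantile-coverage}.

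\textbf{Main obstacle.} The difficulty is that the calibration sample is dependent on $(Z_{t_T},\epshat_{t_T+\ell})$. Step 2 uses only oracle endpoints, so it is unaffected. Step 1 requires that the bracketing and the $\beta$-mixing coupling from the proof of Theorem~\ref{thm:main-local-coverage} control an \emph{unconditional} probability of the symmetric difference. I would try to reuse that bound verbatim, since it is stated in terms of $H_{t_T}$, which is measurable with respect to $Z_{t_T}$. Conditioning on the finer $\sigma$-field $\sigma(Z_{t_T})$ then costs nothing in the $L^1$ bound above.
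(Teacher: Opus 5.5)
Your proposal is correct and is essentially the paper's argument. Both rest on the bracketing $q^{H,\star}_{\tau-\eta}\le\widehat q^H_{T,\tau}\le q^{H,\star}_{\tau+\eta}$ on $\{D_T\le\eta\}$, on $D_T\xrightarrow{p}0$, on residual sufficiency to evaluate oracle coverage given $Z_{t_T}$, and on Markov's inequality; the only difference is that you compare $\widehat A_T$ with the exact-level oracle event through a symmetric difference (as in the DSCP-QR proof), while the paper sandwiches it between oracle events at levels $\alpha_l\pm\eta$ and $\alpha_h\mp\eta$. The one point you gloss over, which the paper makes explicit, is that \eqref{eq:main-residual-sufficiency} holds for each fixed $u$ only outside a $u$-dependent null set, so before evaluating it at the random endpoints $q^{H,\star}_{\alpha_l}(H_{t_T})$ and $q^{H,\star}_{\alpha_h}(H_{t_T})$ you must upgrade it, via rational $u$ and right-continuity, to hold for all $u$ simultaneously on a single full-measure set of contexts.
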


\textit{Proof sketch.}
    Let $D_T = \sup_u|\FhatH(u \mid H_{t_T}) - \FH(u \mid H_{t_T})|$ and let $\widehat A_T$ denote the DSCP-Local coverage event. By \eqref{eq:main-residual-sufficiency} and CDF continuity, an oracle interval between levels $a<b$ has probability $b-a$ conditional on $Z_{t_T}$. On $\{D_T\le\eta\}$, each fitted endpoint at level $\tau$ lies between the oracle quantiles at levels $\tau-\eta$ and $\tau+\eta$. Thus, for any sufficiently small fixed $\eta>0$,
    \[
        \begin{aligned}
        \E|\P(\widehat A_T\mid Z_{t_T})-(1-\alpha)|
        &\le 2\eta+\P(D_T>\eta).
        \end{aligned}
    \]
    Since $D_T \xrightarrow{p} 0$ by \eqref{eq:main-H-cdf}, taking $T \to \infty$ and then $\eta \downarrow 0$ gives convergence in mean, hence in probability (see Appendix \ref{subsubsec:Z-conditional}).

\begin{table}[!h]
\centering
\caption{Coverage gap ($\Delta$Cov), prediction interval width (PI-W), and Winkler score (WS), averaged across sequences within each dataset, for all methods. LR, LSTM, and Chronos indicate the base predictor. The target coverage is $1-\alpha=0.9$. Results with undercoverage are color-coded according to the magnitude of the coverage gap: results with
$-0.5\alpha \leq \Delta\mathrm{Cov} < -0.25\alpha$ are shown in {\color{mildUC}red}, while those with
$\Delta\mathrm{Cov} < -0.5\alpha$ are {\color{severeUC}greyed out}. The best three results except the undercoveraged results based on Winkler score are \underline{underlined}.}
\label{table:main_results_1}

\begin{adjustbox}{max width=\linewidth}
\begin{tabular}{lccccccccc}
\toprule
& \multicolumn{9}{c}{Air} \\
\cmidrule(lr){2-10}
Method
& \multicolumn{3}{c}{LR}
& \multicolumn{3}{c}{LSTM}
& \multicolumn{3}{c}{Chronos} \\
\cmidrule(lr){2-4}\cmidrule(lr){5-7}\cmidrule(lr){8-10}
& $\Delta$Cov & PI-W & WS
& $\Delta$Cov & PI-W & WS
& $\Delta$Cov & PI-W & WS \\

\midrule
Split CP    & $+0.028_{\scriptscriptstyle \pm .011}$ & $100.5_{\scriptscriptstyle \pm 11.1}$ & $150.0_{\scriptscriptstyle \pm 14.6}$ & $-0.005_{\scriptscriptstyle \pm .014}$ & $73.9_{\scriptscriptstyle \pm 8.21}$ & $137.2_{\scriptscriptstyle \pm 15.1}$ & $+0.011_{\scriptscriptstyle \pm .010}$ & $187.7_{\scriptscriptstyle \pm 17.2}$ & $315.6_{\scriptscriptstyle \pm 21.8}$ \\
NexCP       & $-0.001_{\scriptscriptstyle \pm .001}$ & $86.8_{\scriptscriptstyle \pm 10.6}$ & $141.9_{\scriptscriptstyle \pm 15.4}$ & $-0.004_{\scriptscriptstyle \pm .001}$ & $76.1_{\scriptscriptstyle \pm 9.10}$ & $133.6_{\scriptscriptstyle \pm 14.9}$ & $-0.005_{\scriptscriptstyle \pm .001}$ & $180.2_{\scriptscriptstyle \pm 14.2}$ & $286.0_{\scriptscriptstyle \pm 18.9}$ \\
HopCPT      & $\textcolor{mildUC}{-0.027_{\scriptscriptstyle \pm .032}}$ & $\textcolor{mildUC}{79.1_{\scriptscriptstyle \pm 14.9}}$ & 
$\textcolor{mildUC}{148.5_{\scriptscriptstyle \pm 22.7}}$ & 
$\textcolor{mildUC}{-0.030_{\scriptscriptstyle \pm .028}}$ & 
$\textcolor{mildUC}{67.1_{\scriptscriptstyle \pm 15.7}}$ & 
$\textcolor{mildUC}{134.1_{\scriptscriptstyle \pm 15.6}}$ & 
$-0.010_{\scriptscriptstyle \pm .038}$ & 
$221.0_{\scriptscriptstyle \pm 84.5}$ & 
$347.3_{\scriptscriptstyle \pm 101.2}$ \\

SPCI
& $\textcolor{severeUC}{-0.110_{\scriptscriptstyle \pm .006}}$
& $\textcolor{severeUC}{75.9_{\scriptscriptstyle \pm 8.86}}$
& $\textcolor{severeUC}{160.7_{\scriptscriptstyle \pm 16.9}}$
& $\textcolor{severeUC}{-0.110_{\scriptscriptstyle \pm .006}}$
& $\textcolor{severeUC}{64.2_{\scriptscriptstyle \pm 7.89}}$
& $\textcolor{severeUC}{140.5_{\scriptscriptstyle \pm 16.5}}$
& $\textcolor{severeUC}{-0.088_{\scriptscriptstyle \pm .009}}$
& $\textcolor{severeUC}{81.9_{\scriptscriptstyle \pm 7.31}}$
& $\textcolor{severeUC}{160.5_{\scriptscriptstyle \pm 13.3}}$ \\

KOWCPI      & $+0.012_{\scriptscriptstyle \pm .009}$ & $92.8_{\scriptscriptstyle \pm 10.1}$ & $150.2_{\scriptscriptstyle \pm 14.6}$ & $-0.003_{\scriptscriptstyle \pm .010}$ & $73.3_{\scriptscriptstyle \pm 7.81}$ & $137.1_{\scriptscriptstyle \pm 14.5}$ & $+0.005_{\scriptscriptstyle \pm .009}$ & $181.2_{\scriptscriptstyle \pm 15.5}$ & $316.0_{\scriptscriptstyle \pm 22.0}$ \\
ResCP       & $-0.012_{\scriptscriptstyle \pm .005}$ & $80.6_{\scriptscriptstyle \pm 10.3}$ & $\underline{135.3}_{\scriptscriptstyle \pm 14.3}$ & $-0.019_{\scriptscriptstyle \pm .006}$ & $67.6_{\scriptscriptstyle \pm 9.86}$ & $\underline{119.8}_{\scriptscriptstyle \pm 13.4}$ & $+0.033_{\scriptscriptstyle \pm .003}$ & $126.6_{\scriptscriptstyle \pm 10.4}$ & $171.4_{\scriptscriptstyle \pm 12.6}$ \\

DistMatch   & $-0.004_{\scriptscriptstyle \pm .006}$ & $83.8_{\scriptscriptstyle \pm 9.26}$ & $139.1_{\scriptscriptstyle \pm 14.9}$ & $-0.005_{\scriptscriptstyle \pm .005}$ & $71.3_{\scriptscriptstyle \pm 7.68}$ & $125.3_{\scriptscriptstyle \pm 13.4}$ & $+0.039_{\scriptscriptstyle \pm .004}$ & $120.5_{\scriptscriptstyle \pm 10.3}$ & $\underline{169.0}_{\scriptscriptstyle \pm 13.4}$ \\
  
DSCP-QR     & $+0.012_{\scriptscriptstyle \pm .016}$ & $85.3_{\scriptscriptstyle \pm 10.5}$ & $\underline{121.3}_{\scriptscriptstyle \pm 12.5}$ & $-0.021_{\scriptscriptstyle \pm .017}$ & $61.1_{\scriptscriptstyle \pm 9.32}$ & $\underline{103.4}_{\scriptscriptstyle \pm 12.1}$ & $+0.014_{\scriptscriptstyle \pm .014}$ & $87.1_{\scriptscriptstyle \pm 10.7}$ & $\underline{125.5}_{\scriptscriptstyle \pm 10.6}$ \\

DSCP-QFE
& $\textcolor{severeUC}{-0.064_{\scriptscriptstyle \pm .015}}$
& $\textcolor{severeUC}{61.3_{\scriptscriptstyle \pm 6.40}}$
& $\textcolor{severeUC}{132.7_{\scriptscriptstyle \pm 15.6}}$
& $\textcolor{severeUC}{-0.126_{\scriptscriptstyle \pm .026}}$
& $\textcolor{severeUC}{37.9_{\scriptscriptstyle \pm 4.73}}$
& $\textcolor{severeUC}{124.4_{\scriptscriptstyle \pm 16.3}}$
& $\textcolor{severeUC}{-0.091_{\scriptscriptstyle \pm .032}}$
& $\textcolor{severeUC}{52.5_{\scriptscriptstyle \pm 7.39}}$
& $\textcolor{severeUC}{170.1_{\scriptscriptstyle \pm 13.1}}$ \\

DSCP-Local & $-0.015_{\scriptscriptstyle \pm .003}$ & $81.0_{\scriptscriptstyle \pm 10.0}$ & $\underline{133.2}_{\scriptscriptstyle \pm 14.4}$ & $-0.013_{\scriptscriptstyle \pm .005}$ & $67.8_{\scriptscriptstyle \pm 8.67}$ & $\underline{116.0}_{\scriptscriptstyle \pm 13.3}$ & $+0.027_{\scriptscriptstyle \pm .006}$ & $117.4_{\scriptscriptstyle \pm 9.14}$ & $\underline{160.7}_{\scriptscriptstyle \pm 11.4}$ \\

\midrule
& \multicolumn{9}{c}{Solar} \\
\cmidrule(lr){2-10}
Method
& \multicolumn{3}{c}{LR}
& \multicolumn{3}{c}{LSTM}
& \multicolumn{3}{c}{Chronos} \\
\cmidrule(lr){2-4}\cmidrule(lr){5-7}\cmidrule(lr){8-10}
& $\Delta$Cov & PI-W & WS
& $\Delta$Cov & PI-W & WS
& $\Delta$Cov & PI-W & WS \\

\midrule
Split CP    & $-0.001_{\scriptscriptstyle \pm .024}$ & $115.4_{\scriptscriptstyle \pm 85.3}$ & $202.6_{\scriptscriptstyle \pm 81.1}$ & $-0.002_{\scriptscriptstyle \pm .020}$ & $88.1_{\scriptscriptstyle \pm 30.2}$ & $176.3_{\scriptscriptstyle \pm 33.0}$ & $+0.008_{\scriptscriptstyle \pm .010}$ & $165.9_{\scriptscriptstyle \pm 23.0}$ & $270.4_{\scriptscriptstyle \pm 26.9}$ \\
NexCP       & $+0.001_{\scriptscriptstyle \pm .002}$ & $111.8_{\scriptscriptstyle \pm 22.6}$ & $181.0_{\scriptscriptstyle \pm 32.0}$ & $-0.002_{\scriptscriptstyle \pm .002}$ & $89.6_{\scriptscriptstyle \pm 21.5}$ & $161.5_{\scriptscriptstyle \pm 29.3}$ & $-0.001_{\scriptscriptstyle \pm .001}$ & $158.2_{\scriptscriptstyle \pm 24.7}$ & $249.5_{\scriptscriptstyle \pm 30.4}$ \\
HopCPT      & $+0.001_{\scriptscriptstyle \pm .031}$ & $106.1_{\scriptscriptstyle \pm 25,6}$ & $\underline{148.4}_{\scriptscriptstyle \pm 37.7}$ & $+0.007_{\scriptscriptstyle \pm .035}$ & $76.7_{\scriptscriptstyle \pm 24.0}$ & $123.6_{\scriptscriptstyle \pm 40.5}$ & $+0.004_{\scriptscriptstyle \pm .035}$ & $114.1_{\scriptscriptstyle \pm 25.9}$ & $179.2_{\scriptscriptstyle \pm 49.3}$ \\

SPCI
& $\textcolor{severeUC}{-0.106_{\scriptscriptstyle \pm .028}}$
& $\textcolor{severeUC}{77.5_{\scriptscriptstyle \pm 13.2}}$
& $\textcolor{severeUC}{155.3_{\scriptscriptstyle \pm 24.1}}$
& $\textcolor{severeUC}{-0.087_{\scriptscriptstyle \pm .067}}$
& $\textcolor{severeUC}{56.0_{\scriptscriptstyle \pm 12.8}}$
& $\textcolor{severeUC}{120.7_{\scriptscriptstyle \pm 23.6}}$
& $\textcolor{severeUC}{-0.077_{\scriptscriptstyle \pm .016}}$
& $\textcolor{severeUC}{60.0_{\scriptscriptstyle \pm 8.97}}$
& $\textcolor{severeUC}{119.8_{\scriptscriptstyle \pm 16.5}}$ \\

KOWCPI      & $+0.004_{\scriptscriptstyle \pm .010}$ & $114.9_{\scriptscriptstyle \pm 48.3}$ & $196.3_{\scriptscriptstyle \pm 59.2}$ & $+0.005_{\scriptscriptstyle \pm .011}$ & $91.2_{\scriptscriptstyle \pm 22.5}$ & $172.6_{\scriptscriptstyle \pm 29.9}$ & $-0.010_{\scriptscriptstyle \pm .014}$ & $154.4_{\scriptscriptstyle \pm 28.5}$ & $263.9_{\scriptscriptstyle \pm 30.4}$ \\
ResCP       & $\textcolor{mildUC}{-0.026_{\scriptscriptstyle \pm .007}}$ & $\textcolor{mildUC}{85.4_{\scriptscriptstyle \pm 19.0}}$ & 
$\textcolor{mildUC}{158.2_{\scriptscriptstyle \pm 31.4}}$ & $+0.008_{\scriptscriptstyle \pm .005}$ & $67.0_{\scriptscriptstyle \pm 14.3}$ & $\underline{107.4}_{\scriptscriptstyle \pm 21.5}$ & $+0.011_{\scriptscriptstyle \pm .004}$ & $86.5_{\scriptscriptstyle \pm 13.7}$ & $\underline{130.3}_{\scriptscriptstyle \pm 17.0}$ \\
DistMatch   & $+0.007_{\scriptscriptstyle \pm .008}$ & $100.3_{\scriptscriptstyle \pm 20.1}$ & $168.6_{\scriptscriptstyle \pm 31.4}$ & $+0.014_{\scriptscriptstyle \pm .009}$ & $81.1_{\scriptscriptstyle \pm 17.3}$ & $145.2_{\scriptscriptstyle \pm 25.7}$ & $+0.024_{\scriptscriptstyle \pm .006}$ & $98.7_{\scriptscriptstyle \pm 15.9}$ & $153.2_{\scriptscriptstyle \pm 20.9}$ \\

DSCP-QR   & $-0.019_{\scriptscriptstyle \pm .107}$ & $81.9_{\scriptscriptstyle \pm 14.6}$ & $\underline{121.6}_{\scriptscriptstyle \pm 82.9}$ & $-0.006_{\scriptscriptstyle \pm .120}$ & $55.4_{\scriptscriptstyle \pm 10.8}$ & $\underline{84.0}_{\scriptscriptstyle \pm 84.0}$ & $+0.007_{\scriptscriptstyle \pm .020}$ & $62.9_{\scriptscriptstyle \pm 9.97}$ & $\underline{82.4}_{\scriptscriptstyle \pm 11.5}$ \\ 

DSCP-QFE
& $\textcolor{severeUC}{-0.114_{\scriptscriptstyle \pm .019}}$
& $\textcolor{severeUC}{55.9_{\scriptscriptstyle \pm 9.50}}$
& $\textcolor{severeUC}{159.7_{\scriptscriptstyle \pm 41.8}}$
& $\textcolor{severeUC}{-0.120_{\scriptscriptstyle \pm .061}}$
& $\textcolor{severeUC}{33.4_{\scriptscriptstyle \pm 8.74}}$
& $\textcolor{severeUC}{122.8_{\scriptscriptstyle \pm 37.2}}$
& $\textcolor{severeUC}{-0.103_{\scriptscriptstyle \pm .035}}$
& $\textcolor{severeUC}{42.1_{\scriptscriptstyle \pm 8.71}}$
& $\textcolor{severeUC}{123.3_{\scriptscriptstyle \pm 15.6}}$ \\

DSCP-Local  & $-0.001_{\scriptscriptstyle \pm .008}$ & $88.6_{\scriptscriptstyle \pm 18.4}$ & $\underline{123.8}_{\scriptscriptstyle \pm 27.6}$ & $+0.016_{\scriptscriptstyle \pm .010}$ & $64.0_{\scriptscriptstyle \pm 15.3}$ & $\underline{87.0}_{\scriptscriptstyle \pm 17.6}$ & $+0.011_{\scriptscriptstyle \pm .007}$ & $83.1_{\scriptscriptstyle \pm 14.3}$ & $\underline{109.9}_{\scriptscriptstyle \pm 13.2}$ \\

\midrule
& \multicolumn{9}{c}{Sapflow} \\
\cmidrule(lr){2-10}
Method
& \multicolumn{3}{c}{LR}
& \multicolumn{3}{c}{LSTM}
& \multicolumn{3}{c}{Chronos} \\
\cmidrule(lr){2-4}\cmidrule(lr){5-7}\cmidrule(lr){8-10}
& $\Delta$Cov & PI-W & WS
& $\Delta$Cov & PI-W & WS
& $\Delta$Cov & PI-W & WS \\

\midrule
Split CP    & $+0.032_{\scriptscriptstyle \pm .027}$ & $803.5_{\scriptscriptstyle \pm 577.6}$ & $1197.0_{\scriptscriptstyle \pm 839.7}$ & $+0.022_{\scriptscriptstyle \pm .022}$ & $1318.4_{\scriptscriptstyle \pm 2199.0}$ & $2045.5_{\scriptscriptstyle \pm 3441.5}$ & $+0.004_{\scriptscriptstyle \pm .035}$ & $2380.8_{\scriptscriptstyle \pm 1929.1}$ & $3854.7_{\scriptscriptstyle \pm 2892.2}$ \\
NexCP       & $+0.002_{\scriptscriptstyle \pm .005}$ & $656.4_{\scriptscriptstyle \pm 427.6}$ & $1067.9_{\scriptscriptstyle \pm 731.7}$ & $-0.002_{\scriptscriptstyle \pm .005}$ & $1179.7_{\scriptscriptstyle \pm 1774.7}$ & $1722.0_{\scriptscriptstyle \pm 2626.9}$ & $-0.002_{\scriptscriptstyle \pm .005}$ & $2154.9_{\scriptscriptstyle \pm 1774.7}$ & $3602.8_{\scriptscriptstyle \pm 2810.5}$ \\
HopCPT      & $+0.015_{\scriptscriptstyle \pm .059}$ & $790.8_{\scriptscriptstyle \pm 619.4}$ & $1101.1_{\scriptscriptstyle \pm 777.9}$ & $-0.016_{\scriptscriptstyle \pm .075}$ & $886.7_{\scriptscriptstyle \pm 927.5}$ & $1311.9_{\scriptscriptstyle \pm 1127.4}$ & $-0.001_{\scriptscriptstyle \pm .069}$ & $2313.5_{\scriptscriptstyle \pm 1832.1}$ & $3088.5_{\scriptscriptstyle \pm 1985.2}$ \\
SPCI        & $+0.044_{\scriptscriptstyle \pm .022}$ & $704.6_{\scriptscriptstyle \pm 484.2}$ & $\underline{979.4}_{\scriptscriptstyle \pm 667.9}$ & $+0.031_{\scriptscriptstyle \pm .026}$ & $650.5_{\scriptscriptstyle \pm 466.9}$ & $\underline{913.4}_{\scriptscriptstyle \pm 642.3}$ & $\textcolor{mildUC}{-0.032_{\scriptscriptstyle \pm .057}}$ & $\textcolor{mildUC}{793.0_{\scriptscriptstyle \pm 585.1}}$ & $\textcolor{mildUC}{1351.1_{\scriptscriptstyle \pm 942.4}}$ \\
KOWCPI      & $+0.021_{\scriptscriptstyle \pm .058}$ & $875.2_{\scriptscriptstyle \pm 705.2}$ & $1277.3_{\scriptscriptstyle \pm 895.5}$ & $+0.009_{\scriptscriptstyle \pm .065}$ & $977.2_{\scriptscriptstyle \pm 789.7}$ & $1429.3_{\scriptscriptstyle \pm 1029.9}$ & $\textcolor{mildUC}{-0.033_{\scriptscriptstyle \pm .076}}$ & $\textcolor{mildUC}{2024.9_{\scriptscriptstyle \pm 1730.2}}$ & $\textcolor{mildUC}{2773.7_{\scriptscriptstyle \pm 2044.1}}$ \\
ResCP       & $-0.004_{\scriptscriptstyle \pm .024}$ & $534.0_{\scriptscriptstyle \pm 361.2}$ & $\underline{905.4}_{\scriptscriptstyle \pm 634.6}$ & $-0.002_{\scriptscriptstyle \pm .024}$ & $593.1_{\scriptscriptstyle \pm 527.6}$ & $\underline{962.9}_{\scriptscriptstyle \pm 777.3}$ & $+0.021_{\scriptscriptstyle \pm .026}$ & $1373.2_{\scriptscriptstyle \pm 1111.2}$ & $\underline{1854.1}_{\scriptscriptstyle \pm 1309.8}$ \\
DistMatch   & $+0.024_{\scriptscriptstyle \pm .017}$ & $655.2_{\scriptscriptstyle \pm 454.0}$ & $988.3_{\scriptscriptstyle \pm 688.5}$ & $+0.031_{\scriptscriptstyle \pm .016}$ & $706.4_{\scriptscriptstyle \pm 528.5}$ & $1039.6_{\scriptscriptstyle \pm 754.9}$ & $+0.046_{\scriptscriptstyle \pm .019}$ & $1422.1_{\scriptscriptstyle \pm 1021.1}$ & $\underline{1868.7}_{\scriptscriptstyle \pm 1257.6}$ \\
DSCP-QR     & $+0.005_{\scriptscriptstyle \pm .064}$ & $708.5_{\scriptscriptstyle \pm 511.9}$ & $994.5_{\scriptscriptstyle \pm 687.8}$ & $-0.008_{\scriptscriptstyle \pm .111}$ & $839.3_{\scriptscriptstyle \pm 917.8}$ & $1142.3_{\scriptscriptstyle \pm 1023.3}$ & $\textcolor{severeUC}{-0.061_{\scriptscriptstyle \pm .065}}$ & $\textcolor{severeUC}{864.8_{\scriptscriptstyle \pm 549.1}}$ & $\textcolor{severeUC}{1450.2_{\scriptscriptstyle \pm 953.0}}$ \\
DSCP-QFE
& $\textcolor{mildUC}{-0.046_{\scriptscriptstyle \pm .116}}$
& $\textcolor{mildUC}{609.6_{\scriptscriptstyle \pm 421.5}}$
& $\textcolor{mildUC}{1209.8_{\scriptscriptstyle \pm 829.9}}$
& $\textcolor{severeUC}{-0.115_{\scriptscriptstyle \pm .128}}$
& $\textcolor{severeUC}{527.0_{\scriptscriptstyle \pm 461.3}}$
& $\textcolor{severeUC}{1188.3_{\scriptscriptstyle \pm 884.6}}$
& $\textcolor{severeUC}{-0.137_{\scriptscriptstyle \pm .101}}$
& $\textcolor{severeUC}{563.4_{\scriptscriptstyle \pm 442.8}}$
& $\textcolor{severeUC}{1885.1_{\scriptscriptstyle \pm 1216.4}}$ \\
DSCP-Local   & $+0.001_{\scriptscriptstyle \pm .012}$ & $558.2_{\scriptscriptstyle \pm 375.5}$ & $\underline{864.2}_{\scriptscriptstyle \pm 619.6}$ & $+0.007_{\scriptscriptstyle \pm .013}$ & $632.1_{\scriptscriptstyle \pm 550.7}$ & $\underline{911.4}_{\scriptscriptstyle \pm 743.6}$ & $+0.017_{\scriptscriptstyle \pm .010}$ & $1400.5_{\scriptscriptstyle \pm 1102.3}$ & $\underline{1860.4}_{\scriptscriptstyle \pm 1382.5}$ \\

\bottomrule
\end{tabular}
\end{adjustbox}
\end{table}

\section{Experiments}

\subsection{Setup}

\paragraph{Datasets.}

We evaluate the proposed methods and baselines on three real-world datasets that are commonly used as benchmarks in conformal prediction for time-series. The Air Quality dataset~\citep{zhang2017cautionary} contains air quality measurements and 10 associated features collected from 12 locations in Beijing, China. The Solar dataset~\citep{sengupta2018national} contains half-hourly measurements of solar radiation along with eight environmental features at 50 different locations. The Sapflow dataset, obtained from the SAPFLUXNET database~\citep{poyatos2020global}, contains measurements of water movement within plants. Since the individual sequences in the Sapflow dataset vary in length, we use a subset of 24 time series, each containing between 15,000 and 20,000 observations with varying sampling frequencies, following~\citet{auer2024conformal}. Additional details of the datasets are provided in Appendix~\ref{appendix:datasets}.

\paragraph{Base predictors.}
We consider three base predictors: linear regression, a recurrent neural network (RNN), and Chronos~\citep{ansari2024chronos,ansari2025chronos}. For linear regression, we use a sliding window of the past $k$ time steps of features and outcomes to predict the outcome at the next time step, and train a separate model for each individual time-series sequence. For the RNN, we use LSTM cells to perform the same one-step-ahead prediction using the past $k$ time steps of features and outcomes. Unlike linear regression, a single RNN model is trained jointly using the training splits of all time series sequences. Since Chronos is a pretrained foundation model for time series forecasting, we use its zero-shot forecasts without fine tuning. Additional details on the base predictors are provided in Appendix~\ref{appendix:base_predictors}.

\paragraph{Proposed and baseline methods.}

For the three proposed methods, we consider LSTM and Transformer as the sequence encoder. Based on the literature review in \S~\ref{sec:literature_review}, we consider several representative conformal prediction methods as baseline methods, with an emphasis on methods developed for time series or distribution shift: NexCP~\citep{barber2023conformal}, SPCI~\citep{xu2023sequential}, HopCPT~\citep{auer2024conformal}, KOWCPI~\citep{lee2024kernel}, ResCP~\citep{neglia2026rescp}, and DistMatch~\citep{menadjiev2026distmatch}. For each time-series sequence, the first 33\% is used as the training set. The first 67\% of the remaining sequence are used for calibration, and the rest are used for testing, resulting in an approximate 33:45:22 split. The training set is used to train the base predictor, the calibration set is used to calibrate the prediction intervals, and the test set is used for evaluation. We consider a one-step-ahead prediction setting where the prediction interval for time $t+1$ is constructed using the historical context at time $t$. Implementation details and hyperparameter selection procedures for the proposed methods and baselines are provided in Appendices~\ref{appendix:proposed_methods} and~\ref{appendix:baselines}, respectively.

\paragraph{Evaluation metrics.}

We evaluate all methods using three metrics: coverage gap, prediction interval width, and Winkler score~\citep{winkler1972decision}. The coverage gap measures the difference between empirical and target coverage. Prediction interval width measures the size of the prediction interval. The Winkler score jointly accounts for interval width and coverage violations. Each metric is computed over the test set and then averaged across sequences within each dataset. Detailed definitions are provided in Appendix~\ref{appendix:eval_metrics}.

\subsection{Results}
\paragraph{Deep sequence models provide efficient prediction intervals for time series.}

Table~\ref{table:main_results_1} reports the average coverage gap, interval width, and Winkler score for all methods across three datasets at a target coverage level of $0.9$. The results for Split CP and NexCP suggest that relying on exchangeability or simply assigning greater weights to recent observations may be insufficient to construct efficient prediction intervals for time series. For each proposed method, we report the best results obtained using either an RNN or a Transformer encoder. DSCP-Local achieves the best overall performance while maintaining stable coverage across all three datasets. DSCP-QR also performs well, often outperforming the strongest baselines, although it occasionally exhibits undercoverage. Similar trends hold at higher and lower target coverage levels of $0.95$ and $0.8$, as shown in Tables~\ref{table:main_results_2} and~\ref{table:main_results_3}, respectively.

We observe that the RNN encoder generally outperforms the Transformer encoder in our experiments. One possible explanation is that perhaps relatively short-term temporal dependencies are sufficient for modeling the conditional residual distributions in these settings, although further analysis is needed to assess this hypothesis. DSCP-QFE exhibits undercoverage, possibly because its partially monotonic prediction heads limit model expressiveness. The ablation study below, in which this architectural constraint is removed, provides empirical support for this interpretation.

\paragraph{The effect of architectural constraints for preventing quantile crossing on prediction interval efficiency.}

Table~\ref{table:ablation_crossing} reports ablation results for DSCP-QR and DSCP-QFE without their respective architectural constraints for preventing quantile crossing. For DSCP-QR, removing the incremental prediction heads has marginal effect on performance but leads to quantile crossing. In contrast, removing the partially monotonic heads substantially improves the performance of DSCP-QFE.

\section{Conclusion}

In this study, we propose three methods that leverage deep sequence models for conformal prediction in time series. We establish asymptotic conditional coverage guarantees for all three methods under suitable assumptions. Experiments on three real-world datasets against a broad set of baseline methods demonstrate the effectiveness of deep sequence models for conformal prediction for time series. We believe that our theoretical analysis and empirical findings provide useful insights and practical guidance for future research. Future work should address the limitations of this study by developing experimental protocols to assess conditional coverage empirically and designing architectures that prevent quantile crossing without compromising prediction interval efficiency. Another promising direction is to develop novel methods that more effectively leverage deep sequence models for time-series conformal prediction.


\section*{Acknowledgments}

This work is partially supported by NSF CMMI-2112533 and the Coca-Cola Foundation.

\bibliography{refs}

\appendix

\section{Detailed Theoretical Analysis and Proofs}\label{app:proofs}

This appendix gives the full assumptions and intermediate results underlying Section \ref{sec:theory}.

Throughout this section, the point predictor $\widehat f$ is fixed. The encoder is learned jointly with the quantile heads in DSCP-QR and DSCP-QFE, and is fixed in DSCP-Local. For the stationary pair $(Z_t, \epshat_{t+\ell})$ and a quantile level $\tau \in (0,1)$, write
\[
F_{\epshat}(u \mid z) \coloneqq \P \left( \epshat_{t+\ell} \leq u \mid Z_t=z \right), \quad q^\star_\tau(z) \coloneqq \inf \left\{ u \in \R:
F_{\epshat}(u\mid z) \geq \tau \right\},
\]
as the conditional distribution of the prediction residual, and the corresponding true conditional $\tau$-quantile function. The forecast origins $t_T \ge T + \ell$ and endpoint levels $\alpha_l$, $\alpha_h$ are as in Section \ref{sec:problem_setup}.

\subsection{DSCP-QR}\label{subsec:theory-qr}

We first analyze DSCP-QR, which utilizes the finite collection of quantile heads trained with a shared encoder. The argument proceeds from uniform convergence of the training risk to uniform quantile consistency, and then to conditional coverage.

Let $\mathcal T = \{\tau_1,\ldots,\tau_J\} \subset (0,1)$ be fixed. The shared encoder and its heads induce a deterministic, nonempty joint class.
We collect the estimated conditional quantile functions into the vector
\(
\widehat{\mathbf q}_T = \left(\widehat q_{T,\tau} \right)_{\tau\in\mathcal{T}},
\)
and denote the corresponding vector of true conditional quantile functions by
\(
\mathbf q^\star = \left( q^\star_\tau \right)_{\tau\in\mathcal{T}}.
\)
For each $\tau\in\mathcal{T}$, let $\mathcal{Q}_{T,\tau}$ denote the candidate class of conditional quantile functions at level $\tau$. The shared-encoder architecture induces a joint class of vector-valued quantile functions
\[
\mathfrak{Q}_T \subseteq \prod_{\tau\in\mathcal{T}} \mathcal{Q}_{T,\tau}.
\]
By construction,
\(
\widehat{\mathbf q}_T\in\mathfrak{Q}_T.
\)

The conditional quantile functions are estimated using the quantile loss
\(
\rho_\tau(u) = u \left(\tau - \ind\{u<0\}\right),
\)
where $\ind\{\cdot\}$ denotes the indicator function. For a candidate function $q\in\mathcal{Q}_{T,\tau}$, define the empirical quantile risk and the corresponding population risk by
\begin{equation}
\widehat R_{T,\tau}(q) = \frac{1}{T} \sum_{i=1}^{T} \rho_\tau ( \epshat_{i+\ell} - q(Z_i)), \quad
R_\tau(q) = \E\left[\rho_\tau(\epshat_{t+\ell} - q(Z_t) ) \right].
\end{equation}
For a generic vector
\(
\qv = \left(q_\tau\right)_{\tau\in\Tc},
\)
define the joint empirical risk and the corresponding joint population risk by
\[
\widehat{\Rc}_T(\qv) = \sum_{\tau\in\Tc} w_\tau \widehat R_{T,\tau}(q_\tau),\quad
\Rc(\qv) = \sum_{\tau\in\Tc} w_\tau R_\tau (q_\tau),
\]
where the fixed weights satisfy
\(
w_\tau>0
\)
and
\(
\sum_{\tau\in\Tc} w_\tau=1.
\)
The model is trained using the joint empirical risk $\widehat{\Rc}_T$ over the class $\mathfrak{Q}_T$.

\begin{assumption}[Stationarity and strong mixing]
\label{assump:stationary_mixing_process}
The process
\(
\{W_t=(Z_t,\epshat_{t+\ell})\}_{t\geq1}
\)
is strictly stationary and strongly mixing, with mixing coefficients
$\alpha_W(k)$ satisfying
\[
\alpha_W(k)\longrightarrow0.
\]
\end{assumption}

\begin{assumption}[Residuals and conditional quantile regularity]
\label{assump:residual_bound_regularity}

The following conditions hold.

\begin{enumerate}

\item[(i)] (Bounded residuals)
There exists a constant $B_{\epshat}<\infty$ such that
\(
|\epshat_{t+\ell}| \leq B_{\epshat}
\)
almost surely for all $t\geq 1$.

\item[(ii)] (Regularity around the conditional quantiles)
For each $\tau\in\mathcal T$, the true conditional $\tau$-quantile $q^\star_\tau(z)$ is unique and continuous on $\mathcal Z := \text{supp}(P_Z)$.
Moreover, there exist constants $c_f>0$ and $\delta>0$ such that, for every $\tau\in\mathcal T$ and every $z\in\mathcal Z$, the conditional distribution of $\epshat_{t+\ell}$ given $Z_t=z$ admits a density $f_{\epshat}(\cdot\mid z)$ on
\(
\left[ q^\star_\tau(z)-\delta,\, q^\star_\tau(z)+\delta \right],
\)
satisfying
\[
f_{\epshat}(u \mid z) \geq c_f\; \text{ if }\; \abs{u - q^\star_\tau(z)} \le \delta, \quad z \in \Zc,\; \tau \in \Tc.
\]
\end{enumerate}
\end{assumption}
Bounded residuals provide an integrable envelope for the quantile loss, and local absolute continuity excludes atoms at the target quantiles.

\begin{assumption}[Model classes]
\label{assump:model_class}
For each $\tau\in\mathcal T$, there exists a deterministic function class
$
\overline{\Qc}_\tau = \{q_\theta:\theta\in\overline{\Theta}_\tau\}
$
such that
\(
\mathcal Q_{T,\tau} \subseteq \overline{\Qc}_\tau
\)
for all $T$. The parameter space
$\overline{\Theta}_\tau$ and $\Zc = \operatorname{supp}(P_Z)$ are compact. The mapping
$
(\theta,z) \mapsto q_\theta(z)
$
is jointly continuous. Moreover, there exists $B_Q<\infty$ bounding all candidate functions:
\[
\sup_{\tau\in\mathcal T} \sup_{\eta\in\overline{\Theta}_\tau} \sup_{z\in\mathcal Z} |q_\eta(z)| \leq B_Q.
\]
\end{assumption}
Assumption \ref{assump:model_class} holds, for example, when the context space is compact and all candidate models belong to a fixed finite neural architecture whose component maps are jointly continuous in their inputs and parameters, with parameters restricted to a common compact set.
Under Assumption \ref{assump:model_class}, each
$\overline{\mathcal Q}_\tau$ is compact under
\(
\|q\|_\infty = \sup_{z\in\mathcal Z}|q(z)|.
\)
Moreover, since $\rho_\tau(u)\leq |u|$, Assumptions
\ref{assump:residual_bound_regularity} and \ref{assump:model_class} imply
that, uniformly over $\tau\in\mathcal T$ and
$q\in\overline{\mathcal Q}_\tau$,
\[
\rho_\tau (\epshat_{t+\ell}-q(Z_t)) \leq |\epshat_{t+\ell}|+|q(Z_t)| \leq B_{\epshat}+B_Q.
\]

\begin{assumption}[Joint approximation]
\label{assump:quantile_approximation_error}
The shared-encoder classes satisfy, as $T \to \infty$,
$$
a_T^{\mathrm{joint}} \coloneqq \inf_{\mathbf q\in\mathfrak Q_T} \left\{\mathcal R(\mathbf q) - \mathcal R(\mathbf q^\star) \right\} \longrightarrow 0.
$$
\end{assumption}

Assumption~\ref{assump:quantile_approximation_error} does not require
$\mathbf q^\star \in \mathfrak Q_T$ for any finite $T$. It requires only that the model class contain jointly feasible quantile-function vectors whose joint population risk approaches that of the true conditional quantile vector as $T \to \infty$.

\begin{assumption}[Approximate empirical risk minimization]
\label{assump:joint_empirical_risk_minimization}
The jointly trained estimator
\(
\widehat{\mathbf q}_T
\in\mathfrak Q_T
\)
satisfies
\[
\widehat{\mathcal R}_T
\left(
\widehat{\mathbf q}_T
\right)
\leq
\inf_{\mathbf q\in\mathfrak Q_T}
\widehat{\mathcal R}_T(\mathbf q)
+
r_T,
\]
where $r_T \ge 0$ and
\(
r_T\xrightarrow{p}0.
\)
\end{assumption}

Assumption \ref{assump:joint_empirical_risk_minimization} is the standard approximate-extremum condition \citep{newey1994large}. This requirement concerns the achieved objective value relative to the global empirical infimum, and the quantity $r_T$ allows for optimization error in the model training.

\begin{lemma}[Uniform convergence of the empirical risk]
\label{lemma:uniform_convergence_empirical_risk}
Under Assumptions~\ref{assump:stationary_mixing_process}, \ref{assump:residual_bound_regularity}(i), and
\ref{assump:model_class}, for every $\tau\in\mathcal T$,
\[
\sup_{q\in\mathcal Q_{T,\tau}}
\left|
\widehat R_{T,\tau}(q)-R_\tau(q)
\right|
\xrightarrow{p}0.
\]
\end{lemma}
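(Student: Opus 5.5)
Since $\mathcal Q_{T,\tau}\subseteq\overline{\Qc}_\tau$ for every $T$, it suffices to prove the stronger statement $\sup_{q\in\overline{\Qc}_\tau}|\widehat R_{T,\tau}(q)-R_\tau(q)|\xrightarrow{p}0$ over the fixed, $T$-independent class. The plan is a standard ``pointwise LLN plus equicontinuity'' (Glivenko--Cantelli by finite covering) argument, with the pointwise law of large numbers supplied by strong mixing.

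\textbf{Step 1: Lipschitz reduction to a finite net.} The check loss is $1$-Lipschitz ($|\rho_\tau(u)-\rho_\tau(v)|\le \max(\tau,1-\tau)|u-v|\le|u-v|$). Hence for $q,q'\in\overline{\Qc}_\tau$,
\[
|\widehat R_{T,\tau}(q)-\widehat R_{T,\tau}(q')|\le\|q-q'\|_\infty,\qquad |R_\tau(q)-R_\tau(q')|\le\|q-q'\|_\infty,
\]
deterministically. By Assumption~\ref{assump:model_class}, $(\theta,z)\mapsto q_\theta(z)$ is jointly continuous on the compact set $\overline\Theta_\tau\times\Zc$, hence uniformly continuous, so $\overline{\Qc}_\tau$ is compact in $\|\cdot\|_\infty$ (as noted after that assumption). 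Fix $\varepsilon>0$ and choose a finite $\varepsilon$-net $q_1,\dots,q_N$ of $\overline{\Qc}_\tau$. Then
\[
\sup_{q\in\overline{\Qc}_\tau}|\widehat R_{T,\tau}(q)-R_\tau(q)|\le 2\varepsilon+\max_{1\le k\le N}|\widehat R_{T,\tau}(q_k)-R_\tau(q_k)|.
\]

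\textbf{Step 2: Pointwise weak LLN under strong mixing.} Fix $k$ and set $\xi_i=\rho_\tau(\epshat_{i+\ell}-q_k(Z_i))$. This is a fixed measurable function of $W_i$, so $\{\xi_i\}$ is strictly stationary with $\alpha_\xi(m)\le\alpha_W(m)\to0$. It is also bounded by $B_{\epshat}+B_Q$, using Assumption~\ref{assump:residual_bound_regularity}(i) and the bound in Assumption~\ref{assump:model_class}. For bounded variables, Davydov's covariance inequality gives $|\Cov(\xi_0,\xi_m)|\le 4(B_{\epshat}+B_Q)^2\alpha_W(m)$. Therefore
\[
\Var\Big(\frac1T\sum_{i=1}^T\xi_i\Big)\le\frac{1}{T}\sum_{|m|<T}|\Cov(\xi_0,\xi_m)|\le\frac{4(B_{\epshat}+B_Q)^2}{T}\sum_{|m|<T}\alpha_W(m)\to0,
\]
because the Ces\`aro mean of a null sequence tends to zero; no summability of the $\alpha_W(m)$ is needed. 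Chebyshev's inequality then gives $\widehat R_{T,\tau}(q_k)\xrightarrow{p}R_\tau(q_k)$. Since the net is finite, the maximum over $k$ also converges to $0$ in probability.

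\textbf{Step 3: Conclude.} Combining Steps 1 and 2,
\[
\limsup_T\P\Big(\sup_{q\in\overline{\Qc}_\tau}|\widehat R_{T,\tau}(q)-R_\tau(q)|>3\varepsilon\Big)=0,
\]
and $\varepsilon$ is arbitrary. The same bound then holds for the subclass $\mathcal Q_{T,\tau}$.

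The main obstacle is Step 2. Assumption~\ref{assump:stationary_mixing_process} only gives $\alpha_W(k)\to0$, with no rate, so I would avoid CLT- or Bernstein-type inequalities and use just the variance bound, which needs only the Ces\`aro argument and boundedness. A minor point is measurability of the supremum. This is handled by restricting to a countable dense subset of $\overline\Theta_\tau$, which is valid because of the uniform continuity in $\theta$.
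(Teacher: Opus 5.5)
Your proposal is correct and follows essentially the same argument as the paper. Both reduce to the fixed compact class $\overline{\Qc}_\tau$, get pointwise convergence from the bounded-variable covariance inequality under strong mixing plus the Ces\`aro argument and Chebyshev's inequality, and then obtain uniformity from the $1$-Lipschitz property of $\rho_\tau$ and a finite $\|\cdot\|_\infty$-net. The differences are cosmetic: you order the net step first, use an $\varepsilon$-net with $2\varepsilon$ slack rather than an $\varepsilon/4$-net, and add a remark on measurability of the supremum that the paper leaves implicit.
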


\begin{proof}[Proof of Lemma~\ref{lemma:uniform_convergence_empirical_risk}]
    Fix $\tau\in\mathcal T$. Since $\mathcal Q_{T,\tau}\subseteq \overline{\mathcal Q}_\tau$, it suffices to establish uniform convergence over the fixed class $\overline{\mathcal Q}_\tau$. For $q\in\overline{\mathcal Q}_\tau$, write
    \[
        L_{\tau,q}(W_i)=\rho_\tau\{\epshat_{i+\ell}-q(Z_i)\}, \quad B_X=B_{\epshat}+B_Q.
    \]
    Then,
    \[
    \widehat R_{T,\tau}(q) = \frac{1}{T}\sum_{i=1}^{T}L_{\tau,q}(W_i), \quad
    R_\tau(q) = \E \left[L_{\tau,q}(W_i)\right].
    \]
    Since the quantile loss satisfies $0\leq \rho_\tau(u)\leq |u|$, Assumptions \ref{assump:residual_bound_regularity}(i) and \ref{assump:model_class} give
    \[
        0\le L_{\tau,q}(W_i)\le |\epshat_{i+\ell}|+|q(Z_i)|\le B_X.
    \]
    By stationarity, the centered variables defined as
    \begin{equation}\label{eq:def_X}
        X_{i,q}=L_{\tau,q}(W_i)-R_\tau(q),
    \end{equation}
    so that $\sum_{i=1}^T X_{i,q} / T = \widehat R_{T,\tau}(q)-R_\tau(q)$, satisfy $\E X_{i,q}=0$ and $|X_{i,q}|\le B_X$ almost surely.
    Since $X_{t,q}$ is a measurable transformation of $W_t$, by Assumption~\ref{assump:stationary_mixing_process}, the process $\{X_{t,q}\}_{t\geq1}$ is also strictly stationary and strongly mixing, with mixing coefficients bounded by $\alpha_W(k)$. 
    Thus,
    \begin{equation}\label{eq:variance_bound}
        \sum_{i=1}^T\operatorname{Var}(X_{i,q})\le TB_X^2,
    \end{equation}
    and the covariance inequality for strong mixing processes \citep{rio1993covariance,rio2017asymptotic} gives
    \begin{equation}\label{eq:qr_cov_ineq_mixing}
        |\Cov(X_{i,q},X_{i+k,q})| \le 4B_X^2\alpha_W(k).
    \end{equation}
    Consequently,
    \begin{align}
        2\sum_{1\le i<j\le T}\operatorname{Cov}(X_{i,q},X_{j,q}) &=2\sum_{k=1}^{T-1}\sum_{i=1}^{T-k} \operatorname{Cov}(X_{i,q},X_{i+k,q})\notag\\ &\le 8B_X^2\sum_{k=1}^{T-1}(T-k)\alpha_W(k)\notag\\ &\le 8B_X^2T\sum_{k=1}^{T-1}\alpha_W(k). \label{eq:covariance_bound}
    \end{align}
    Combining \eqref{eq:variance_bound}--\eqref{eq:covariance_bound}, since $\alpha_W(k)\to0$ in Assumption~\ref{assump:stationary_mixing_process},
    \begin{align*}
        \operatorname{Var}\{\widehat R_{T,\tau}(q)-R_\tau(q)\} &=\frac{1}{T^2}\left\{ \sum_{i=1}^T\operatorname{Var}(X_{i,q}) +2\sum_{1\le i<j\le T}\operatorname{Cov}(X_{i,q},X_{j,q}) \right\}\\ &\le\frac{B_X^2}{T} +\frac{8B_X^2}{T}\sum_{k=1}^{T-1}\alpha_W(k) \longrightarrow0.
    \end{align*}
    Indeed, for every $a>0$, choose $k_a$ such that $\alpha_W(k) \le a$ for $k\ge k_a$. Then, for $T>k_a$,
    \[
        0\le\frac1T\sum_{k=1}^{T-1}\alpha_W(k) \le\frac1T\sum_{k=1}^{k_a-1}\alpha_W(k)+a, \qquad \limsup_{T\to\infty}\frac1T\sum_{k=1}^{T-1}\alpha_W(k)\le a.
    \]
    Letting $a \to 0$ proves the required Ces\`aro convergence. Since $\E[\widehat R_{T,\tau}(q)-R_\tau(q)] = 0$, Chebyshev's inequality implies, for every fixed $q\in\overline{\mathcal Q}_\tau$,
    \begin{equation}\label{eq:pointwise_convergence}
        \widehat R_{T,\tau}(q) - R_\tau(q) \xrightarrow{p} 0.
    \end{equation}
    
    We now use this pointwise convergence to establish uniform convergence over $\overline{\mathcal Q}_\tau$.
    The quantile loss is $1$-Lipschitz, meaning that
    \begin{equation}
        |\rho_\tau(u)-\rho_\tau(v)| \leq |u-v|, \quad \forall u,v \in \R.
    \label{eq:1lipschitz_quantile_loss}
    \end{equation}
    Hence, for $q,q'\in\overline{\mathcal Q}_\tau$,
    \begin{align}
        |\widehat R_{T,\tau}(q)-\widehat R_{T,\tau}(q')|
        &\le\frac1T\sum_{i=1}^T \left|\rho_\tau\{\epshat_{i+\ell}-q(Z_i)\} -\rho_\tau\{\epshat_{i+\ell}-q'(Z_i)\}\right|\notag\\
        &\le\frac1T\sum_{i=1}^T|q(Z_i)-q'(Z_i)| \le\|q-q'\|_\infty, \label{eq:empirical_risk_diff_bound}\\
        |R_\tau(q)-R_\tau(q')|
        &\le\E|q(Z_i)-q'(Z_i)|\le\|q-q'\|_\infty. \label{eq:population_risk_diff_bound}
    \end{align}
    Under Assumption \ref{assump:model_class}, joint continuity on the compact set $\overline\Theta_\tau\times\mathcal Z$ implies
    \[
        \theta_n \to \theta \quad\implies\quad \sup_{z \in \Zc} |q_{\theta_n}(z) - q_\theta(z)| \to 0.
    \]
    Thus $\overline{\mathcal Q}_\tau$ is the continuous image of the compact parameter set in the uniform norm, and is compact. Given $\varepsilon > 0$, choose a finite net $q_1, \dots, q_N \in \overline \Qc_\tau$ such that for every $q \in \overline \Qc_\tau$, $\| q - q_j \|_\infty < \varepsilon / 4$ for some $j \in \{1,\dots,N\}$.
    The triangle inequality and \eqref{eq:empirical_risk_diff_bound}--\eqref{eq:population_risk_diff_bound} yield
    \[
        \sup_{q\in\overline{\mathcal Q}_\tau} |\widehat R_{T,\tau}(q)-R_\tau(q)| \le\frac{\varepsilon}{2} +\max_{1\le j\le N}|\widehat R_{T,\tau}(q_j)-R_\tau(q_j)|.
    \]
    By \eqref{eq:pointwise_convergence} and the union bound,
    \begin{align*}
        \P\left( \sup_{q\in\overline{\mathcal Q}_\tau} |\widehat R_{T,\tau}(q)-R_\tau(q)|>\varepsilon\right) \le\sum_{j=1}^N\P\left( |\widehat R_{T,\tau}(q_j)-R_\tau(q_j)|>\frac{\varepsilon}{2}\right) \longrightarrow0.
    \end{align*}
    Taking the supremum over the smaller class $\Qc_{T,\tau}$ proves the claim.
    
\end{proof}

\begin{corollary}[Uniform convergence of the joint risk]
\label{corollary:joint_uniform_convergence}
Under the conditions of
Lemma~\ref{lemma:uniform_convergence_empirical_risk},
\[
\sup_{\mathbf q\in\mathfrak Q_T}
\left|
\widehat{\mathcal R}_T(\mathbf q)
-
\mathcal R(\mathbf q)
\right|
\xrightarrow{p}0.
\]
\end{corollary}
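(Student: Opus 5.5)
The plan is to reduce the joint statement to the per-level statement of Lemma~\ref{lemma:uniform_convergence_empirical_risk}. We do this with the triangle inequality and the fact that $\Tc$ is a fixed finite set with fixed weights. For any $\qv=(q_\tau)_{\tau\in\Tc}\in\mathfrak Q_T$, linearity of both risks in their components gives
\[
\bigl|\widehat{\Rc}_T(\qv)-\Rc(\qv)\bigr|
=\Bigl|\sum_{\tau\in\Tc}w_\tau\bigl\{\widehat R_{T,\tau}(q_\tau)-R_\tau(q_\tau)\bigr\}\Bigr|
\le\sum_{\tau\in\Tc}w_\tau\bigl|\widehat R_{T,\tau}(q_\tau)-R_\tau(q_\tau)\bigr|.
\]

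Next we use the inclusion $\mathfrak Q_T\subseteq\prod_{\tau\in\Tc}\Qc_{T,\tau}$. It says that each coordinate $q_\tau$ of a jointly feasible vector lies in $\Qc_{T,\tau}$. So each summand is bounded by $\sup_{q\in\Qc_{T,\tau}}|\widehat R_{T,\tau}(q)-R_\tau(q)|$, and this bound does not depend on $\qv$. Taking the supremum over $\qv\in\mathfrak Q_T$ therefore yields
\[
\sup_{\qv\in\mathfrak Q_T}\bigl|\widehat{\Rc}_T(\qv)-\Rc(\qv)\bigr|
\le\sum_{\tau\in\Tc}w_\tau\,\Delta_{T,\tau},
\qquad
\Delta_{T,\tau}\coloneqq\sup_{q\in\Qc_{T,\tau}}\bigl|\widehat R_{T,\tau}(q)-R_\tau(q)\bigr|.
\]
The left-hand side involves only the coupling across levels imposed by the shared encoder. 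The inclusion means we never need to control that coupling directly.

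By Lemma~\ref{lemma:uniform_convergence_empirical_risk}, $\Delta_{T,\tau}\xrightarrow{p}0$ for each $\tau\in\Tc$. A finite, fixed-weight linear combination of sequences converging to zero in probability also converges to zero in probability. Concretely, for $\varepsilon>0$ and $w_\tau\le1$, the union bound gives
\[
\P\Bigl(\sum_{\tau}w_\tau\Delta_{T,\tau}>\varepsilon\Bigr)\le\sum_{\tau}\P\bigl(\Delta_{T,\tau}>\varepsilon/J\bigr)\to0 .
\]

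There is no real obstacle here. The only points needing care are measurability of the suprema, which can be handled as in the lemma by working with outer probability or noting the bound by the finite net there, and the use of $J$ fixed.
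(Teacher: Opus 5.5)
Your proposal is correct and follows the paper's proof essentially verbatim. Both use the coordinatewise inclusion $\mathfrak Q_T\subseteq\prod_{\tau\in\Tc}\Qc_{T,\tau}$ to bound the joint deviation by $\sum_{\tau\in\Tc}w_\tau\sup_{q\in\Qc_{T,\tau}}|\widehat R_{T,\tau}(q)-R_\tau(q)|$, and then invoke Lemma~\ref{lemma:uniform_convergence_empirical_risk} with $|\Tc|<\infty$ and fixed weights. (Since $\sum_\tau w_\tau=1$, the weighted sum is at most $\max_\tau\Delta_{T,\tau}$, so your $\varepsilon/J$ split is unnecessary, though valid.)
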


\begin{proof}[Proof of Corollary~\ref{corollary:joint_uniform_convergence}]
    For every $\qv \in \mathfrak Q_T$, each coordinate belongs to $\Qc_{T,\tau}$. Therefore,
    \begin{align*}
        \sup_{\mathbf q\in\mathfrak Q_T} |\widehat{\mathcal R}_T(\mathbf q)-\mathcal R(\mathbf q)|
        &=\sup_{\mathbf q\in\mathfrak Q_T} \left|\sum_{\tau\in\mathcal T}w_\tau \{\widehat R_{T,\tau}(q_\tau)-R_\tau(q_\tau)\}\right|\\
        &\le\sum_{\tau\in\mathcal T}w_\tau \sup_{q\in\mathcal Q_{T,\tau}} |\widehat R_{T,\tau}(q)-R_\tau(q)|
        \xrightarrow{p} 0,
    \end{align*}
    where the last convergence follows from Lemma \ref{lemma:uniform_convergence_empirical_risk} since $|\Tc| <\infty$ and the weights are fixed.
\end{proof}

\begin{lemma}[Population excess risk convergence]
\label{lemma:population_excess_risk_convergence}
Under the conditions of Corollary~\ref{corollary:joint_uniform_convergence} and Assumptions \ref{assump:quantile_approximation_error} and \ref{assump:joint_empirical_risk_minimization},
\[
\mathcal R
\left(
\widehat{\mathbf q}_T
\right)
-
\mathcal R(\mathbf q^\star)
\xrightarrow{p}0.
\]
\end{lemma}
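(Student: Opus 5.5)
The plan is the standard approximate-extremum decomposition, following the proof sketch of Theorem~\ref{thm:main-qr-coverage}. Write $\Delta_T = \sup_{\qv\in\mathfrak Q_T}|\widehat{\Rc}_T(\qv)-\Rc(\qv)|$, which satisfies $\Delta_T \xrightarrow{p} 0$ by Corollary~\ref{corollary:joint_uniform_convergence}.

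\textbf{Step 1 (lower bound).} Establish $\Rc(\widehat\qv_T)-\Rc(\qv^\star)\ge 0$. Since $q_\tau^\star(z)$ is a conditional $\tau$-quantile of $\epshat_{t+\ell}$ given $Z_t=z$, it minimizes $c\mapsto \E[\rho_\tau(\epshat_{t+\ell}-c)\mid Z_t=z]$ for every $z$. The rest of this step is bookkeeping:
\begin{itemize}
\item integrating over $Z_t$ gives $R_\tau(q)\ge R_\tau(q_\tau^\star)$ for every measurable bounded $q$;
\item the risks are finite by Assumption~\ref{assump:residual_bound_regularity}(i), and $q_\tau^\star$ is bounded by $B_{\epshat}$;
\item summing with the weights $w_\tau>0$ gives $\Rc(\qv)\ge\Rc(\qv^\star)$ for all $\qv$, in particular for $\widehat\qv_T$.
\end{itemize}

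\textbf{Step 2 (upper bound).} Fix an arbitrary $\qv\in\mathfrak Q_T$ and chain the comparisons:
\[
\Rc(\widehat\qv_T)\le \widehat\Rc_T(\widehat\qv_T)+\Delta_T \le \inf_{\mathbf p\in\mathfrak Q_T}\widehat\Rc_T(\mathbf p)+r_T+\Delta_T\le \widehat\Rc_T(\qv)+r_T+\Delta_T\le \Rc(\qv)+r_T+2\Delta_T.
\]
These inequalities use, in order:
\begin{itemize}
\item $\widehat\qv_T\in\mathfrak Q_T$ together with the definition of $\Delta_T$;
\item Assumption~\ref{assump:joint_empirical_risk_minimization};
\item the definition of the infimum;
\item $\Delta_T$ again.
\end{itemize}
Taking the infimum over $\qv\in\mathfrak Q_T$ and subtracting $\Rc(\qv^\star)$ yields
\[
0\le \Rc(\widehat\qv_T)-\Rc(\qv^\star)\le a_T^{\mathrm{joint}}+r_T+2\Delta_T .
\]

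\textbf{Step 3 (conclusion).} The three terms on the right converge for different reasons:
\begin{itemize}
\item $a_T^{\mathrm{joint}}\to0$ deterministically by Assumption~\ref{assump:quantile_approximation_error};
\item $r_T\xrightarrow{p}0$ by Assumption~\ref{assump:joint_empirical_risk_minimization};
\item $\Delta_T\xrightarrow{p}0$ by Corollary~\ref{corollary:joint_uniform_convergence}.
\end{itemize}
The sum therefore converges to $0$ in probability. The sandwich from Step~2 gives $\Rc(\widehat\qv_T)-\Rc(\qv^\star)\xrightarrow{p}0$.

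The only point needing care is measurability. Here $\widehat\qv_T$ is data-dependent, so $\Rc(\widehat\qv_T)$ should be read as $\Rc$ evaluated at the random element, i.e.\ $\Rc(\qv)|_{\qv=\widehat\qv_T}$, and not as an expectation over the joint randomness. The deterministic inequalities above hold pathwise, so no further argument is needed. Apart from this, the result is a routine consequence of the earlier corollary, and I anticipate no real obstacle.
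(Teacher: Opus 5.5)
Your proof is correct and is essentially the paper's approximate-extremum decomposition. The paper bounds the excess risk by $2\Delta_T + r_T + a_T^{\mathrm{joint}} + T^{-1}$ using a fixed deterministic near-minimizer $\qv_T^\circ\in\mathfrak Q_T$, whereas you compare against an arbitrary $\qv\in\mathfrak Q_T$ and take the infimum at the end, which removes the $T^{-1}$ slack. Your Step 1 also spells out the nonnegativity $\Rc(\widehat\qv_T)\ge\Rc(\qv^\star)$, which the paper uses without comment.
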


\begin{proof}[Proof of Lemma~\ref{lemma:population_excess_risk_convergence}]
    Write
    \[
        \Delta_T=\sup_{\mathbf q\in\mathfrak Q_T} |\widehat{\mathcal R}_T(\mathbf q)-\mathcal R(\mathbf q)| \xrightarrow{p}0,
    \]
    by Corollary \ref{corollary:joint_uniform_convergence}. By Assumption \ref{assump:quantile_approximation_error}, we can choose deterministic $\qv^\circ_T \in \mathfrak Q_T$ such that
    \begin{equation}\label{eq:q_choice_m1}
        \Rc(\qv_T^\circ) - \Rc(\qv^\star) \le a_T^{\mathrm{joint}} + T^{-1}.
    \end{equation}
    The role of $\mathbf q_T^\circ$ is to provide a near-optimal population risk approximation to the true quantile vector $\mathbf q^\star$ within the model class $\mathfrak Q_T$.
    Decomposing the risk term gives
    \begin{align*}
        0\le{}&\mathcal R(\widehat{\mathbf q}_T)-\mathcal R(\mathbf q^\star)\\
        ={}&[\mathcal R(\widehat{\mathbf q}_T) -\widehat{\mathcal R}_T(\widehat{\mathbf q}_T)] +[\widehat{\mathcal R}_T(\widehat{\mathbf q}_T) -\widehat{\mathcal R}_T(\mathbf q_T^\circ)]
        + [\widehat{\mathcal R}_T(\mathbf q_T^\circ) -\mathcal R(\mathbf q_T^\circ)] +[\mathcal R(\mathbf q_T^\circ)-\mathcal R(\mathbf q^\star)]\\ \le{}&2\Delta_T+r_T+a_T^{\mathrm{joint}}+T^{-1} \xrightarrow{p} 0.
    \end{align*}
    The second term is at most $r_T$ by Assumption \ref{assump:joint_empirical_risk_minimization}. The last convergence follows from Assumptions \ref{assump:quantile_approximation_error} and \ref{assump:joint_empirical_risk_minimization}.
\end{proof}

\begin{corollary}[Coordinatewise population excess risk convergence]
\label{corollary:coordinatewise_excess_risk}
Under the conditions of
Lemma~\ref{lemma:population_excess_risk_convergence}, for every
$\tau\in\mathcal T$,
\[
R_\tau\left(\widehat q_{T,\tau}\right)
-
R_\tau\left(q^\star_\tau\right)
\xrightarrow{p}0.
\]
\end{corollary}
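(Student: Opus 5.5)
The argument splits the joint excess risk into coordinatewise pieces. Each piece is nonnegative, the weights are strictly positive, and the sum tends to zero, so each piece must tend to zero.

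\textbf{Step 1: each coordinate has nonnegative excess risk.} Fix $\tau\in\Tc$ and any measurable $q$ with $\|q\|_\infty\le B_Q$. We show $R_\tau(q)\ge R_\tau(q^\star_\tau)$. Conditioning on $Z_t=z$, the conditional risk $u\mapsto \E[\rho_\tau(\epshat_{t+\ell}-u)\mid Z_t=z]$ is convex. By the standard check-loss argument it is minimized at any conditional $\tau$-quantile, in particular at $q^\star_\tau(z)$. Integrating over $P_Z$ gives $R_\tau(q)-R_\tau(q^\star_\tau)\ge 0$.

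Integrability is not an issue. All losses are bounded by $B_{\epshat}+B_Q$, with $|q^\star_\tau|\le B_{\epshat}$ by Assumption~\ref{assump:residual_bound_regularity}(i). Stationarity lets us take the expectation at a generic $t$.

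Apply this to $\widehat q_{T,\tau}$, which lies in $\Qc_{T,\tau}\subseteq\overline\Qc_\tau$. Since $\widehat q_{T,\tau}$ is random, evaluate $R_\tau$ as a deterministic functional at the realized function. This gives $D_{T,\tau}\coloneqq R_\tau(\widehat q_{T,\tau})-R_\tau(q^\star_\tau)\ge0$ almost surely.

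\textbf{Step 2: conclude from Lemma~\ref{lemma:population_excess_risk_convergence}.} By definition of $\Rc$,
\[
\Rc(\widehat{\qv}_T)-\Rc(\qv^\star)=\sum_{\tau'\in\Tc}w_{\tau'}D_{T,\tau'}\ \ge\ w_\tau D_{T,\tau}\ \ge 0 ,
\]
where dropping the other terms is valid because every $D_{T,\tau'}\ge0$. Therefore
\[
0\le D_{T,\tau}\le w_\tau^{-1}\{\Rc(\widehat{\qv}_T)-\Rc(\qv^\star)\}.
\]
The right-hand side tends to $0$ in probability by Lemma~\ref{lemma:population_excess_risk_convergence}, and $w_\tau>0$ is fixed. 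Hence $\P(D_{T,\tau}>\varepsilon)\le \P(\Rc(\widehat{\qv}_T)-\Rc(\qv^\star)>w_\tau\varepsilon)\to0$ for every $\varepsilon>0$.

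\textbf{Main obstacle.} The only substantive point is Step 1, namely that $q^\star_\tau$ minimizes $R_\tau$ over all bounded functions, not merely over the model class. Without this, individual coordinates could have negative excess that cancels positive excess elsewhere. The plan is to verify it through the pointwise conditional identity
\[
\E[\rho_\tau(\epshat-u)-\rho_\tau(\epshat-q^\star)\mid Z=z]=\int_{q^\star}^{u}\{F_{\epshat}(s\mid z)-\tau\}\,ds ,
\]
which is nonnegative for every $u$ by the definition of $q^\star_\tau(z)$ as the generalized inverse. The integral should be read as oriented, so the case $u<q^\star$ is included.

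The remaining care is measure-theoretic: the functional $R_\tau$ is evaluated at a random function. This is handled by noting that the inequality holds for every fixed $q$, and then substituting the realization of $\widehat q_{T,\tau}$.
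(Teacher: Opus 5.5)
Your proposal is correct and follows the paper's proof. Both show that each coordinate's excess risk is nonnegative, bound $R_\tau(\widehat q_{T,\tau})-R_\tau(q^\star_\tau)$ by $w_\tau^{-1}\{\Rc(\widehat{\qv}_T)-\Rc(\qv^\star)\}$, and invoke Lemma~\ref{lemma:population_excess_risk_convergence}; the one difference is that the paper asserts the nonnegativity ``by definition,'' whereas you justify it through the conditional check-loss identity $\int_{q^\star_\tau(z)}^{u}\{F_{\epshat}(s\mid z)-\tau\}\,ds\ge 0$, which the paper only derives later (as $\Gamma_\tau$) in the proof of Lemma~\ref{lemma:quantile_consistency}.
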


\begin{proof}[Proof of Corollary~\ref{corollary:coordinatewise_excess_risk}]

    For every $\tau\in\mathcal T$, by definition,
    \[
        R_\tau(\widehat q_{T,\tau}) - R_\tau(q_\tau^\star) \ge 0.
    \]
    Since $w_\tau > 0$ is fixed,
    \begin{align*}
        0 \le R_\tau(\widehat q_{T,\tau})-R_\tau(q_\tau^\star) &\le\frac{1}{w_\tau}\sum_{\tau'\in\mathcal T}w_{\tau'} \{R_{\tau'}(\widehat q_{T,\tau'})-R_{\tau'}(q_{\tau'}^\star)\}\\
        &=\frac{\mathcal R(\widehat{\mathbf q}_T)-\mathcal R(\mathbf q^\star)} {w_\tau}\xrightarrow{p} 0,
    \end{align*}
    by Lemma~\ref{lemma:population_excess_risk_convergence}.
\end{proof}

\begin{lemma}[Uniform consistency of the estimated conditional quantile functions] \label{lemma:quantile_consistency} 
Under the conditions of
Corollary~\ref{corollary:coordinatewise_excess_risk} and Assumptions~\ref{assump:residual_bound_regularity}--\ref{assump:model_class},
$$
\max_{\tau\in\mathcal{T}} \sup_{z\in\mathcal Z} \left| \widehat q_{T,\tau}(z)-q^\star_\tau(z) \right| \xrightarrow{p}0. 
$$ 
\end{lemma}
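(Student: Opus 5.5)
The plan is to turn the coordinatewise excess-risk convergence of Corollary~\ref{corollary:coordinatewise_excess_risk} into sup-norm convergence. I will do this with a deterministic ``identifiability'' bound: any function in $\overline{\Qc}_\tau$ that is uniformly $\varepsilon$-far from $q^\star_\tau$ must have population excess risk at least some $\kappa(\varepsilon)>0$. Since $\Tc$ is finite, it suffices to treat one fixed $\tau$ and take a union bound at the end. Given the bound, on the event $\{\sup_z|\widehat q_{T,\tau}(z)-q^\star_\tau(z)|>\varepsilon\}$ we have $R_\tau(\widehat q_{T,\tau})-R_\tau(q^\star_\tau)\ge\kappa(\varepsilon)$. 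That event therefore has probability tending to zero.

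\textbf{Step 1 (pointwise lower bound via Knight's identity).} For $q$ bounded, write $\Delta(z)=q(z)-q^\star_\tau(z)$. Conditioning on $Z_t=z$ and using $\rho_\tau(u-a)-\rho_\tau(u-b)=\int_b^a(\ind\{u\le s\}-\tau)\,ds$ gives
\[
R_\tau(q)-R_\tau(q^\star_\tau)=\E\Big[\int_0^{\Delta(Z_t)}\big(F_{\epshat}(q^\star_\tau(Z_t)+s\mid Z_t)-\tau\big)\,ds\Big].
\]
Boundedness (Assumptions~\ref{assump:residual_bound_regularity}(i), \ref{assump:model_class}) justifies the exchange of integrals. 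Uniqueness of the quantile, together with the density bound $f_{\epshat}\ge c_f$ on $[q^\star_\tau-\delta,q^\star_\tau+\delta]$, gives $F_{\epshat}(q^\star_\tau(z)+s\mid z)=\tau$. It follows that $|F_{\epshat}(q^\star_\tau(z)+s\mid z)-\tau|\ge c_f\min(|s|,\delta)$, with the sign of $s$, because $F$ is monotone beyond $\pm\delta$. Hence the integrand is nonnegative and
\[
R_\tau(q)-R_\tau(q^\star_\tau)\ge \E\big[g(|\Delta(Z_t)|)\big],\qquad g(x)=\tfrac{c_f}{2}\min(x,\delta)^2,
\]
where $g$ is nondecreasing and positive for $x>0$.

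\textbf{Step 2 (from sup error to mass).} This is the main obstacle: Step 1 only controls an average over $P_Z$, so I need uniform equicontinuity plus a uniform lower bound on ball masses. By the proof of Lemma~\ref{lemma:uniform_convergence_empirical_risk}, $\overline{\Qc}_\tau$ is compact in $\|\cdot\|_\infty$ on the compact set $\Zc$. By Arzel\`a--Ascoli it is therefore uniformly equicontinuous, and $q^\star_\tau$ is continuous on the compact $\Zc$, hence uniformly continuous. So for each $\varepsilon>0$ there is $r>0$, independent of $q\in\overline{\Qc}_\tau$, such that $|\Delta(z_0)|>\varepsilon$ implies $|\Delta(z)|>\varepsilon/2$ for all $z\in\Zc$ with $\|z-z_0\|<r$.

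Next, cover $\Zc$ by finitely many balls $B(z_j,r/2)$ with centers $z_j\in\Zc$. Each has $P_Z(B(z_j,r/2))>0$, since $z_j$ lies in the support. Set $m(r)=\min_j P_Z(B(z_j,r/2))>0$. Any $z_0\in\Zc$ lies in some $B(z_j,r/2)$, which is contained in $B(z_0,r)$, so $P_Z(B(z_0,r))\ge m(r)$.

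\textbf{Step 3 (conclusion).} Combining Steps 1 and 2, every $q\in\overline{\Qc}_\tau$ with $\|q-q^\star_\tau\|_\infty>\varepsilon$ satisfies
\[
R_\tau(q)-R_\tau(q^\star_\tau)\ge m(r)\,g(\varepsilon/2)=:\kappa(\varepsilon)>0.
\]
Since $\widehat q_{T,\tau}\in\Qc_{T,\tau}\subseteq\overline{\Qc}_\tau$, we get
\[
\P\big(\|\widehat q_{T,\tau}-q^\star_\tau\|_\infty>\varepsilon\big)\le\P\big(R_\tau(\widehat q_{T,\tau})-R_\tau(q^\star_\tau)\ge\kappa(\varepsilon)\big)\to0
\]
by Corollary~\ref{corollary:coordinatewise_excess_risk}. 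A union bound over the finitely many $\tau\in\Tc$ gives the maximum over $\tau$.
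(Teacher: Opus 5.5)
Your proof is correct. Step 1 is the paper's bound \eqref{eq:gamma_lower_bound} on the conditional excess loss. There is one slip: what uniqueness plus the density bound gives is $F_{\epshat}(q^\star_\tau(z)\mid z)=\tau$, i.e.\ the identity at $s=0$, not at $q^\star_\tau(z)+s$. The sign-dependent bound you derive from it is then right.

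The two routes differ in how the averaged bound is upgraded to sup-norm control.

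The paper argues non-constructively. It first shows pointwise identification: $D_\tau(q)=0$ forces $q=q^\star_\tau$ on $\Zc$, using continuity at a single point and positive $P_Z$-mass of neighborhoods of support points. It then notes that $D_\tau$ is $1$-Lipschitz in $\|\cdot\|_\infty$ and that $\{q\in\overline{\Qc}_\tau:\|q-q^\star_\tau\|_\infty\ge\varepsilon\}$ is compact, so $D_\tau$ has a positive infimum there.

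You instead make the separation uniform by hand. Equicontinuity of the class gives a radius $r_\varepsilon$ that does not depend on $q$; you could also get this directly from uniform continuity of $(\theta,z)\mapsto q_\theta(z)$ on the compact $\overline\Theta\times\Zc$, without Arzel\`a--Ascoli. The finite cover then gives a uniform ball mass, and together these yield the explicit constant $\kappa(\varepsilon)=\tfrac{c_f}{2}\,m(r_\varepsilon)\min\{\varepsilon/2,\delta\}^2$.

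What each buys:
\begin{itemize}
\item The paper's version is shorter and is reused verbatim for DSCP-QFE on $\Zc\times\Ic$. Yours extends there too, with full support of $\nu$ supplying the mass bound for product neighborhoods.
\item Yours shows exactly how the separation depends on the modulus of continuity of the model class and on the small-ball behaviour of $P_Z$. With quantitative control of those two ingredients, it would turn a rate for the excess risk into a rate for the sup-norm error, which the compactness argument cannot do.
\end{itemize}
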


\begin{proof}[Proof of Lemma~\ref{lemma:quantile_consistency}]
    Fix $\tau\in\mathcal T$. For $q\in\overline{\mathcal Q}_\tau$, write
    \begin{equation*}
    D_\tau(q) = R_\tau(q)-R_\tau(q^\star_\tau).
    \end{equation*}
    For $a \in \R$ and $z \in \Zc$, the conditional excess quantile loss is
    \begin{align*}
    \Gamma_\tau(a,z) &= \E\left[ \rho_\tau(\epshat_{t+\ell}-a) -\rho_\tau\{\epshat_{t+\ell}-q_\tau^\star(z)\} \mid Z_t=z\right]\\
    &=\int_{q_\tau^\star(z)}^a\{F_{\epshat}(u\mid z)-\tau\}\,du. \end{align*}
    Local absolute continuity in Assumption \ref{assump:residual_bound_regularity}(ii) gives
    \begin{equation*}
        F_{\epshat}\left(q^\star_\tau(z)\mid z\right)=\tau.  
    \end{equation*}
    Suppose first $a>q^\star_\tau(z)$ and let $d = \min \{ a - q^\star_\tau(z) , \delta \}$, where $\delta$ is the neighborhood radius in Assumption \ref{assump:residual_bound_regularity}(ii). Monotonicity of the CDF and its local density lower bound imply
    \begin{align*}
        \Gamma_\tau(a,z)
        &\ge \int_{q^\star_\tau(z)}^{q^\star_\tau(z) + d} \left[ F_{\epshat} (u \mid z) - \tau \right] du\\
        &=\int_{q_\tau^\star(z)}^{q_\tau^\star(z)+d} \int_{q_\tau^\star(z)}^u f_{\epshat}(v\mid z) dv du\\
        &\ge \int_{q^\star_\tau(z)}^{q^\star_\tau(z) + d} c_f \left( u - q^\star_\tau(z) \right) du
        = \frac{c_f d^2}{2}.
    \end{align*}
    If $a < q^\star_\tau(z)$, put $d = \min \{q^\star_\tau(z) - a, \delta\}$. The same argument gives
    \begin{align*}
        \Gamma_\tau(a,z)
        &= \int_a^{q^\star_\tau(z)} \left[ \tau - F_{\epshat} (u \mid z) \right] du\\
        &\ge \int_{q_\tau^\star(z) - d}^{q_\tau^\star(z)} \int_u^{q_\tau^\star(z)} f_{\epshat}(v\mid z) dv du\\
        &\ge \int_{q^\star_\tau(z) - d}^{q^\star_\tau(z)} c_f \left( q^\star_\tau(z) - u \right) du
        = \frac{c_f d^2}{2}.
    \end{align*}
    The case $a = q^\star_\tau(z)$ gives zero on both sides. Thus, for $a\in\R$ and $z\in\Zc$, 
    \begin{equation}\label{eq:gamma_lower_bound}
        \Gamma_\tau(a,z)
        \geq
        \frac{c_f}{2}
        \min\left\{
        |a-q^\star_\tau(z)|,\delta
        \right\}^{2}.
    \end{equation}
    In particular,
    \[
        D_\tau(q) = \E \Gamma_\tau(q(Z_t),Z_t) \ge \frac{c_f}{2} \E\min\{|q(Z_t)-q_\tau^\star(Z_t)|,\delta\}^2 \ge 0,
    \]
    and therefore
    \[
        D_\tau(q)=0 \quad \implies \quad q(Z_t)=q_\tau^\star(Z_t) \;\text{ a.s.}
    \]
    For $q\in\overline{\Qc}_\tau$, both $q$ and $q_\tau^\star$ are continuous by Assumptions \ref{assump:model_class} and \ref{assump:residual_bound_regularity}(ii), respectively. If $q(z_0) \ne q_\tau^\star(z_0)$ at some $z_0 \in \Zc$, continuity provides a relative neighborhood $N$ of $z_0$ and a constant $c>0$ with $|q(z)-q_\tau^\star(z)| \ge c$ on $N$. Since $\Zc = \operatorname{supp}(P_Z)$, $P_Z(N)>0$, and
    \[
        D_\tau(q)\ge\frac{c_f}{2}\min\{c,\delta\}^2P_Z(N)>0.
    \]
    Consequently,
    \[
        D_\tau(q)=0\quad\implies\quad q=q_\tau^\star \;\text{ on }\Zc.
    \]
    
    To translate population excess-risk convergence into uniform convergence of the estimated quantile function, we show that any candidate function that remains a fixed distance away from $q_\tau^\star$ in the uniform norm must incur a strictly positive excess population risk. Given $\varepsilon>0$, define the compact set
    \begin{equation*}
        \mathcal A_{\tau,\varepsilon} = \left\{ q\in\overline{\Qc}_\tau : \|q-q^\star_\tau\|_\infty \geq \varepsilon \right\}.
    \end{equation*}
    This set contains all candidate functions in $\overline{\mathcal Q}_\tau$ that are not uniformly close to $q^\star_{\tau}$. If $\mathcal A_{\tau,\varepsilon}$ is empty, then $\P \left( \|\hat{q}_{T,\tau} - q^\star_{\tau} \|_\infty \geq \varepsilon \right) = 0$.
    Otherwise, the inequality
    \[
        |D_\tau(q)-D_\tau(q')| =|R_\tau(q)-R_\tau(q')|\le\|q-q'\|_\infty
    \]
    shows that $D_\tau$ is continuous. Since $D_\tau(q)>0$ for every $q\in\mathcal A_{\tau,\varepsilon}$, continuity and compactness imply
    \[
        c_{\tau,\varepsilon} = \inf_{q\in\mathcal A_{\tau,\varepsilon}} D_\tau(q) > 0.
    \]
    Since $\widehat q_{T,\tau} \in \overline{\mathcal Q}_\tau$,
    \begin{align*}
        \P(\|\widehat q_{T,\tau}-q_\tau^\star\|_\infty \ge \varepsilon) \le \P\{D_\tau(\widehat q_{T,\tau}) \ge c_{\tau,\varepsilon}\} \longrightarrow 0,
    \end{align*}
    by Corollary \ref{corollary:coordinatewise_excess_risk}. Since $|\Tc| < \infty$, the union bound proves the claim.
\end{proof}

\begin{theorem}[Asymptotic conditional coverage]\label{thm:method1_conditional_coverage}
    Suppose Assumptions \ref{assump:stationary_mixing_process}--\ref{assump:joint_empirical_risk_minimization} hold. Fix $\alpha_l, \alpha_h \in \Tc$ with $\alpha_h - \alpha_l = 1 - \alpha$. Then, for any deterministic sequence of $t_T \ge T + \ell$, as $T \to \infty$,
    \[
    \left|\P\left(Y_{t_T+\ell} \in \widehat C_T^{\alpha,\ell}(Z_{t_T}, \widehat f(Z'_{t_T})) \mid Z_{t_T}\right)-(1-\alpha) \right| \xrightarrow{p} 0.
    \]
\end{theorem}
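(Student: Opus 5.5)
The plan is to reduce the statement to the uniform quantile consistency of Lemma~\ref{lemma:quantile_consistency}. We compare the fitted coverage event with the oracle coverage event built from the true conditional quantiles. The comparison is carried out in $L^1$ over the conditioning variable $Z_{t_T}$, which sidesteps the dependence between the fitted heads (built from $\{W_i\}_{i\le T}$) and the test pair $W_{t_T}$.

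\emph{Rewriting the coverage events.} Since $\widehat Y_{t_T+\ell}=\widehat f(Z'_{t_T})$ and $\epshat_{t_T+\ell}=Y_{t_T+\ell}-\widehat Y_{t_T+\ell}$, the fitted coverage event is
\[
\widehat A_T=\{\widehat q_{T,\alpha_l}(Z_{t_T})\le \epshat_{t_T+\ell}\le \widehat q_{T,\alpha_h}(Z_{t_T})\}.
\]
The oracle event is $A_T^\star=\{q^\star_{\alpha_l}(Z_{t_T})\le \epshat_{t_T+\ell}\le q^\star_{\alpha_h}(Z_{t_T})\}$. By stationarity (Assumption~\ref{assump:stationary_mixing_process}), $W_{t_T}$ has the law of $W_t$. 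By the local density in Assumption~\ref{assump:residual_bound_regularity}(ii), $F_{\epshat}(\cdot\mid z)$ is continuous at both $q^\star_{\alpha_l}(z)$ and $q^\star_{\alpha_h}(z)$ and equals $\alpha_l$ and $\alpha_h$ there. Hence
\[
\P(A_T^\star\mid Z_{t_T})=F_{\epshat}(q^\star_{\alpha_h}\mid Z_{t_T})-F_{\epshat}(q^\star_{\alpha_l}\mid Z_{t_T})=1-\alpha \quad\text{a.s.}
\]

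\emph{Bounding the symmetric difference.} Fix $\eta\in(0,\delta)$. Let
\[
M_T=\max_{\tau\in\Tc}\sup_{z\in\Zc}|\widehat q_{T,\tau}(z)-q^\star_\tau(z)|.
\]
On $\{M_T\le\eta\}$, the events $\widehat A_T$ and $A_T^\star$ can differ only if $\epshat_{t_T+\ell}$ lies within $\eta$ of $q^\star_{\alpha_l}(Z_{t_T})$ or of $q^\star_{\alpha_h}(Z_{t_T})$. This is a deterministic implication on that event, so no independence between $\widehat{\qv}_T$ and $W_{t_T}$ is needed. Therefore
\[
\P(\widehat A_T\triangle A_T^\star)\le \P(M_T>\eta)+\sum_{\tau\in\{\alpha_l,\alpha_h\}}\E\big[F_{\epshat}(q^\star_\tau(Z)+\eta\mid Z)-F_{\epshat}(q^\star_\tau(Z)-\eta\mid Z)^{-}\big],
\]
where the expectation is over the stationary marginal $P_Z$ and $F^-$ denotes the left limit. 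The first term tends to $0$ by Lemma~\ref{lemma:quantile_consistency}, since $\alpha_l,\alpha_h\in\Tc$. Each summand in the second term is a fixed quantity $g(\eta)$ independent of $T$, and $g(\eta)\to0$ as $\eta\downarrow0$ by dominated convergence, because the conditional CDF is continuous at the true quantiles.

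\emph{Conclusion.} Since $\P(\widehat A_T\mid Z_{t_T})-\P(A_T^\star\mid Z_{t_T})=\E[\ind_{\widehat A_T}-\ind_{A_T^\star}\mid Z_{t_T}]$, we get
\[
\E\big|\P(\widehat A_T\mid Z_{t_T})-(1-\alpha)\big|\le \P(\widehat A_T\triangle A_T^\star)\le \P(M_T>\eta)+2g(\eta).
\]
Letting $T\to\infty$ and then $\eta\downarrow0$ gives $L^1$ convergence, and Markov's inequality yields convergence in probability.

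The main obstacle is the dependence just mentioned: $\widehat{\qv}_T$ is random and may be correlated with $W_{t_T}$, especially when $t_T=T+\ell$. The route above is meant to avoid it by using only the \emph{uniform-in-$z$} error bound $M_T$, together with the marginal law of $W_{t_T}$ supplied by stationarity. The step needing the most care is verifying that the conditional probability given $Z_{t_T}$, which also integrates over the training data, is still controlled by this unconditional symmetric-difference bound.
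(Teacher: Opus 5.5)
Your proposal is correct and matches the paper's proof step for step. You compare with the oracle events (exact conditional coverage $1-\alpha$ by stationarity and continuity of the conditional CDF at the true quantiles), and on $\{d_T\le\eta\}$ you place the symmetric difference inside the event that the residual lies within $\eta$ of an oracle endpoint, whose probability $b_\tau(\eta)$ is free of $T$ and tends to zero as $\eta\downarrow 0$; you then finish with the $L^1$ bound $\mathbb{E}\bigl|\P(\widehat A_T\mid Z_{t_T})-(1-\alpha)\bigr|\le \P(\widehat A_T\triangle A_T^\star)$ and Markov's inequality. The differences are only cosmetic: you take the uniform error over all of $\Tc$ rather than the two endpoint levels, and you justify $b_\tau(\eta)\to 0$ by dominated convergence rather than continuity from above.
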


\begin{proof}[Proof of Theorem~\ref{thm:method1_conditional_coverage}]
    For the fixed levels $\alpha_l,\alpha_h$ in the theorem, define the oracle interval
    \[
        C^{\star,\alpha,\ell}(z,\widehat y) =[\widehat y+q_{\alpha_l}^\star(z), \widehat y+q_{\alpha_h}^\star(z)].
    \]
    Local absolute continuity in Assumption \ref{assump:residual_bound_regularity} at the two quantiles implies
    \begin{align}\label{eq:endpoint_identity}
        F_{\epshat}(q_\tau^\star(z)\mid z) = F_{\epshat}(q_\tau^\star(z)- \mid z) = \tau, \quad \tau\in\{\alpha_l,\alpha_h\}.
    \end{align}
    With the same point prediction $\widehat Y_{t_T+\ell} =\widehat f(Z'_{t_T})$ in both estimated and oracle intervals, define the coverage events
    \begin{align*}
        \widehat A_T &= \left\{Y_{t_T+\ell}\in \widehat C_T^{\alpha,\ell}(Z_{t_T},\widehat Y_{t_T+\ell})\right\} 
        = \{\widehat q_{T,\alpha_l}(Z_{t_T}) \le\epshat_{t_T+\ell}\le\widehat q_{T,\alpha_h}(Z_{t_T})\},\\
        A_T^\star &= \left\{Y_{t_T+\ell}\in C^{\star,\alpha,\ell}(Z_{t_T},\widehat Y_{t_T+\ell})\right\}
        =\{q_{\alpha_l}^\star(Z_{t_T}) \le\epshat_{t_T+\ell}\le q_{\alpha_h}^\star(Z_{t_T})\}.
    \end{align*}
    Stationarity and \eqref{eq:endpoint_identity} give
    \begin{align*}
        \P(A_T^\star\mid Z_{t_T}) &= F_{\epshat}(q_{\alpha_h}^\star(Z_{t_T})\mid Z_{t_T}) - F_{\epshat}(q_{\alpha_l}^\star(Z_{t_T})-\mid Z_{t_T})\\
        &=\alpha_h-\alpha_l
        = 1-\alpha \quad \text{ a.s.}
    \end{align*}
    Set
    \[
        d_T=\max_{\tau\in\{\alpha_l,\alpha_h\}} \|\widehat q_{T,\tau}-q_\tau^\star\|_\infty \xrightarrow{p} 0
    \]
    by Lemma \ref{lemma:quantile_consistency}. Given $\eta \in (0, \delta)$, on the event $\{d_T\leq\eta\}$, both estimated quantile endpoints differ from their oracle counterparts by at most $\eta$. Hence, the estimated and oracle coverage events can differ only if the residual lies within $\eta$ of one of the two oracle endpoints. Therefore,
    \[
        \widehat A_T\triangle A_T^\star \subseteq\{d_T>\eta\} \cup\bigcup_{\tau\in\{\alpha_l,\alpha_h\}} \{ |\epshat_{t_T+\ell}-q_\tau^\star(Z_{t_T})|\le\eta \},
    \]
    where $\triangle$ denotes the symmetric difference. Thus union bound gives
    \begin{equation}\label{eq:events_diff_bound}
        \P(\widehat A_T\triangle A_T^\star) \le \P(d_T>\eta) + \sum_{\tau\in\{\alpha_l,\alpha_h\}} \P \{ |\epshat_{t_T+\ell}-q_\tau^\star(Z_{t_T})| \le \eta \}.
    \end{equation}
    For either endpoint, stationarity gives the bound independent of $T$ as
    \begin{equation}\label{eq:residual_oracle_diff}
        \P\{|\epshat_{t_T+\ell}-q_\tau^\star(Z_{t_T})|\le\eta\} = \P\{|\epshat_{1+\ell}-q_\tau^\star(Z_1)|\le\eta\} \eqqcolon b_\tau(\eta).
    \end{equation}
    By continuity from above of probability and local absolute continuity in Assumption~\ref{assump:residual_bound_regularity}(ii),
    \begin{align*}
        \lim_{\eta \downarrow 0}b_\tau(\eta)
        = \P \{\epshat_{1+\ell} = q_\tau^\star(Z_1)\}
        = \E \left[ F_{\epshat}(q_\tau^\star(Z_1)\mid Z_1) -F_{\epshat}(q_\tau^\star(Z_1)-\mid Z_1) \right] = 0.
    \end{align*}
    Combining \eqref{eq:events_diff_bound} and \eqref{eq:residual_oracle_diff},
    \begin{equation}\label{eq:limsup_events_bound}
        \limsup_{T\to\infty}\P(\widehat A_T\triangle A_T^\star) \le\limsup_{T\to\infty}\P(d_T>\eta) +\sum_{\tau\in\{\alpha_l,\alpha_h\}}b_\tau(\eta) =\sum_{\tau\in\{\alpha_l,\alpha_h\}}b_\tau(\eta).
    \end{equation}
    Letting $\eta\downarrow0$ yields $\P(\widehat A_T\triangle A_T^\star)\to 0$. The exact oracle coverage now gives
    \begin{align*}
        |\P(\widehat A_T\mid Z_{t_T})-(1-\alpha)| &= |\E[\ind\{\widehat A_T\}-\ind\{A_T^\star\} \mid Z_{t_T}]|\\
        &\le \E [|\ind\{\widehat A_T\}-\ind\{A_T^\star\}| \mid Z_{t_T}]\\
        &= \P(\widehat A_T\triangle A_T^\star\mid Z_{t_T}).
    \end{align*}
    Taking expectations and applying Markov's inequality gives
    \begin{align*}
        &\P \left( |\P(\widehat A_T\mid Z_{t_T})-(1-\alpha)|>\varepsilon \right)\\
        &\le \frac{1}{\varepsilon} \E |\P(\widehat A_T\mid Z_{t_T})-(1-\alpha)| 
        \le \frac{\P(\widehat A_T\triangle A_T^\star)}{\varepsilon} \longrightarrow 0,
    \end{align*}
    for every $\varepsilon > 0$. This proves the claim.
\end{proof}

\subsection{DSCP-QFE}\label{subsec:theory-qf}

DSCP-QFE estimates one function of the context and the quantile level. Unless otherwise mentioned, we retain the notation of the preceding subsection.

Let $\Ic = [\underline\tau, \overline\tau] \subset (0,1)$
be a compact interval of admissible quantile levels. While the inference can be conducted at any quantile levels in $(0,1)$, we introduce $\mathcal I$ for theoretical analysis.

For $(z,\tau)\in\mathcal Z\times\mathcal I$, define the true quantile function
$$
q^\star(z,\tau):=q^\star_\tau(z)
=\inf\{u\in\mathbb R:F_{\hat\epsilon}(u\mid z)\geq\tau\}.
$$

For a generic quantile function, $q:\Zc\times\Ic\to\R$, we write $q(z,\tau) = q_\tau(z)$. The estimated function is denoted by $\widehat q_T$, and
\(
\widehat q_{T,\tau}(z) = \widehat q_T(z,\tau).
\)
Thus, the notation for each estimated conditional quantile agrees with that used for conditional quantile regression. The architecture is represented by
\[
q_\theta(z,\tau)
=g_{\theta''}\left(\operatorname{Enc}_{\theta'}(z),\tau\right),
\quad \theta=(\theta',\theta''),
\]
Let $\mathfrak Q_T$ denote
the class of jointly feasible candidate functions, so that
$\widehat q_T\in\mathfrak Q_T$.

Let $\nu$ be a probability distribution on $\Ic$. The integrated
population risk is
\begin{equation}
\label{eq:qf-population-risk}
\mathcal R(q)=\int_{\Ic} R_\tau(q(\cdot, \tau))\nu(d\tau) = \E \left[\rho_U \{ \hat\epsilon_{t+\ell}-q(Z_t,U)\}\right],
\end{equation}
where $U\sim\nu$ is independent of the process  $\{ (Z_t,\hat\epsilon_{t+\ell}) \}_{t\geq1}$. The true conditional quantile function minimizes this risk over measurable candidates for which the risk is finite.

For independently sampled quantile levels $U_1,\ldots,U_T$, the randomized empirical risk is
\begin{equation}
\label{eq:qf-randomized-risk}
\widehat{\Rc}_T(q) = \frac{1}{T}\sum_{t=1}^T \rho_{U_t}\{\epshat_{t+\ell}-q(Z_t,U_t)\},
\end{equation}
which uses one sampled quantile level per sample. A fixed number of levels per pair can be treated by averaging their losses. If levels are resampled during optimization, the corresponding integrated empirical objective is
\begin{equation}
\label{eq:qf-integrated-risk}
\widetilde{\Rc}_T(q) = \frac{1}{T}\sum_{t=1}^T\int_{\Ic}
\rho_\tau\{\epshat_{t+\ell}-q(Z_t,\tau)\}\nu(d\tau).
\end{equation}

\begin{assumption}[Quantile-level sampling]
\label{assump:qf-sampling}
The variables $U_t$, $t \ge 1$, are independent and identically distributed with law $\nu$, independent of the process $\{W_t\}_{t\geq1}$. The probability law $\nu$ has full support on $\Ic$, i.e., $\operatorname{supp}(\nu) = \Ic$.
\end{assumption}
Full support is used to identify the target at every level by continuity.

\begin{assumption}[Residuals and conditional quantile regularity]
\label{assump:qf-regularity}

The following conditions hold.

\begin{enumerate}

\item[(i)] (Bounded residuals)
There exists a constant $B_{\epshat}<\infty$ such that
\(
|\epshat_{t+\ell}| \leq B_{\epshat}
\)
almost surely for all $t\geq 1$.

\item[(ii)] (Regularity around the conditional quantiles)
For each $(z,\tau)\in\Zc \times \Ic$, the true conditional $\tau$-quantile $q^\star(z, \tau)$ is unique and jointly continuous on $\Zc \times \Ic$.
Moreover, there exist constants $c_f>0$ and $\delta>0$ such that, for every $\tau\in\mathcal I$ and every $z\in\mathcal Z$, the conditional distribution of $\epshat_{t+\ell}$ given $Z_t=z$ admits a density $f_{\epshat}(\cdot\mid z)$ on
\(
\left[ q^\star_\tau(z)-\delta,\, q^\star_\tau(z)+\delta \right],
\)
satisfying
\[
f_{\epshat}(u \mid z) \geq c_f\; \text{ if }\; \abs{u - q^\star_\tau(z)} \le \delta, \quad z \in \Zc,\; \tau \in \Ic.
\]

\end{enumerate}
\end{assumption}

\begin{assumption}[Model classes]
\label{assump:qf-model}
The classes $\mathfrak Q_T$ are deterministic, nonempty, and contained in a fixed class
\[
\overline{\mathfrak Q} = \{q_\theta:\theta\in\overline\Theta\} \subseteq C(\mathcal Z\times\mathcal I).
\]
The parameter space $\overline\Theta$ and $\mathcal Z = \operatorname{supp}(P_Z)$ are compact. The mapping
$
(\theta,z,\tau)\mapsto q_\theta(z,\tau)
$
is jointly continuous. Moreover, there exists $B_Q<\infty$ such that
$$
\sup_{\theta\in\overline\Theta} \sup_{(z,\tau)\in\mathcal Z\times\mathcal I} |q_\theta(z,\tau)| \leq B_Q.
$$
\end{assumption}

\begin{assumption}[Joint approximation]
\label{assump:qf-approximation}
The integrated approximation error satisfies, as $T \to \infty$,
$$
a_T\coloneqq\inf_{q\in\mathfrak Q_T} \{\mathcal R(q)-\mathcal R(q^\star)\} \longrightarrow 0.
$$
\end{assumption}

\begin{assumption}[Approximate empirical risk minimization]
\label{assump:qf-optimization}
The estimator $\widehat q_T \in \mathfrak Q_T$ satisfies
\[
\widehat{\mathcal R}_T(\widehat q_T)
\leq\inf_{q\in\mathfrak Q_T}\widehat{\mathcal R}_T(q)+r_T,
\]
where $r_T \ge 0$ and $r_T\xrightarrow{p}0$.
\end{assumption}

The error $r_T$ accommodates optimization and algorithmic randomness. Exact empirical-risk minimization is the special case $r_T=0$. If the integrated empirical objective \eqref{eq:qf-integrated-risk} is optimized instead, impose this same inequality with $\widetilde{\mathcal R}_T$ in place of $\widehat{\mathcal R}_T$. Then, the subsequent results hold under that version of the assumption.

\begin{lemma}[Uniform convergence of the empirical risk]
\label{lem:qf-uniform-risk}
Under Assumptions \ref{assump:stationary_mixing_process}, \ref{assump:qf-sampling}, \ref{assump:qf-regularity}(i), and \ref{assump:qf-model},
$$
\sup_{q\in\mathfrak Q_T} \left|\widehat{\mathcal R}_T(q)-\mathcal R(q)\right| \xrightarrow{p} 0.
$$
The same conclusion holds for $\widetilde{\mathcal R}_T$.
\end{lemma}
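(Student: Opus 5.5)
We follow the proof of Lemma~\ref{lemma:uniform_convergence_empirical_risk}: establish pointwise convergence in probability for each fixed $q\in\overline{\mathfrak Q}$ by a variance computation under mixing, then upgrade to uniform convergence through a finite $\|\cdot\|_\infty$-net of the compact class $\overline{\mathfrak Q}$. Compactness of $\overline{\mathfrak Q}$ in $C(\Zc\times\Ic)$ follows exactly as before. By Assumption~\ref{assump:qf-model}, $(\theta,z,\tau)\mapsto q_\theta(z,\tau)$ is jointly continuous on the compact set $\overline\Theta\times\Zc\times\Ic$. Hence $\theta\mapsto q_\theta$ is continuous into the sup norm over $\Zc\times\Ic$, and $\overline{\mathfrak Q}$ is a continuous image of a compact set.

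\textbf{Pointwise step.} Fix $q\in\overline{\mathfrak Q}$ and set $\widetilde W_t=(W_t,U_t)$ and $L_q(\widetilde W_t)=\rho_{U_t}\{\epshat_{t+\ell}-q(Z_t,U_t)\}$. These satisfy $0\le L_q\le B_{\epshat}+B_Q$, since $\rho_\tau(u)\le|u|$ for every $\tau\in(0,1)$. By Assumption~\ref{assump:qf-sampling}, $\{U_t\}$ is i.i.d.\ and independent of $\{W_t\}$. So $\{\widetilde W_t\}$ is strictly stationary, and $\E L_q(\widetilde W_t)=\mathcal R(q)$ by Fubini and \eqref{eq:qf-population-risk}.

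The key point is mixing of the augmented process. For past and future $\sigma$-fields of the form $\sigma(W)\vee\sigma(U)$ with all four pieces independent, the strong mixing coefficient is bounded by $\alpha_W(k)+\alpha_U(k)=\alpha_W(k)$. This is the standard fact that mixing coefficients of independent processes are subadditive (Bradley), and an i.i.d.\ sequence has $\alpha_U(k)=0$ for $k\ge1$. If one prefers to avoid that fact, a direct covariance computation suffices. For $i\ne j$, condition on $(U_i,U_j)$; by independence,
\[
\Cov(L_q(\widetilde W_i),L_q(\widetilde W_j))=\E\big[\Cov(L_q(W_i,u),L_q(W_j,v))\big|_{(u,v)=(U_i,U_j)}\big],
\]
since the conditional means $m(u)=\E L_q(W_i,u)$ and $m(v)$ are independent functions of independent variables and contribute no cross term. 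For fixed $(u,v)$, the inner covariance is bounded by $4B_X^2\alpha_W(|i-j|)$ by \eqref{eq:qr_cov_ineq_mixing}. This gives the same bound as \eqref{eq:covariance_bound}, and the Ces\`aro argument plus Chebyshev then yields $\widehat{\mathcal R}_T(q)-\mathcal R(q)\xrightarrow{p}0$.

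For $\widetilde{\mathcal R}_T$ the argument is simpler. The summand $\bar L_q(W_t)=\int_{\Ic}\rho_\tau\{\epshat_{t+\ell}-q(Z_t,\tau)\}\nu(d\tau)$ is a bounded measurable function of $W_t$ alone, by joint measurability of the integrand. Lemma~\ref{lemma:uniform_convergence_empirical_risk}'s pointwise argument therefore applies verbatim, with mean $\mathcal R(q)$.

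\textbf{Uniform step.} The $1$-Lipschitz property \eqref{eq:1lipschitz_quantile_loss}, applied at each sampled level $U_t$ (respectively inside the $\nu$-integral), gives
\[
|\widehat{\mathcal R}_T(q)-\widehat{\mathcal R}_T(q')|\le\|q-q'\|_\infty,
\]
and the same bound holds for $\widetilde{\mathcal R}_T$ and $\mathcal R$. Take a finite $\varepsilon/4$-net $q_1,\dots,q_N$ of $\overline{\mathfrak Q}$. Then
\[
\sup_{q\in\overline{\mathfrak Q}}|\widehat{\mathcal R}_T(q)-\mathcal R(q)|\le\varepsilon/2+\max_{j\le N}|\widehat{\mathcal R}_T(q_j)-\mathcal R(q_j)|,
\]
and the union bound over the $N$ net points, together with the pointwise step, finishes the proof. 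Restricting from $\overline{\mathfrak Q}$ to $\mathfrak Q_T\subseteq\overline{\mathfrak Q}$ is immediate.

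The main obstacle is justifying the covariance bound once the random levels $U_t$ are appended. I expect the conditioning-on-$(U_i,U_j)$ argument above to settle it cleanly. It needs only measurability of $(w,u)\mapsto L_q(w,u)$, which holds by continuity of $q$, together with the independence in Assumption~\ref{assump:qf-sampling}.
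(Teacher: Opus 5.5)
Your proposal is correct and follows the paper's proof essentially step for step. The paper also passes to the augmented process $W_t^U=(W_t,U_t)$, cites Bradley's Theorem 5.1(a) to get $\alpha_{W^U}(k)\le\alpha_W(k)$, and reruns the variance/Chebyshev and finite-net argument of Lemma~\ref{lemma:uniform_convergence_empirical_risk} on the compact class $\overline{\mathfrak Q}$. For $\widetilde{\mathcal R}_T$ it likewise uses, via Tonelli, that the summand is a bounded, $\|\cdot\|_\infty$-Lipschitz function of $W_t$ alone. Your optional conditioning-on-$(U_i,U_j)$ computation is a valid self-contained replacement for the Bradley citation: the cross term $\Cov(m(U_i),m(U_j))$ vanishes for $i\ne j$, and the bounded-variable strong-mixing covariance inequality still applies to $L_q(W_i,u)$ and $L_q(W_j,v)$ when $u\ne v$.
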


\begin{proof}[Proof of Lemma~\ref{lem:qf-uniform-risk}]
    Define the concatenated process
    \begin{equation}
    W_t^U \coloneqq (W_t, U_t) = (Z_t, \epshat_{t + \ell}, U_t).
    \end{equation}
    By Assumption~\ref{assump:stationary_mixing_process}, $\{W_t\}_{t\geq 1}$ is strictly stationary and strongly mixing. Moreover, by Assumption~\ref{assump:qf-sampling}, $\{U_t\}_{t\geq 1}$ is i.i.d.~and independent of the process $\{W_t\}_{t\geq 1}$. Therefore, $\{W_t^U\}_{t\geq 1}$ is strictly stationary and strongly mixing, and by \citet[Theorem 5.1(a)]{bradley2005basic}, its mixing coefficients satisfy
    \begin{equation}
        \alpha_{W^U}(k)\leq\alpha_W(k), \quad k\geq1.
    \end{equation}
    In particular,
    \[
        0 \le \alpha_{W^U}(k) \le \alpha_W(k) \longrightarrow 0.
    \]

    For $q \in \overline{\mathfrak Q}$, set $L_q(W_t^U) = \rho_{U_t} \{ \epshat_{t + \ell}  - q(Z_t, U_t) \}$ and $B_X = B_{\epshat} + B_Q$. Throughout the proof, the uniform norm is taken over $\Zc \times \Ic$. Independence of $U_t$ and $W_t$, Assumptions \ref{assump:qf-regularity}(i) and \ref{assump:qf-model} and \eqref{eq:1lipschitz_quantile_loss} give

    \begin{align*}
        &\E L_q(W_t^U) = \int_{\mathcal I}R_\tau(q(\cdot,\tau))\nu(d\tau) = \mathcal R(q),\quad
        0 \le L_q(W_t^U) \le B_X,\\
        &|L_q(W_t^U)-L_{q'}(W_t^U)| \le |q(Z_t,U_t)-q'(Z_t,U_t)|\le\|q-q'\|_\infty.
    \end{align*}
    Joint continuity in Assumption~\ref{assump:qf-model} makes $\theta\mapsto q_\theta$ continuous in this norm. Thus, $\overline{\mathfrak Q}$ is compact as the continuous image. Applying the arguments of Lemma \ref{lemma:uniform_convergence_empirical_risk} with $W_t^U$, $L_q$, and $\overline{\mathfrak Q}$ in place of $W_t$, $L_{\tau,q}$, and $\overline{\mathcal Q}_\tau$, respectively, gives the conclusion.
    
    For the integrated objective, instead set
    \[
        \widetilde L_q(W_t)=\int_{\mathcal I}\rho_\tau\{\epshat_{t+\ell}-q(Z_t,\tau)\}\nu(d\tau),\quad \widetilde{\mathcal R}_T(q)=\frac{1}{T}\sum_{t=1}^T\widetilde L_q(W_t).
    \]
    Since $\nu$ is a probability law, Tonelli's theorem and the same loss bounds imply
    \[
        \E\widetilde L_q(W_t)=\mathcal R(q),\quad
        0 \le \widetilde L_q(W_t)\le B_X,\quad
        |\widetilde L_q(W_t) - \widetilde L_{q'}(W_t)| \le \|q-q'\|_\infty.
    \]
    The same proof of Lemma~\ref{lemma:uniform_convergence_empirical_risk} now applies directly to $W_t$ and $\widetilde L_q$, and yields the uniform convergence for $\widetilde{\mathcal R}_T$.
\end{proof}

\begin{lemma}[Population excess risk convergence]
\label{lem:qf-excess-risk}
Under the conditions of
Lemma~\ref{lem:qf-uniform-risk} and Assumptions~\ref{assump:qf-approximation} and~\ref{assump:qf-optimization},
\[
\mathcal R(\widehat q_T)-\mathcal R(q^\star)\xrightarrow{p}0.
\]
\end{lemma}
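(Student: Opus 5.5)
The plan is to repeat the four-term decomposition from the proof of Lemma~\ref{lemma:population_excess_risk_convergence}, now with the integrated risk $\mathcal R$ in \eqref{eq:qf-population-risk} and the randomized empirical risk $\widehat{\mathcal R}_T$ in \eqref{eq:qf-randomized-risk}. Set
\[
\Delta_T=\sup_{q\in\mathfrak Q_T}|\widehat{\mathcal R}_T(q)-\mathcal R(q)|.
\]
Lemma~\ref{lem:qf-uniform-risk} gives $\Delta_T\xrightarrow{p}0$. By Assumption~\ref{assump:qf-approximation} we choose a deterministic $q_T^\circ\in\mathfrak Q_T$ with $\mathcal R(q_T^\circ)-\mathcal R(q^\star)\le a_T+T^{-1}$. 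This is possible because $\mathfrak Q_T$ is deterministic and nonempty, and $a_T$ is finite since all candidates are bounded by $B_Q$ and the residuals by $B_{\epshat}$.

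Next we establish the lower bound $\mathcal R(\widehat q_T)-\mathcal R(q^\star)\ge 0$. For each $\tau\in\Ic$, write the excess as $\E\,\Gamma_\tau(q(Z_t,\tau),Z_t)$, using the conditional excess-loss identity from the proof of Lemma~\ref{lemma:quantile_consistency}. Under Assumption~\ref{assump:qf-regularity}(ii) this quantity is nonnegative, since $F_{\epshat}(q^\star_\tau(z)\mid z)=\tau$ and $\int_{q^\star}^a(F-\tau)\,du\ge0$ by monotonicity of the CDF. Integrating against $\nu$ and applying Tonelli gives nonnegativity of the integrated excess. Measurability of $(z,\tau)\mapsto q^\star(z,\tau)$ follows from its joint continuity.

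For the upper bound, we decompose exactly as before:
\begin{align*}
\mathcal R(\widehat q_T)-\mathcal R(q^\star)
={}&[\mathcal R(\widehat q_T)-\widehat{\mathcal R}_T(\widehat q_T)]
+[\widehat{\mathcal R}_T(\widehat q_T)-\widehat{\mathcal R}_T(q_T^\circ)]\\
&+[\widehat{\mathcal R}_T(q_T^\circ)-\mathcal R(q_T^\circ)]
+[\mathcal R(q_T^\circ)-\mathcal R(q^\star)].
\end{align*}
The first and third terms are each at most $\Delta_T$, because $\widehat q_T$ and $q_T^\circ$ both lie in $\mathfrak Q_T$. The second term is at most $\inf_{\mathfrak Q_T}\widehat{\mathcal R}_T-\widehat{\mathcal R}_T(q_T^\circ)+r_T\le r_T$ by Assumption~\ref{assump:qf-optimization}. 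The fourth term is at most $a_T+T^{-1}$. Combining these, the excess lies between $0$ and $2\Delta_T+r_T+a_T+T^{-1}\xrightarrow{p}0$. If the integrated objective $\widetilde{\mathcal R}_T$ is optimized instead, the identical argument applies, using the second part of Lemma~\ref{lem:qf-uniform-risk} and the corresponding version of Assumption~\ref{assump:qf-optimization}.

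The only point needing care is that $\widehat q_T$ is random and depends on the $U_t$, so the bound must be taken through the supremum $\Delta_T$ rather than pointwise. This is handled because the supremum is over the deterministic class $\mathfrak Q_T$. Otherwise the step is routine.
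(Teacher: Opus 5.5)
Your proposal follows the paper's proof essentially verbatim. It uses the same deterministic near-minimizer $q_T^\circ$ and the same four-term decomposition, giving $0\le\mathcal R(\widehat q_T)-\mathcal R(q^\star)\le 2\Delta_T+r_T+a_T+T^{-1}$, and it handles the $\widetilde{\mathcal R}_T$ variant the same way. One small remark: the lemma's hypotheses include only Assumption~\ref{assump:qf-regularity}(i), not (ii), but your lower bound does not need (ii). The generalized inverse $q^\star_\tau(z)$ minimizes the conditional quantile loss for any conditional law, and even more directly, $\widehat q_T\in\mathfrak Q_T$ gives $\mathcal R(\widehat q_T)-\mathcal R(q^\star)\ge a_T\to0$.
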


\begin{proof}[Proof of Lemma~\ref{lem:qf-excess-risk}]
    By Lemma~\ref{lem:qf-uniform-risk},
    \[
        \Delta_T \coloneqq \sup_{q\in\mathfrak Q_T}|\widehat{\mathcal R}_T(q)-\mathcal R(q)| \xrightarrow{p} 0.
    \]
    Choose a deterministic $q_T^\circ \in \mathfrak Q_T$ with $\Rc(q_T^\circ)-\Rc(q^\star) \le a_T+T^{-1}$, which exists by Assumption \ref{assump:qf-approximation}.
    The decomposition argument in the proof of Lemma \ref{lemma:population_excess_risk_convergence} applies with the present integrated risks and $a_T$ in place of $a_T^{\mathrm{joint}}$. Consequently,
    \[
        0\le\mathcal R(\widehat q_T)-\mathcal R(q^\star)\le2\Delta_T+r_T+a_T+T^{-1} \xrightarrow{p} 0,
    \]
    by Assumptions \ref{assump:qf-approximation} and \ref{assump:qf-optimization}. For the integrated empirical objective, replace $\widehat{\mathcal R}_T$ by $\widetilde{\mathcal R}_T$ throughout, using the corresponding version of Assumption \ref{assump:qf-optimization}.
\end{proof}

\begin{lemma}[Uniform consistency of the estimated conditional quantile function]
\label{lem:qf-uniform-consistency}
Under the conditions of Lemma~\ref{lem:qf-excess-risk} and Assumptions~\ref{assump:qf-sampling}--\ref{assump:qf-model},
\[
\sup_{z\in\mathcal Z}\sup_{\tau\in\mathcal I}
\left|\widehat q_T(z,\tau)-q^\star(z,\tau)\right|
\xrightarrow{p}0.
\]
\end{lemma}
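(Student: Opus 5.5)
We follow the template of Lemma~\ref{lemma:quantile_consistency}, now working with the integrated excess risk
\[
D(q) = \Rc(q) - \Rc(q^\star) = \int_{\Ic} \E\,\Gamma_\tau\bigl(q(Z_t,\tau),Z_t\bigr)\,\nu(d\tau),
\]
where $\Gamma_\tau(a,z)$ is the conditional excess quantile loss defined in that proof. Because Assumption~\ref{assump:qf-regularity}(ii) gives the local density bound uniformly over $\tau\in\Ic$, the pointwise lower bound \eqref{eq:gamma_lower_bound} holds verbatim for every $\tau \in \Ic$:
\[
\Gamma_\tau(a,z) \ge \tfrac{c_f}{2}\min\{|a-q^\star(z,\tau)|,\delta\}^2 .
\]
Tonelli's theorem, which applies since everything is nonnegative and bounded by $B_X$, then yields
\[
D(q) \ge \tfrac{c_f}{2}\int_{\Ic}\E\min\{|q(Z_t,\tau)-q^\star(Z_t,\tau)|,\delta\}^2\,\nu(d\tau) \ge 0 .
\]

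The identification step shows that $D(q)>0$ for every $q\in\overline{\mathfrak Q}$ with $q\neq q^\star$ somewhere on $\Zc\times\Ic$. Suppose $q(z_0,\tau_0)\ne q^\star(z_0,\tau_0)$. Both functions are jointly continuous, by Assumption~\ref{assump:qf-model} and Assumption~\ref{assump:qf-regularity}(ii) respectively. Hence there exist relative neighborhoods $N\ni z_0$ in $\Zc$ and $J\ni\tau_0$ in $\Ic$, together with a constant $c>0$, such that $|q-q^\star|\ge c$ on $N\times J$. Since $\Zc=\operatorname{supp}(P_Z)$ we have $P_Z(N)>0$, and since $\operatorname{supp}(\nu)=\Ic$ by Assumption~\ref{assump:qf-sampling} we have $\nu(J)>0$. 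Therefore
\[
D(q)\ge \tfrac{c_f}{2}\min\{c,\delta\}^2 P_Z(N)\,\nu(J)>0 .
\]
This is the step where full support of $\nu$ is essential. Without it, errors at levels outside the support of $\nu$ would be invisible to the loss.

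For the compactness step, fix $\varepsilon>0$ and define
\[
\mathcal A_\varepsilon=\{q\in\overline{\mathfrak Q}:\|q-q^\star\|_{\infty}\ge\varepsilon\},
\]
with the sup norm taken over $\Zc\times\Ic$. The set $\overline{\mathfrak Q}$ is compact in this norm, as shown in the proof of Lemma~\ref{lem:qf-uniform-risk}. The set $\mathcal A_\varepsilon$ is closed in $\overline{\mathfrak Q}$, hence compact. Moreover, $D$ is $1$-Lipschitz in the sup norm, because $|\rho_\tau(u)-\rho_\tau(v)|\le|u-v|$ integrates to $|\Rc(q)-\Rc(q')|\le\|q-q'\|_\infty$. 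If $\mathcal A_\varepsilon$ is empty there is nothing to prove. Otherwise the previous paragraph and compactness give $c_\varepsilon:=\inf_{\mathcal A_\varepsilon}D>0$, since a continuous positive function attains its minimum on a compact set. Since $\widehat q_T\in\mathfrak Q_T\subseteq\overline{\mathfrak Q}$, we get
\[
\P(\|\widehat q_T-q^\star\|_\infty\ge\varepsilon)\le\P\{D(\widehat q_T)\ge c_\varepsilon\}\to0
\]
by Lemma~\ref{lem:qf-excess-risk}. For the integrated objective $\widetilde{\Rc}_T$ the argument is unchanged, because Lemma~\ref{lem:qf-excess-risk} covers that case as well.

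The main obstacle is the identification step. It needs joint continuity in $(z,\tau)$ of both $q$ and $q^\star$ to produce a product neighborhood $N\times J$ with positive $P_Z\otimes\nu$ mass. It also requires checking that Tonelli's theorem legitimately converts $\Rc(q)-\Rc(q^\star)$ into the iterated integral of $\Gamma_\tau$. That conversion uses measurability of $(z,\tau)\mapsto\Gamma_\tau(q(z,\tau),z)$, which follows from continuity of $q$ and $q^\star$ together with boundedness of the loss.
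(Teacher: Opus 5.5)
Your proposal is correct and follows the paper's proof essentially step for step: you integrate the pointwise bound \eqref{eq:gamma_lower_bound} over $\tau\in\Ic$, use joint continuity of $q$ and $q^\star$ together with full support of $P_Z$ and $\nu$ to get $D(q)>0$ whenever $q\neq q^\star$ somewhere on $\Zc\times\Ic$, and then run the compact-separation argument on $\overline{\mathfrak Q}$ using the $1$-Lipschitz continuity of $D$ and Lemma~\ref{lem:qf-excess-risk}. One cosmetic point: the difference $\rho_\tau\{\epshat_{t+\ell}-q(Z_t,\tau)\}-\rho_\tau\{\epshat_{t+\ell}-q^\star(Z_t,\tau)\}$ is bounded but not pointwise nonnegative, so the interchange of integrals is justified by Fubini for a bounded integrand (or by Tonelli applied to each risk separately), not by nonnegativity.
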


\begin{proof}[Proof of Lemma~\ref{lem:qf-uniform-consistency}]
    Write $D(q) = \Rc(q)-\Rc(q^\star)$ for $q\in\overline{\mathfrak Q}$. Assumption \ref{assump:qf-regularity}(ii) allows the lower bound \eqref{eq:gamma_lower_bound} from the proof of Lemma~\ref{lemma:quantile_consistency} to be applied at every $\tau\in\Ic$, with $q_\tau^\star(z)=q^\star(z,\tau)$. Integrating that bound gives
    \begin{equation}\label{eq:qf-risk-identification}
        D(q)\ge\frac{c_f}{2}\int_{\mathcal I}\int_{\mathcal Z}\min\{|q(z,\tau)-q^\star(z,\tau)|,\delta\}^2\,P_Z(dz)\nu(d\tau) \ge 0.
    \end{equation}
    If $q(z_0,\tau_0)\ne q^\star(z_0,\tau_0)$, joint continuity gives relative open neighborhoods $N_z$ and $N_\tau$ of $z_0$ and $\tau_0$, and a constant $c>0$, such that $|q-q^\star|\ge c$ on $N_z\times N_\tau$. Since $\mathcal Z=\operatorname{supp}(P_Z)$ and $\mathcal I=\operatorname{supp}(\nu)$,
    \[
        P_Z(N_z)>0,\quad
        \nu(N_\tau)>0,\quad
        D(q)\ge\frac{c_f}{2}\min\{c,\delta\}^2P_Z(N_z)\nu(N_\tau)>0.
    \]
    Consequently,
    \[
        D(q)=0 \implies q=q^\star \text{ on } \Zc\times\Ic.
    \]
    The class $\overline{\mathfrak Q}$ is compact, as established in the proof of Lemma \ref{lem:qf-uniform-risk}, and $D$ is continuous because
    \[
        |D(q)-D(q')|=|\mathcal R(q)-\mathcal R(q')|\le\|q-q'\|_\infty.
    \]
    The compact-separation argument in the proof of Lemma~\ref{lemma:quantile_consistency} therefore applies with $\mathcal Z\times\mathcal I$, $\overline{\mathfrak Q}$, $D$, and $q^\star$ in place of $\mathcal Z$, $\overline{\mathcal Q}_\tau$, $D_\tau$, and $q_\tau^\star$.
    Together with $D(\widehat q_T)\xrightarrow{p}0$ from Lemma \ref{lem:qf-excess-risk}, this gives $\|\widehat q_T-q^\star\|_\infty\xrightarrow{p}0$ on the product domain.
\end{proof}

\begin{theorem}[Conditional coverage for quantile function estimation]\label{thm:qf-coverage}
Suppose Assumption~\ref{assump:stationary_mixing_process} and Assumptions~\ref{assump:qf-sampling}--\ref{assump:qf-optimization} hold. Fix $\alpha_l, \alpha_h \in \Ic$ with $\alpha_h - \alpha_l = 1 - \alpha$. Then, for any deterministic sequence of $t_T \ge T + \ell$, as $T \to \infty$,
    \[
    \left|\P\left(Y_{t_T+\ell} \in \widehat C_T^{\alpha,\ell}(Z_{t_T}, \widehat f(Z'_{t_T})) \mid Z_{t_T}\right)-(1-\alpha) \right| \xrightarrow{p} 0.
    \] 
\end{theorem}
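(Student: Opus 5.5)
The plan mirrors the proof of Theorem~\ref{thm:method1_conditional_coverage}, with Lemma~\ref{lem:qf-uniform-consistency} replacing Lemma~\ref{lemma:quantile_consistency}. Lemmas~\ref{lem:qf-uniform-risk}--\ref{lem:qf-uniform-consistency} already give
\[
\sup_{z\in\Zc,\tau\in\Ic}|\widehat q_T(z,\tau)-q^\star(z,\tau)|\xrightarrow{p}0 .
\]
Specialize this to the two fixed endpoint levels $\alpha_l,\alpha_h\in\Ic$. Set
\[
d_T=\max_{\tau\in\{\alpha_l,\alpha_h\}}\sup_{z\in\Zc}|\widehat q_{T,\tau}(z)-q_\tau^\star(z)|,
\]
which is bounded by the supremum above, so $d_T\xrightarrow{p}0$. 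If the integrated objective $\widetilde{\Rc}_T$ is used instead of $\widehat{\Rc}_T$, the same consistency holds by the corresponding versions of those lemmas, so the rest of the argument is unchanged.

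First, define the oracle interval $[\widehat y+q^\star_{\alpha_l}(z),\,\widehat y+q^\star_{\alpha_h}(z)]$ and the coverage events $\widehat A_T$ and $A_T^\star$ in terms of $\epshat_{t_T+\ell}$, exactly as before. Assumption~\ref{assump:qf-regularity}(ii) at $\tau\in\{\alpha_l,\alpha_h\}\subset\Ic$ gives local absolute continuity at both oracle endpoints, so
\[
F_{\epshat}(q^\star_\tau(z)\mid z)=F_{\epshat}(q^\star_\tau(z)-\mid z)=\tau .
\]
Stationarity (Assumption~\ref{assump:stationary_mixing_process}) then yields $\P(A_T^\star\mid Z_{t_T})=\alpha_h-\alpha_l=1-\alpha$ almost surely.

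Next, fix $\eta\in(0,\delta)$. On $\{d_T\le\eta\}$ the fitted and oracle endpoints differ by at most $\eta$, so the two events can disagree only when $\epshat_{t_T+\ell}$ lies within $\eta$ of an oracle endpoint. This gives
\[
\P(\widehat A_T\triangle A_T^\star)\le\P(d_T>\eta)+\sum_{\tau\in\{\alpha_l,\alpha_h\}}b_\tau(\eta),
\]
where $b_\tau(\eta)=\P\{|\epshat_{1+\ell}-q^\star_\tau(Z_1)|\le\eta\}$ does not depend on $T$ by stationarity. As $\eta\downarrow0$, $b_\tau(\eta)\to0$ by continuity from above, since there is no atom at $q^\star_\tau$. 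Taking $\limsup_T$ and then $\eta\downarrow0$ gives $\P(\widehat A_T\triangle A_T^\star)\to0$.

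Finally, bound $|\P(\widehat A_T\mid Z_{t_T})-(1-\alpha)|\le\P(\widehat A_T\triangle A_T^\star\mid Z_{t_T})$. Taking expectations and applying Markov's inequality gives the convergence in probability to zero.

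The one delicate point is the dependence between $\widehat q_T$ and the test pair $(Z_{t_T},\epshat_{t_T+\ell})$. This causes no trouble because the argument uses only the deterministic bound through $d_T$, with the supremum over all of $\Zc$. It never requires independence between the fitted function and the test point, only the marginal law of the test pair, which is fixed by stationarity. Beyond this, the main substantive input is the uniform-in-$(z,\tau)$ consistency of Lemma~\ref{lem:qf-uniform-consistency}; the remaining steps are routine.
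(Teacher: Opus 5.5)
Your proposal is correct and follows the paper's proof essentially step for step. It specializes the uniform consistency of Lemma~\ref{lem:qf-uniform-consistency} to the two endpoint levels, uses the no-atom identity at $q^\star_\tau$ from Assumption~\ref{assump:qf-regularity}(ii) together with stationarity, and then runs the symmetric-difference and Markov argument of Theorem~\ref{thm:method1_conditional_coverage}; the paper simply cites that earlier proof instead of re-deriving it, so your write-up spells out the same argument.
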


\begin{proof}[Proof of Theorem~\ref{thm:qf-coverage}]
    For the fixed levels $\tau\in\{\alpha_l,\alpha_h\}$ in the theorem, set $\widehat q_{T,\tau}(z)=\widehat q_T(z,\tau)$ and $q_\tau^\star(z)=q^\star(z,\tau)$. Lemma \ref{lem:qf-uniform-consistency} gives
    \[
        d_T \coloneqq \max_{\tau\in\{\alpha_l,\alpha_h\}}\sup_{z\in\mathcal Z}|\widehat q_{T,\tau}(z)-q_\tau^\star(z)|\le\|\widehat q_T-q^\star\|_\infty \xrightarrow{p} 0.
    \]
    Local absolute continuity in Assumption \ref{assump:qf-regularity}(ii) at the two quantiles implies
    \[
        F_{\epshat}(q_\tau^\star(z)\mid z)=F_{\epshat}(q_\tau^\star(z)-\mid z)=\tau,\quad\tau\in\{\alpha_l,\alpha_h\}.
    \]
    Together with stationarity of $W_t$, these verify all ingredients of the coverage argument in Theorem~\ref{thm:method1_conditional_coverage}. Applying that proof to the present endpoint functions and the same point prediction $\widehat f(Z'_{t_T})$ yields the claimed conclusion.
\end{proof}

\subsection{DSCP-Local}\label{subsec:theory-LCP}

The localized estimator first targets the residual distribution conditional on $H_t = \operatorname{Enc}_{\theta'}(Z_t)$. We establish consistency at a fixed representation value, transfer it to the observed representation, and then obtain conditional coverage. Coverage conditional on $Z_t$ is obtained under an additional residual-sufficiency assumption. Throughout this subsection, the encoder is fixed, and its representation space is a Borel subset of a Euclidean space equipped with a jointly measurable metric $d_H$.

\subsubsection{Coverage conditional on the representation \texorpdfstring{$H_t$}{H-t}}\label{subsubsec:H-conditional}

\begin{assumption}[Stationarity and absolute regularity]\label{ass:H1}
    The process $V_t = (H_t, \hat \epsilon_{t+\ell})$, $t \ge 1$, is strictly stationary and $\beta$-mixing with its mixing coefficient
    \[
        \beta_V(k) \longrightarrow 0.
    \]
\end{assumption}


\begin{assumption}[Kernel and local mass]\label{ass:H2}
    The measurable kernel satisfies $0 \le K \le K_{\max} < \infty$, $K(v)=0$ for $v>1$, and $K(v) \ge K_{\min}>0$ for $0 \le v \le a_0$, where $a_0 \in (0,1]$. There is a common $d_0 > 0$ such that, for $P_H$-almost every $h$, constants $0<c_h<C_h<\infty$ satisfy
    \begin{align}\label{eq:kernel-local-mass}
        \P(d_H(H,h) \le a_0b_T) \ge c_h b_T^{d_0}, \quad
        \P(d_H(H,h)\le b_T) \le C_h b_T^{d_0},
    \end{align}
for all sufficiently large $T$, where $H \sim P_H$ is a generic representation.
    
\end{assumption}

Several kernels, including uniform, triangular, Epanechnikov, as well as the truncated Gaussian kernel satisfy the kernel conditions in Assumption \ref{ass:H2}.


\begin{assumption}[Local CDF regularity]\label{ass:H3}
    For $P_H$-almost every $h$, $\FH(\cdot \mid h)$ is a continuous proper CDF and
    \begin{equation}\label{eq:locality}
        \omega_T(h) \coloneqq \esssup_{h': \, d_H(h',h) \le b_T} \sup_{u \in \R}|\FH(u \mid h')-\FH(u \mid h)| \to 0.
    \end{equation}
    The essential supremum is with respect to $P_H$.
\end{assumption}

The quantity $\omega_T(h)$ controls the bias from pooling nearby representations. A sufficient condition is local H\"older continuity of the conditional CDF in the representation metric, uniformly in $u$, which gives $\omega_T(h)\le L_h b_T^\gamma$ for some $\gamma>0$. Continuity in $u$ excludes atoms and permits inversion at neighboring probability levels.

For example, let $\epshat_{t+\ell}=m(H_t)+\xi_{t+\ell}$, where $\xi_{t+\ell}$ is independent of $H_t$ and has a CDF $G$ with density bounded by $L_G$. Suppose $|m(h')-m(h)|\le L_{m,h}d_H(h',h)^\gamma$ locally at $P_H$-almost every $h$, for some $\gamma>0$. Then $\FH(u\mid h)=G(u-m(h))$ is a continuous proper CDF and
\[
    \sup_u|\FH(u\mid h')-\FH(u\mid h)|\le L_GL_{m,h}d_H(h',h)^\gamma,
\]
so $\omega_T(h)\le L_GL_{m,h}b_T^\gamma\to0$ whenever $b_T\to0$.

\begin{assumption}[Bandwidth and dependence structure]\label{ass:H4}
    Let $p_T=b_T^{d_0}$ and
    \begin{equation}\label{eq:Lambda}
        \Lambda_T = p_T + \sum_{k=1}^{T-1} \min\{p_T, \beta_V(k)\}.
    \end{equation}
    Assume $b_T \to 0$ and
    \begin{equation}\label{eq:variance-condition}
        \kappa_T^\beta \coloneqq \frac{\Lambda_T}{T p_T^2} \to 0.
    \end{equation}
\end{assumption}

\begin{lemma}[Conditional CDF consistency at a fixed representation]\label{lem:prefix}
    Assume \ref{ass:H1}--\ref{ass:H3}, $b_T \to 0$, and the variance condition \eqref{eq:variance-condition}. Let $M \coloneqq M_T$ be deterministic integers with $1 \le M_T \le T$ and $M_T / T \to 1$. Write
    \begin{align*}
        S_{T,M}(h) &= \sum_{i=1}^M K_{T,i}(h), &B_{T,M}(u,h) &= \sum_{i=1}^M K_{T,i}(h) \ind \{\epshat_{i+\ell} \le u\},\\
        \mu_T(h) &= \E K(d_H(H,h)/b_T), &\widehat F^H_{T,M}(u \mid h) &= B_{T,M}(u,h)/S_{T,M}(h),
    \end{align*}
    if $S_{T,M}(h) > 0$. When $S_{T,M}(h)=0$, set $\widehat F^H_{T,M}(u \mid h) = \ind\{u \ge 0\}$. Then, for $P_H$-almost every fixed $h$,
    \begin{equation}\label{eq:prefix}
        \frac{S_{T,M_T}(h)}{M_T \mu_T(h)} \xrightarrow{p} 1,
        \qquad
        \sup_u |\widehat F^H_{T,M_T}(u \mid h) - \FH(u \mid h)| \xrightarrow{p} 0.
    \end{equation}
\end{lemma}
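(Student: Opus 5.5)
Fix $h$ in the $P_H$-full set where Assumptions \ref{ass:H2} and \ref{ass:H3} hold. The plan is a bias--variance analysis of the two kernel sums via second moments under $\beta$-mixing, followed by a Glivenko--Cantelli-type argument using continuity of $\FH(\cdot\mid h)$.

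\textbf{Step 1 (scale and variance of kernel sums).} By Assumption \ref{ass:H2}, $K_{\min}c_hp_T\le\mu_T(h)\le K_{\max}C_hp_T$, so $\mu_T(h)\asymp p_T$. For any bounded measurable $g$ with $0\le g\le 1$, set $\xi_i=K_{T,i}(h)g(\epshat_{i+\ell})$. Then $\xi_i$ is a function of $V_i$, supported on $\{d_H(H_i,h)\le b_T\}$, so $\E\xi_i^2\le K_{\max}^2C_hp_T$. For the covariances we use two bounds. First, $|\Cov(\xi_i,\xi_{i+k})|\le K_{\max}^2\P(d_H(H_i,h)\le b_T,\,d_H(H_{i+k},h)\le b_T)+(\E\xi)^2\lesssim p_T+p_T^2\lesssim p_T$, using only the single-event bound. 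Second, the $\beta$-mixing coupling (or Berbee) inequality for bounded variables gives $|\Cov|\le 2K_{\max}^2\beta_V(k)$. Hence
\[
\Var\Big(\sum_{i=1}^M\xi_i\Big)\lesssim M\Big(p_T+\sum_{k=1}^{T-1}\min\{p_T,\beta_V(k)\}\Big)=M\Lambda_T .
\]
Dividing by $(M\mu_T)^2\asymp M^2p_T^2$ and using $M_T/T\to1$ gives a bound of order $\kappa_T^\beta\to0$ by \eqref{eq:variance-condition}. Chebyshev's inequality then yields $S_{T,M}/(M\mu_T)\xrightarrow{p}1$ (taking $g\equiv1$), and also $\bigl(B_{T,M}(u,h)-\E B_{T,M}(u,h)\bigr)/(M\mu_T)\xrightarrow{p}0$ for each fixed $u$.

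\textbf{Step 2 (bias).} By stationarity and conditioning on $H_i$,
\[
\E B_{T,M}(u,h)=M\,\E\bigl[K(d_H(H,h)/b_T)\,\FH(u\mid H)\bigr],
\]
and the kernel vanishes unless $d_H(H,h)\le b_T$. Therefore $\bigl|\E B_{T,M}/(M\mu_T)-\FH(u\mid h)\bigr|\le\omega_T(h)\to0$ by Assumption \ref{ass:H3}. Combining with Step 1, $B_{T,M}/S_{T,M}\xrightarrow{p}\FH(u\mid h)$ for each fixed $u$. Since $\P(S_{T,M}>0)\to1$ by Step 1, the default definition used when $S_{T,M}=0$ is asymptotically irrelevant.

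\textbf{Step 3 (uniformity in $u$).} Both $\widehat F^H_{T,M}(\cdot\mid h)$ and $\FH(\cdot\mid h)$ are nondecreasing with values in $[0,1]$, and $\FH(\cdot\mid h)$ is a continuous proper CDF. Given $\varepsilon>0$, choose finitely many points $u_1<\dots<u_m$ with $\FH(u_1\mid h)\le\varepsilon$, $\FH(u_m\mid h)\ge1-\varepsilon$, and consecutive increments at most $\varepsilon$. The standard Pólya sandwich then gives
\[
\sup_u\bigl|\widehat F^H_{T,M}-\FH\bigr|\le\max_j\bigl|\widehat F^H_{T,M}(u_j)-\FH(u_j)\bigr|+\varepsilon,
\]
and a union bound over the $m$ points concludes the argument.

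\textbf{Main obstacle.} The delicate step is the covariance bound in Step 1, specifically the truncation at $p_T$ rather than $p_T^2$. Only the single-point mass bound is available, not a joint small-ball bound, which is exactly why $\Lambda_T$ is defined with $\min\{p_T,\beta_V(k)\}$. The $\beta$-mixing covariance inequality for bounded variables must be applied carefully, with constants uniform in $T$, since the kernel changes with $b_T$.
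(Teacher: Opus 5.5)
Your proposal is correct and follows essentially the same route as the paper. Both use the bounds $\mu_T(h)\asymp p_T$, the covariance truncation $\min\{p_T,\beta_V(k)\}$ obtained from the single-ball mass bound together with the $\beta$-mixing coupling inequality, Chebyshev's inequality with $\kappa_T^\beta$ (using $T/M_T$ bounded), the bias bound $\omega_T(h)$ through the localized population CDF, and a finite-grid argument for uniformity in $u$; the only difference is cosmetic, since your grid uses $\varepsilon$-tails rather than the paper's quantile grid $x_j=F^{-1}(j/m)$ with endpoints $\pm\infty$.
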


\begin{proof}[Proof of Lemma \ref{lem:prefix}]
    Fix $h$ for which \eqref{eq:kernel-local-mass} and \eqref{eq:locality} hold. In this proof, $C$ denotes a finite positive constant that may change depending on $h$ but not on $T$, $M$, or $u$. For all sufficiently large $T$, the kernel bounds \eqref{eq:kernel-local-mass} give
    \begin{align}\label{eq:mu-bounds}
        K_{\min} c_h p_T \le K_{\min} \P(d_H(H,h) \le a_0b_T) \le \mu_T(h) \le K_{\max} \P(d_H(H,h)\le b_T) \le K_{\max} C_h p_T.
    \end{align}
    For either $X_i = K_{T,i}(h)$ or $X_i = K_{T,i}(h) \ind\{\epshat_{i + \ell} \le u\}$,
    \begin{align*}
        \Var(X_i) &\le \E X_i^2 \le K_{\max}^2 \P(d_H(H,h)\le b_T) \le K_{\max}^2 C_h p_T,\\
        \abs{\Cov(X_i, X_{i+k})} &\le \E(X_i X_{i+k}) + \E X_i \E X_{i+k} \le 2K_{\max}^2 \P(d_H(H,h)\le b_T) \le 2K_{\max}^2 C_h p_T.
    \end{align*}
    Also, from the coupling argument of \citet[Corollary 1]{dedecker2006parametrized}, we have $\abs{\Cov(X_i, X_{i+k})} \le K_{\max}^2 \beta_V(k)$. Therefore,
    \[
        \abs{\Cov(X_i,X_{i+k})} \le K_{\max}^2\min\{2C_h p_T,\beta_V(k)\} \le C \min \{p_T, \beta_V(k)\},
    \]
    and hence we can bound the variance term as
    \begin{align}
        \Var \left(\sum_{i=1}^M X_i\right) &= M \Var(X_1) + 2\sum_{k=1}^{M-1}(M-k) \Cov(X_1, X_{1+k})\nonumber\\
        &\le CM \left[p_T + \sum_{k=1}^{T-1} \min\{p_T, \beta_V(k)\}\right] = CM \Lambda_T. \label{eq:decomp_variance}
    \end{align}
    To bound the bias term, define the population localized CDF by
    \[
        \overline F_T^H(u \mid h) = \frac{\E[K(d_H(H,h)/b_T) \FH(u \mid H)]}{\mu_T(h)}.
    \]
    Assumption \ref{ass:H3} gives
    \[
        \abs{\overline F_T^H(u \mid h) - \FH(u \mid h)} = \frac{\abs{\E[K(d_H(H,h)/b_T)\{\FH(u \mid H) - \FH(u \mid h)\}]}}{\mu_T(h)} \le \omega_T(h).
    \]
    In particular,
    \begin{align}\label{eq:decomp_bias}
        \sup_u \abs{\overline F_T^H(u \mid h) - \FH(u \mid h)} \le \omega_T(h) \to 0.
    \end{align}
    Since $\E S_{T,M}(h) = M \mu_T(h)$ and
    \[
        \E B_{T,M}(u,h) = \sum_{i=1}^M \E\left[K_{T,i}(h)\E\{\ind(\epshat_{i+\ell} \le u) \mid H_i\}\right] = M \mu_T(h) \overline F_T^H(u \mid h),
    \]
    Chebyshev's inequality gives, for every $v > 0$,
    \[
    \begin{aligned}
    &\P\left(\abs{\frac{S_{T,M}(h)}{M\mu_T(h)} - 1} >v \right)
    \begin{aligned}[t]
        &= \P(\abs{S_{T,M}(h) - M\mu_T(h)} > v M \mu_T(h))\\
        &\le \frac{CM \Lambda_T}{v^2 M^2 \mu_T(h)^2}
        \le \frac{C \kappa_T^\beta}{v^2} \to 0,
    \end{aligned}\\
    &\P\left(\abs{B_{T,M}(u,h) - M \mu_T(h) \overline F_T^H(u \mid h)}
    > vM \mu_T(h) \right)\\
    &\phantom{\P\left(\abs{\frac{S_{T,M}(h)}{M\mu_T(h)} - 1} >v \right)}
    \le \frac{CM \Lambda_T}{v^2 M^2 \mu_T(h)^2}
    \le \frac{C \kappa_T^\beta}{v^2} \to 0.
    \end{aligned}
    \]
    The first bound proves the first conclusion of \eqref{eq:prefix}. For $0 < v < 1$, on the event where the following inequality,
    \begin{align*}
        \max\left\{\abs{S_{T,M}(h) - M \mu_T(h)}, \abs{B_{T,M}(u,h) - M \mu_T(h) \overline F_T^H(u \mid h)}\right\} \le \frac{vM \mu_T(h)}{4},
    \end{align*}
    holds, we have $S_{T,M}(h) > 0$ and
    \begin{align*}
        \abs{\widehat F^H_{T,M}(u \mid h) - \overline F_T^H(u \mid h)}
        &= \frac{\abs{B_{T,M}(u,h) - S_{T,M}(h) \overline F_T^H(u \mid h)}}{S_{T,M}(h)}\\
        &\le \frac{2}{M\mu_T(h)}\abs{B_{T,M}(u,h) - M\mu_T(h) \overline F_T^H(u \mid h)}\\
        &\quad+\frac{2}{M\mu_T(h)}\abs{M\mu_T(h)-S_{T,M}(h)}\\
        &\le v.
    \end{align*}
    Therefore,
    \begin{align}
        &\P\left(\abs{\widehat F^H_{T,M}(u \mid h) - \FH(u \mid h)} > \omega_T(h) + v \right)\nonumber\\
        &\le \P\left(\abs{S_{T,M}(h) - M \mu_T(h)} > \frac{vM \mu_T(h)}{4} \right)\nonumber\\
        &\quad+\P\left(\abs{B_{T,M}(u,h) - M \mu_T(h) \overline F_T^H(u \mid h)} > \frac{vM \mu_T(h)}{4}\right)\nonumber\\
        &\le C \kappa_T^\beta/v^2.\label{eq:fixed-tail}
    \end{align}
    Now we use the finite-grid argument of \citet[Lemma 2.11]{van2000asymptotic} for the uniformity result. Fix $m \ge 2$ and abbreviate $F=\FH(\cdot\mid h)$. Take $x_j = F^{-1}(j/m)$ for $j = 1,\dots,m-1$, and $x_0 = -\infty, x_m = +\infty$. For any proper CDF $G$ and $x_j \le u < x_{j+1}$,
    \begin{align*}
        G(u)-F(u) & \le G(x_{j+1}) - F(x_j) \le \max_{0 \le k \le m} \abs{G(x_k)-F(x_k)} + m^{-1},\\
        F(u)-G(u) &\le F(x_{j+1}) - G(x_j) \le \max_{0 \le k \le m} \abs{G(x_k)-F(x_k)} + m^{-1},\\
        \norm{G-F}_\infty &\le \max_{0 \le k \le m} \abs{G(x_k)-F(x_k)} + m^{-1}.
    \end{align*}
    Applying this to $G = \widehat F^H_{T,M}(\cdot \mid h)$ and using \eqref{eq:fixed-tail},
    \begin{align}
        &\P\left(\norm{\widehat F^H_{T,M}(\cdot \mid h) -\FH(\cdot \mid h)}_\infty > \omega_T(h) + v + m^{-1}\right)\nonumber\\
        &\le \sum_{j=1}^{m-1} \P\left(\abs{G(x_j) - F(x_j)} > \omega_T(h) + v\right)\nonumber\\
        &\le Cm \kappa_T^\beta/v^2.\label{eq:uniform-tail}
    \end{align}
    Given $0 < \varepsilon < 1$, choose $m \ge 4/\varepsilon$ and $v = \varepsilon/4$. For all sufficiently large $T$, $\omega_T(h) \le \varepsilon/4$, so
    \[
        \P\left(\norm{\widehat F^H_{T,M}(\cdot \mid h) - \FH(\cdot \mid h)}_\infty > \varepsilon\right)
        \le \frac{16 Cm \kappa_T^\beta}{\varepsilon^2} \to 0.
    \]
    This proves the second conclusion of \eqref{eq:prefix}.
\end{proof}

\begin{theorem}[$H$-conditional CDF consistency]\label{thm:H-cdf}
    Under Assumptions \ref{ass:H1}--\ref{ass:H4},
    \begin{equation}
        \sup_u |\FhatH(u \mid H_{t_T}) - \FH(u \mid H_{t_T})| \xrightarrow{p} 0
    \end{equation}
\end{theorem}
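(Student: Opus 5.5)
The plan is to transfer the fixed-representation consistency of Lemma~\ref{lem:prefix} to the random, dependent query point $H_{t_T}$. I would do this in three steps: separate the test representation from the calibration sample by a growing time gap, replace $H_{t_T}$ by an independent copy using Berbee's coupling for $\beta$-mixing sequences, and then integrate the pointwise result against $P_H$.

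\textbf{Step 1 (discarding a short suffix).} Choose deterministic integers $m_T\to\infty$ with $m_T=o(Tp_T)$ and set $M_T=T-m_T$. Such a choice is possible because $\Lambda_T\ge p_T$ gives $\kappa_T^\beta\ge 1/(Tp_T)$, so \eqref{eq:variance-condition} forces $Tp_T\to\infty$; in particular $m_T/T\to0$ and $M_T/T\to 1$. The full-sample and prefix weighted CDFs differ only through the last $m_T$ kernel terms. Whenever $S_{T,M_T}(h)>0$, this gives
\[
\sup_u\bigl|\FhatH(u\mid h)-\widehat F^H_{T,M_T}(u\mid h)\bigr|\le \frac{2m_TK_{\max}}{S_{T,M_T}(h)}.
\]
By \eqref{eq:mu-bounds} and the first conclusion of \eqref{eq:prefix}, $S_{T,M_T}(h)\gtrsim M_T p_T$ with probability tending to one at $P_H$-a.e.\ fixed $h$. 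The right-hand side is therefore $O_P(m_T/(Tp_T))=o_P(1)$. Combined with Lemma~\ref{lem:prefix}, the function
\[
g_T(h):=\P\Bigl(\sup_u|\FhatH(u\mid h)-\FH(u\mid h)|>\varepsilon,\ \text{via the prefix bound}\Bigr)
\]
tends to $0$ for $P_H$-a.e.\ $h$. To keep the coupling clean, I would state $g_T$ for a measurable functional of $(V_1,\dots,V_{M_T})$ together with the suffix bound.

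\textbf{Step 2 (coupling).} The prefix estimator $\widehat F^H_{T,M_T}(\cdot\mid h)$ is a measurable function of $(V_1,\dots,V_{M_T})$ and $h$. The test index satisfies $t_T-M_T\ge T+\ell-M_T\ge m_T$. By Berbee's lemma, after enlarging the probability space, there is $H^\ast\sim P_H$ independent of $(V_1,\dots,V_{M_T})$ with $\P(H^\ast\ne H_{t_T})\le\beta_V(m_T)\to0$. Replacing $H_{t_T}$ by $H^\ast$ in the prefix estimator therefore changes the event of interest only on a set of vanishing probability. The suffix terms are handled by the deterministic bound of Step 1, which applies at whatever query point is used: the bound $2m_TK_{\max}/S_{T,M_T}$ at $H^\ast$ is controlled by the same independence argument applied to the event $\{S_{T,M_T}(H^\ast)\ge cM_Tp_T\}$.

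\textbf{Step 3 (integration).} Independence and Fubini give $\P(\text{bad event at }H^\ast)=\int g_T(h)\,P_H(dh)$. This tends to $0$ by dominated convergence, since $0\le g_T\le 1$ and $g_T\to0$ for $P_H$-a.e.\ $h$. Adding $\beta_V(m_T)$ completes the proof.

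The main obstacle I expect is in Steps 1 and 2: making the suffix-removal rate compatible with the coupling gap, and ensuring joint measurability in $(h,\omega)$ so that Fubini applies. The rate issue is resolved by observing that $Tp_T\to\infty$ leaves room for $m_T$ to grow while remaining $o(Tp_T)$. Measurability follows because $d_H$ is jointly measurable and the supremum over $u$ can be restricted to rationals together with the atoms of the empirical CDF.
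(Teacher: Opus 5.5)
Your proposal is correct and is essentially the paper's proof. The paper also drops a suffix, of the specific length $s_T=\floor{\sqrt{Tp_T}}$, which meets your requirements $s_T\to\infty$ and $s_T=o(Tp_T)$; it bounds the suffix's effect by $K_{\max}s_T/S_{T,M_T}(h)$, folds this into a bounded functional $g_T(\Vc_T,h)$ of the prefix, integrates the fixed-$h$ result of Lemma~\ref{lem:prefix} by dominated convergence, and transfers to $H_{t_T}$ via the Dedecker--Prieur coupling bound $\beta_V(s_T)$, which plays the same role as your Berbee coupling. One small fix: the lower bound on $S_{T,M_T}$ at the query point must use the $h$-dependent scale $M_T\mu_T(h)$ (equivalently $c_h$) rather than a uniform constant $c$, and this is automatic once the suffix term is included in $g_T$ as you propose.
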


\begin{proof}[Proof of Theorem \ref{thm:H-cdf}]
    Set $s_T=\floor{\sqrt{Tp_T}}$ and $M_T=T-s_T$, and retain the prefix $\Vc_T=(V_1,\dots,V_{M_T})$. Since $\Lambda_T\ge p_T$, Assumption~\ref{ass:H4} gives $(Tp_T)^{-1}\le\kappa_T^\beta\to0$. Consequently, $s_T\to\infty$, $s_T/(Tp_T)\le(Tp_T)^{-1/2}\to0$, $M_T/T\to1$, and $\beta_V(s_T)\to0$. On $S_{T, M_T}(h) > 0$,
    \begin{align*}
        S_T(h)\left\{\FhatH(u \mid h) - \widehat F^H_{T,M_T}(u \mid h)\right\}
        &= \sum_{i=1}^T K_{T,i}(h) \ind\{\epshat_{i+\ell} \le u\} - S_T(h) \widehat F^H_{T,M_T}(u \mid h)\\
        &= \sum_{i=M_T+1}^T K_{T,i}(h) \left\{\ind\{\epshat_{i+\ell}\le u\} - \widehat F^H_{T,M_T}(u \mid h)\right\}.
    \end{align*}
    Since $0 \le \widehat F^H_{T,M_T} \le 1$ and $S_T(h) \ge S_{T,M_T}(h)$,
    \begin{equation*}
        \norm{\FhatH(\cdot \mid h) - \widehat F^H_{T,M_T}(\cdot \mid h)}_\infty
        \le \frac{\sum_{i=M_T+1}^T K_{T,i}(h)}{S_T(h)}
        \le \frac{K_{\max} \floor{\sqrt{T p_T}}}{S_{T,M_T}(h)}.
    \end{equation*}
    Now, the triangle inequality gives
    \begin{align*}
        \norm{\FhatH(\cdot \mid h) - \FH(\cdot \mid h)}_\infty \le \norm{\widehat F^H_{T,M_T}(\cdot \mid h) - \FH(\cdot \mid h)}_\infty + \frac{K_{\max} \floor{\sqrt{T p_T}}}{S_{T,M_T}(h)}
    \end{align*}
    Define the bound $g_T$, which is the function of only $\Vc_T$ and $h$:
    \[
        g_T(\Vc_T,h)=
        \begin{cases}
            \min\left\{1, \norm{\widehat F^H_{T,M_T}(\cdot \mid h) - \FH(\cdot \mid h)}_\infty + \frac{K_{\max} s_T}{S_{T,M_T}(h)} \right\},
            &S_{T,M_T}(h) > 0,\\
            1,
            &S_{T,M_T}(h) = 0,
        \end{cases}
    \]
    so that
    \begin{equation}\label{eq:uniform-bound-g_T}
        \norm{\FhatH(\cdot \mid h) - \FH(\cdot \mid h)}_\infty \le g_T(\Vc_T,h) \le 1.
    \end{equation}
    For almost every fixed $h$, Lemma \ref{lem:prefix} and \eqref{eq:mu-bounds} imply
    \begin{align*}
        \norm{\widehat F^H_{T,M_T}(\cdot \mid h) - \FH(\cdot \mid h)}_\infty \xrightarrow{p} 0, \quad
        \frac{S_{T,M_T}(h)}{M_T \mu_T(h)} \xrightarrow{p} 1, \quad
        \frac{K_{\max} s_T}{M_T \mu_T(h)} \le \frac{K_{\max}}{K_{\min} c_h} \frac{T}{M_T} \frac{s_T}{T p_T} \to 0.
    \end{align*}
    On $\{S_{T,M_T}(h) \ge M_T \mu_T(h)/2\}$,
    \[
        \frac{K_{\max}s_T}{S_{T,M_T}(h)} \le \frac{2K_{\max} s_T}{M_T \mu_T(h)}.
    \]
    Therefore, given $\varepsilon > 0$,
    \begin{align*}
        &\P\{g_T(\Vc_T,h) > \varepsilon\}\\
        &\le \P\left\{\norm{\widehat F^H_{T,M_T}(\cdot \mid h) - \FH(\cdot \mid h)}_\infty > \frac{\varepsilon}{2}\right\}\\
        &\quad+\P\left\{\abs{\frac{S_{T,M_T}(h)}{M_T \mu_T(h)} - 1} > \frac{1}{2}\right\}
        +\ind\left\{\frac{4K_{\max}s_T}{M_T \mu_T(h)} > \varepsilon\right\}\\
        &\to 0.
    \end{align*}
    Since $0\le g_T(\Vc_T,h)\le1$ and $g_T(\Vc_T,h)\xrightarrow{p}0$, we have $\E g_T(\Vc_T,h)\to0$. Dominated convergence therefore gives
    \begin{equation}\label{eq:int-prefix}
        \int \E g_T(\Vc_T,h)P_H(dh) \to 0.
    \end{equation}
    Applying \citet[Corollary 1]{dedecker2006parametrized} gives
    \[
        \abs{\E g_T(\Vc_T,H_{t_T}) - \int \E g_T(\Vc_T,h)P_H(dh)} \le \beta_V(\floor{\sqrt{T p_T}}).
    \]
    Consequently, combining \eqref{eq:uniform-bound-g_T} and \eqref{eq:int-prefix},
    \begin{align*}
        \E \left[ \sup_u \abs{\FhatH(u \mid H_{t_T}) - \FH(u \mid H_{t_T})} \right]
        &\le \E g_T(\Vc_T, H_{t_T})\\
        &\le \int \E g_T(\Vc_T,h)P_H(dh) + \beta_V(\floor{\sqrt{T p_T}}) \to 0,
    \end{align*}
    and Markov's inequality gives the conclusion.
\end{proof}

\begin{corollary}[$H$-conditional coverage]\label{cor:H-coverage}
    Under Assumptions \ref{ass:H1}--\ref{ass:H4},
    \begin{align}
        \left| \P\left(Y_{t_T+\ell} \in \widehat C_T^{\alpha,\ell}(Z_{t_T}, \widehat Y_{t_T + \ell}) \mid H_{t_T}\right) - (1-\alpha) \right| \xrightarrow{p} 0.
    \end{align}
    
\end{corollary}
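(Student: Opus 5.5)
We obtain the corollary from Theorem~\ref{thm:H-cdf} by the quantile-bracketing argument outlined in the proof sketch of Corollary~\ref{cor:main-Z-coverage}, with conditioning on $H_{t_T}$ instead of $Z_{t_T}$. Write $D_T=\sup_u|\FhatH(u\mid H_{t_T})-\FH(u\mid H_{t_T})|$, so $D_T\xrightarrow{p}0$. Let $q^{H,\star}_\tau(h)=\inf\{u:\FH(u\mid h)\ge\tau\}$ denote the oracle quantiles. The DSCP-Local endpoints are $\widehat q^H_{T,\tau}(H_{t_T})$, and the coverage event is
\[
\widehat A_T=\{\widehat q^H_{T,\alpha_l}(H_{t_T})\le\epshat_{t_T+\ell}\le\widehat q^H_{T,\alpha_h}(H_{t_T})\}.
\]

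\emph{Step 1: bracketing.} We show that on $\{D_T\le\eta\}$, with $\eta<\min\{\alpha_l,1-\alpha_h\}$, each fitted endpoint satisfies
\[
q^{H,\star}_{\tau-\eta}(H_{t_T})\le \widehat q^H_{T,\tau}(H_{t_T})\le q^{H,\star}_{\tau+\eta}(H_{t_T}).
\]
This follows by plain generalized-inverse manipulations. For the upper bound, at $u=q^{H,\star}_{\tau+\eta}$ we have $\FhatH\ge \FH-\eta\ge\tau$. For the lower bound, for $u<q^{H,\star}_{\tau-\eta}$ we have $\FhatH(u)\le\FH(u)+\eta<\tau$, using right-continuity. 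Continuity of $\FH(\cdot\mid h)$ (Assumption~\ref{ass:H3}) gives $\FH(q^{H,\star}_a\mid h)=a$.

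\emph{Step 2: conditional probability sandwich.} Here the fitted quantiles depend on the calibration data $\{V_i\}_{i\le T}$, and $\epshat_{t_T+\ell}$ is not independent of them. The conditional probability given $H_{t_T}$ therefore cannot be computed by plugging in. To handle this we bound $\ind\{\widehat A_T\}$ pathwise. On $\{D_T\le\eta\}$,
\[
\ind\{q^{H,\star}_{\alpha_l+\eta}\le\epshat\le q^{H,\star}_{\alpha_h-\eta}\}\le \ind\{\widehat A_T\}\le \ind\{q^{H,\star}_{\alpha_l-\eta}\le\epshat\le q^{H,\star}_{\alpha_h+\eta}\},
\]
where all quantiles are evaluated at $H_{t_T}$ and $\epshat=\epshat_{t_T+\ell}$. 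The oracle sets involve only $(H_{t_T},\epshat_{t_T+\ell})$. By stationarity of $V_t$, the conditional law of $\epshat_{t_T+\ell}$ given $H_{t_T}$ is $\FH(\cdot\mid H_{t_T})$, and by continuity of this CDF the oracle sets have conditional probabilities $1-\alpha-2\eta$ and $1-\alpha+2\eta$. Adding $\P(D_T>\eta\mid H_{t_T})$ to account for the complement event gives
\[
|\P(\widehat A_T\mid H_{t_T})-(1-\alpha)|\le 2\eta+\P(D_T>\eta\mid H_{t_T})\quad\text{a.s.}
\]

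\emph{Step 3: conclusion.} Taking expectations gives $\E|\P(\widehat A_T\mid H_{t_T})-(1-\alpha)|\le2\eta+\P(D_T>\eta)$. Theorem~\ref{thm:H-cdf} sends the last term to $0$ as $T\to\infty$, and letting $\eta\downarrow0$ shows the left side tends to $0$. Markov's inequality then gives convergence in probability.

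The main obstacle is Step 1 at the boundary. The conventions $\FhatH=\ind\{u\ge0\}$ when $S_T=0$, and any possible non-uniqueness of oracle quantiles (flat regions of $\FH$), must not break the inequalities. Both are handled by working only with generalized inverses and with continuity (not strict monotonicity) of $\FH$, since the sandwich only requires $\FH(q^{H,\star}_a)=a$. A secondary point is to keep the case $D_T>\eta$ separate, so that no independence between the calibration data and the test pair is ever used.
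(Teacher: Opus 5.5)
Your proposal is correct and follows the paper's proof essentially step for step: the same bracketing $q^{H,\star}_{\tau-\eta}\le \widehat q^H_{T,\tau}\le q^{H,\star}_{\tau+\eta}$ on $\{D_T\le\eta\}$, the same pathwise sandwich by oracle events with conditional probabilities $1-\alpha\mp 2\eta$, and the same bound $2\eta+\P(D_T>\eta\mid H_{t_T})$. Your ending ($L^1$ convergence, then Markov) is equivalent to the paper's direct Markov bound $\P(D_T>\eta)/(\delta-2\eta)$; the one detail the paper checks that you leave implicit is measurability of $h\mapsto q^{H,\star}_a(h)$, via $\{h:q^{H,\star}_a(h)<r\}=\bigcup_{v\in\mathbb Q,\,v<r}\{h:\FH(v\mid h)\ge a\}$, which justifies computing the oracle conditional probabilities as $\FH(q_b^{H,\star}(H_{t_T})\mid H_{t_T})-\FH(q_a^{H,\star}(H_{t_T})-\mid H_{t_T})$.
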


\begin{proof}[Proof of Corollary~\ref{cor:H-coverage}]
    For $0<a<1$, define the true generalized inverse
    \[
        q_a^{H,\star}(h) = \inf \{u: \FH(u \mid h) \ge a\}.
    \]
    For every ``good'' $h$ of Assumption \ref{ass:H3}, properness makes $q_a^{H,\star}(h)$ finite, and continuity gives
    \begin{equation}\label{eq:lcp-true-quantile-level}
        \FH(q_a^{H,\star}(h) - \mid h) = \FH(q_a^{H,\star}(h) \mid h) = a.
    \end{equation}
    Indeed, $u<q_a^{H,\star}(h)$ implies $\FH(u\mid h)<a$, and right-continuity implies $\FH(q_a^{H,\star}(h) \mid h) \ge a$. The quantile is measurable in $h$, since for every $r \in \R$,
    \[
        \{h: q_a^{H,\star}(h) < r\} = \bigcup_{v \in \mathbb Q: \, v < r}\{h: \FH(v \mid h) \ge a\}.
    \]
    Fix
    \[
        0 < \eta < \min \{\alpha_l, 1-\alpha_h, (\alpha_h-\alpha_l)/2\}, \quad D_T = \sup_u|\FhatH(u \mid H_{t_T}) - \FH(u \mid H_{t_T})|.
    \]
    Whenever $\|\FhatH(\cdot \mid h) - \FH(\cdot \mid h)\|_\infty \le \eta$ and $\tau \in \{\alpha_l,\alpha_h\}$,
    \begin{align*}
        u<q_{\tau-\eta}^{H,\star}(h) &\implies \FhatH(u \mid h) \le \FH(u \mid h) + \eta < \tau,\\ 
        \FhatH(q_{\tau+\eta}^{H,\star}(h) \mid h) &\ge \FH(q_{\tau+\eta}^{H,\star}(h) \mid h) - \eta = \tau. \end{align*}
    Taking the infimum in the definition of $\widehat q^H_{T,\tau}$ gives
    \begin{equation}\label{eq:lcp-quantile-bracket}
        q_{\tau-\eta}^{H,\star}(h) \le \widehat q^H_{T,\tau}(h) \le q_{\tau+\eta}^{H,\star}(h).
    \end{equation}
    Write
    \begin{align*}
        A_T &= \{Y_{t_T+\ell} \in \widehat C_T^{\alpha,\ell} (Z_{t_T}, \widehat Y_{t_T+\ell})\}\\
        &= \{\widehat q^H_{T,\alpha_l}(H_{t_T}) \le \epshat_{t_T+\ell} \le \widehat q^H_{T,\alpha_h}(H_{t_T})\}.
    \end{align*}
    By \eqref{eq:lcp-quantile-bracket}, on $\{D_T\le\eta\}$,
    \begin{align*}
        \{q_{\alpha_l+\eta}^{H,\star}(H_{t_T}) \le \epshat_{t_T+\ell} \le q_{\alpha_h-\eta}^{H,\star}(H_{t_T})\} \subseteq A_T \subseteq \{q_{\alpha_l-\eta}^{H,\star}(H_{t_T}) \le \epshat_{t_T+\ell} \le q_{\alpha_h+\eta}^{H,\star}(H_{t_T})\}.
    \end{align*}
    For any fixed $0<a<b<1$, stationarity and \eqref{eq:lcp-true-quantile-level} imply, almost surely,
    \begin{align}
        &\P\{q_a^{H,\star}(H_{t_T}) \le \epshat_{t_T+\ell} \le q_b^{H,\star}(H_{t_T}) \mid H_{t_T}\} \nonumber\\ &=\FH(q_b^{H,\star}(H_{t_T}) \mid H_{t_T}) -\FH(q_a^{H,\star}(H_{t_T}) - \mid H_{t_T}) =b-a.\label{eq:lcp-oracle-H}
    \end{align}
    The left limit accounts for the closed lower endpoint.
    Conditioning the event inclusions on $H_{t_T}$ therefore gives
    \begin{align*}
        1-\alpha-2\eta-\P(D_T>\eta \mid H_{t_T}) \le\P(A_T\mid H_{t_T}) \le 1-\alpha+2\eta+\P(D_T>\eta \mid H_{t_T}),
    \end{align*}
    and hence
    \begin{equation}\label{eq:lcp-H-coverage-bound}
        |\P(A_T\mid H_{t_T})-(1-\alpha)| \le2\eta+\P(D_T>\eta\mid H_{t_T}).
    \end{equation}
    Given $\delta>0$, choose the preceding $\eta$ also to satisfy $2\eta<\delta$. Conditional expectation and Markov's inequality yield
    \begin{align*}
        &\P\{|\P(A_T\mid H_{t_T})-(1-\alpha)| > \delta\}\\
        &\le \P\{\P(D_T>\eta \mid H_{t_T})>\delta-2\eta\}\\ &\le \frac{\E[\P(D_T>\eta \mid H_{t_T})]}{\delta-2\eta} =\frac{\P(D_T>\eta)}{\delta-2\eta} \longrightarrow 0,
    \end{align*}
where the last step is Theorem \ref{thm:H-cdf}.
\end{proof}

\subsubsection{Extension to coverage conditional on the context \texorpdfstring{$Z_t$}{Z-t}}\label{subsubsec:Z-conditional}

\begin{assumption}[Stationarity and residual sufficiency]\label{ass:Z1}
    The process $W_t = (Z_t, \epshat_{t+\ell})$ is strictly stationary, and for every $u \in \R$ and $\P_Z$-almost every $z$,
    \begin{equation}\label{eq:Z-sufficiency}
        \FZ(u \mid z)=\FH(u \mid \operatorname{Enc}_{\theta'}(z)).
    \end{equation}
\end{assumption}

\paragraph{Interpretation of residual sufficiency.} Because $H_t=\operatorname{Enc}_{\theta'}(Z_t)$ is a function of $Z_t$, \eqref{eq:Z-sufficiency} is equivalent to $\epshat_{t+\ell}\mathrel{\perp\!\!\!\perp} Z_t\mid H_t$. This is the sufficient dimension reduction condition for the residual \citep{cook2007dimension, lee2013nonlinear}. The encoder may discard features that do not affect its conditional distribution. For example, if $Z_t=(S_t,N_t)$, $H_t=S_t$, and $\epshat_{t+\ell}=m(S_t)+\sigma(S_t)\xi_{t+\ell}$, where $\sigma>0$ and $\xi_{t+\ell}$ is independent of $Z_t$ with CDF $G$, then \[\FZ(u\mid s,n)=G\!\left(\frac{u-m(s)}{\sigma(s)}\right)=\FH(u\mid s),\]although the nuisance component $N_t$ cannot in general be recovered from $H_t$. This preserves the residual distribution rather than all information in $Z_t$.

\begin{theorem}[$Z$-conditional CDF consistency and $Z$-conditional coverage]\label{thm:Z-coverage}
    Under Assumptions \ref{ass:H1}--\ref{ass:H4} and \ref{ass:Z1},
    \begin{align}\label{eq:Z-cdf}
        &\sup_u |\widehat F_{T,\epshat}(u \mid Z_{t_T}) - \FZ(u \mid Z_{t_T})| \xrightarrow{p} 0,
    \end{align}
    and hence,
    \begin{align}
        \left| \P\left(Y_{t_T+\ell} \in \widehat C_T^{\alpha,\ell}(Z_{t_T}, \widehat Y_{t_T + \ell}) \mid Z_{t_T}\right) - (1-\alpha) \right| \xrightarrow{p} 0.
    \end{align}
\end{theorem}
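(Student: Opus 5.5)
The plan is to reduce both claims to the representation-conditional results already established, Theorem~\ref{thm:H-cdf} and the bracketing argument in the proof of Corollary~\ref{cor:H-coverage}, using residual sufficiency to swap the conditioning variable.

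\textbf{Step 1: CDF consistency.} By definition, $\widehat F_{T,\epshat}(u\mid Z_{t_T})=\FhatH(u\mid H_{t_T})$ with $H_{t_T}=\operatorname{Enc}_{\theta'}(Z_{t_T})$. By Assumption~\ref{ass:Z1} and stationarity, $P_Z$ is the law of $Z_{t_T}$. Hence \eqref{eq:Z-sufficiency} holds at $z=Z_{t_T}$ almost surely, simultaneously for all rational $u$. Both sides of \eqref{eq:Z-sufficiency} are right-continuous in $u$, so the identity extends to all $u$ almost surely. It follows that
\[
\sup_u|\widehat F_{T,\epshat}(u\mid Z_{t_T})-\FZ(u\mid Z_{t_T})|=\sup_u|\FhatH(u\mid H_{t_T})-\FH(u\mid H_{t_T})|=D_T
\]
almost surely, and \eqref{eq:Z-cdf} follows from Theorem~\ref{thm:H-cdf}.

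\textbf{Step 2: coverage.} I will rerun the proof of Corollary~\ref{cor:H-coverage} with $Z_{t_T}$ in place of $H_{t_T}$ as the conditioning variable.

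The inversion bracket \eqref{eq:lcp-quantile-bracket} is deterministic given $\|\FhatH(\cdot\mid h)-\FH(\cdot\mid h)\|_\infty\le\eta$. Therefore, on $\{D_T\le\eta\}$, the coverage event $A_T$ is sandwiched between oracle events of the form $\{q^{H,\star}_a(H_{t_T})\le\epshat_{t_T+\ell}\le q^{H,\star}_b(H_{t_T})\}$, with $(a,b)=(\alpha_l+\eta,\alpha_h-\eta)$ and $(\alpha_l-\eta,\alpha_h+\eta)$.

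The key new ingredient is the analogue of \eqref{eq:lcp-oracle-H} conditional on $Z_{t_T}$. Since $q_a^{H,\star}(H_{t_T})$ is a measurable function of $Z_{t_T}$, stationarity of $W_t$ gives
\[
\P\{q_a^{H,\star}(H_{t_T})\le\epshat_{t_T+\ell}\le q_b^{H,\star}(H_{t_T})\mid Z_{t_T}\}
=\FZ(q_b^{H,\star}\mid Z_{t_T})-\FZ(q_a^{H,\star}-\mid Z_{t_T}).
\]
By Step 1's almost-sure identification $\FZ(\cdot\mid Z_{t_T})=\FH(\cdot\mid H_{t_T})$ as functions of $u$, together with the left-limit version obtained by taking limits, this equals $b-a$ by \eqref{eq:lcp-true-quantile-level}. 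Here I need $H_{t_T}$ to be a ``good'' representation almost surely; this holds because $H_{t_T}\sim P_H$.

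Conditioning the sandwich inclusions on $Z_{t_T}$ then yields
\[
|\P(A_T\mid Z_{t_T})-(1-\alpha)|\le 2\eta+\P(D_T>\eta\mid Z_{t_T}).
\]
The Markov-inequality step from Corollary~\ref{cor:H-coverage} then gives the result, since $\E\,\P(D_T>\eta\mid Z_{t_T})=\P(D_T>\eta)\to0$.

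\textbf{Main obstacle.} The delicate point is the measure-theoretic care in passing from ``for every $u$, for $P_Z$-a.e.\ $z$'' in \eqref{eq:Z-sufficiency} to an identity of whole CDFs, including left limits, at the random point $Z_{t_T}$. I plan to handle this through a countable dense set of $u$ and monotone limits. I also need to confirm that conditioning on the pair $(H_{t_T},\epshat_{t_T+\ell})$ is consistent with the regular conditional distribution given $Z_{t_T}$. The remaining steps are direct reuse of earlier arguments.
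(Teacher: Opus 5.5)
Your proposal is correct and follows essentially the same route as the paper. You upgrade residual sufficiency to an identity of entire CDFs on a single $P_Z$-full set, using rational $u$, right-continuity and the exceptional set of Assumption~\ref{ass:H3}, and so reduce \eqref{eq:Z-cdf} to Theorem~\ref{thm:H-cdf}; you then rerun the bracketing argument of Corollary~\ref{cor:H-coverage} with the oracle identity computed conditionally on $Z_{t_T}$ and finish with Markov's inequality, exactly as the paper does.
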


\begin{proof}[Proof of Theorem~\ref{thm:Z-coverage}]
    Take regular conditional CDF versions in Assumption \ref{ass:Z1}. For each rational $u$, the equality in \eqref{eq:Z-sufficiency} holds outside a $P_Z$-null set. The countable union of these null sets, together with the preimage of the exceptional set in Assumption \ref{ass:H3}, is still $P_Z$-null. For every $z$ outside this union and every real $u$, choose rationals $u_n\downarrow u$. Right-continuity gives
    \begin{align*}
        \FZ(u \mid z) &= \lim_{n\to\infty}\FZ(u_n \mid z) = \lim_{n\to\infty}\FH(u_n\mid\operatorname{Enc}_{\theta'}(z))\\
        &= \FH(u \mid \operatorname{Enc}_{\theta'}(z)).
    \end{align*}
    Thus the equality holds simultaneously for all $u$, on one full-measure set of contexts. By stationarity and the definition of $\widehat F_{T,\epshat}$,
    \begin{align*}
        &\sup_u|\widehat F_{T,\epshat}(u\mid Z_{t_T})-\FZ(u\mid Z_{t_T})|\\
        &=\sup_u|\FhatH(u \mid H_{t_T}) - \FH(u \mid H_{t_T})| \xrightarrow{p}0,
    \end{align*}
    which proves \eqref{eq:Z-cdf}.
    
    Use $q_a^{H,\star}$, $D_T$, and $A_T$ from the proof of Corollary \ref{cor:H-coverage}. On the preceding ``good'' full-measure set, the identity of conditional CDFs and \eqref{eq:lcp-true-quantile-level} imply, for $0<a<b<1$,
    \begin{align*}
        &\P\{q_a^{H,\star}(H_{t_T})\le\epshat_{t_T+\ell} \le q_b^{H,\star}(H_{t_T})\mid Z_{t_T}\}\\
        &= \FZ(q_b^{H,\star}(H_{t_T})\mid Z_{t_T}) -\FZ(q_a^{H,\star}(H_{t_T})-\mid Z_{t_T})\\
        &= \FH(q_b^{H,\star}(H_{t_T})\mid H_{t_T}) -\FH(q_a^{H,\star}(H_{t_T})-\mid H_{t_T})=b-a.
    \end{align*}
    For $0<\eta<\min\{\alpha_l,1-\alpha_h,(\alpha_h-\alpha_l)/2\}$, condition the same event inclusions directly on $Z_{t_T}$ to obtain
    \begin{align*}
        1-\alpha-2\eta-\P(D_T>\eta\mid Z_{t_T})
        &\le \P(A_T\mid Z_{t_T})\\
        &\le 1-\alpha+2\eta+\P(D_T>\eta\mid Z_{t_T}).
    \end{align*}
    For any $\delta>0$, choose $\eta$ also with $2\eta<\delta$. Then
    \begin{align*}
        &\P\{|\P(A_T\mid Z_{t_T})-(1-\alpha)|>\delta\}\\
        &\le \P\{\P(D_T>\eta\mid Z_{t_T})>\delta-2\eta\} \le\frac{\P(D_T>\eta)}{\delta-2\eta} \longrightarrow 0.
    \end{align*}
    This proves the coverage assertion by conditioning the oracle events on $Z_{t_T}$.
\end{proof}

\section{Additional Details of Experiments}
\label{appendix:experiment_details}

\subsection{Datasets}
\label{appendix:datasets}

We evaluate the proposed methods and baselines on three real-world datasets that are commonly used as benchmarks in conformal prediction for time series. This section provides details of the datasets, and Table~\ref{tbl:dataset_summary} summarizes their key characteristics.

\paragraph{Air quality dataset.}

The Air Quality dataset~\citep{zhang2017cautionary} comprises 35,064 air quality measurements collected from 12 locations in Beijing, China. It includes two air quality variables, PM10 and PM2.5. We use PM10 as the outcome variable. PM2.5 is excluded from the feature set. The dataset also contains ten additional environmental variables that are used as features. This dataset has also been used in prior work on conformal prediction for time series~\citep{auer2024conformal,neglia2026rescp}.

\paragraph{Solar dataset.}

The Solar dataset is obtained from the U.S. National Solar Radiation Database~\citep{sengupta2018national} and contains half-hourly measurements of solar radiation, along with eight environmental features, at 50 locations. We use diffuse horizontal irradiance (DHI) as the outcome variable and the eight environmental variables as input features. We consider three years of data and resample the original half-hourly measurements to an hourly frequency. This results in 50 time series, each consisting of 26,304 hourly observations resampled from 52,609 half-hourly measurements. This dataset has also been used in prior works~\citep{xu2021conformal,xu2023conformal,auer2024conformal,neglia2026rescp}.

\paragraph{Sapflow dataset.}

Sap flow refers to the movement of water within a plant and is commonly used as a proxy for plant transpiration in environmental science. The Sapflow dataset is obtained from the SAPFLUXNET database~\citep{poyatos2020global}. Following~\citet{auer2024conformal}, we use a subset of 24 time series, each containing between 15,000 and 20,000 observations. The sequences have varying sampling frequencies, and each sequence contains 10 features.

\begin{table}[t]
\caption{Summary statistics of the three real-world datasets, including the number of features, sequence length, and number of time series.}
\label{tb:dataset_details}
\begin{center}
\begin{small}
\begin{tabular}{lccc}
\toprule
 & Number of features & Sequence length & Number of sequences  \\
\midrule
Air Quality & 10 &  35064 & 12 \\
Solar  & 8 & 26304 & 50 \\
Sapflow & 10 & 15007–19934 & 24 \\

\bottomrule
\end{tabular}
\end{small}
\end{center}
\label{tbl:dataset_summary}
\end{table}

\subsection{Base predictors}
\label{appendix:base_predictors}

\paragraph{Linear regression.}

We use \texttt{LinearRegressionModel} from the Python Darts~\citep{herzen2022darts} library to implement the linear regression base predictor. Each time series sequence is chronologically split into training and held-out sets according to the specified training ratio. A separate model is trained for each sequence. Predictions are made sequentially using a rolling window. At each prediction step, the model uses the most recent \texttt{past\_window} outcomes and covariates as input features to predict the next \texttt{prediction\_step} outcomes. The window is then advanced by \texttt{prediction\_step} time steps, incorporating the new true outcomes and covariates to make the next prediction. Covariates at the forecasting timestamps are not provided. \texttt{prediction\_step} corresponds to \texttt{output\_chunk\_length} hyperparameter in \texttt{LinearRegressionModel}. If $\texttt{prediction\_step}>1$, Darts fits a separate linear regressor for each prediction step. Training is conducted by minimizing the mean squared error between the predicted and true outcomes on the training set. After training, the model forecasts the held-out set of the sequence. We make the true covariates and outcomes available for use as windowed features when forecasting the held-out set. MSE and MAE are computed over each sequence's held-out set and then averaged across sequences. In all experiments, we set \texttt{past\_window} to 100 and \texttt{prediction\_step} to 1. Table~\ref{tbl:base_predictors_results} reports the MAE and RMSE of the linear regression base predictor on all datasets.

\paragraph{Recurrent neural networks.}

The recurrent neural network (RNN) base predictor is implemented as a global model trained jointly across all time series sequences in each dataset. Each time series sequence is chronologically split into training and held-out sets according to the specified training ratio, with the last $10\%$ of the training set is used as the validation set for model selection. The covariates and outcomes are then normalized using the training sets across all sequences in the dataset. Predictions are made sequentially using a rolling window. At each prediction step, the model uses the most recent \texttt{past\_window} outcomes and covariates as input features to predict the next \texttt{prediction\_step} outcomes. The window is then advanced by \texttt{prediction\_step} time steps, incorporating the new true outcomes and covariates to make the next prediction. Covariates at the forecasting timestamps are not provided. The model consists of a linear input embedding layer, followed by a multi-layer LSTM~\citep{hochreiter1997long}, and a final fully connected layer that maps the hidden representation to the predicted outcomes. During training, the model is trained by minimizing the mean squared error using the Adam optimizer~\citep{kingma2014adam}. After training, the model forecasts the held-out set of each sequence, and the predictions are transformed back to the original scale. We make the true covariates and outcomes available for use as windowed features when forecasting the held-out set. MSE and MAE are computed over each sequence's held-out set and then averaged across sequences. In all experiments, we set \texttt{window\_length} to 100 and \texttt{prediction\_step} to 1. Table~\ref{tbl:base_predictors_results} reports the MAE and RMSE of the LSTM base predictor on all datasets.

\paragraph{Chronos.}

We implement the Chronos base predictor using the pretrained
\texttt{amazon/chronos-2} checkpoint\footnote{\url{https://huggingface.co/amazon/chronos-2}} through the \texttt{Chronos2Pipeline}\footnote{\url{https://github.com/amazon-science/chronos-forecasting}}. We use the model in a zero-shot setting without fine-tuning it on our datasets.
Therefore, unlike the linear regression and RNN base predictors, entire sequence is used as the held-out set. Predictions are made sequentially using a rolling window. At each prediction step, the model uses the most recent \texttt{past\_window} outcomes and covariates as input features to predict the next \texttt{prediction\_step} outcomes. The window is then advanced by \texttt{prediction\_step} time steps, incorporating the new true outcomes and covariates to make the next prediction. Covariates at the forecasting timestamps are not provided. Although Chronos-2 can produce forecasts at flexible quantile levels, we use the median prediction, corresponding to the $0.5$ quantile, as the predicted outcome. MSE and MAE are computed over each sequence's held-out set and then averaged across sequences. In all experiments, we set \texttt{window\_length} to 100 and \texttt{prediction\_length} to 10.  Table~\ref{tbl:base_predictors_results} reports the MAE and RMSE of the Chronos base predictor on all datasets.

\begin{table}[t]
\caption{Mean absolute errors (MAE) and root mean squared errors (RMSE) of three base predictors on all datasets.}
\label{tbl:base_predictors_results}
\begin{center}
\begin{small}
\begin{tabular}{lcccccc}
\toprule
 & \multicolumn{2}{c}{Air} 
 & \multicolumn{2}{c}{Solar} 
 & \multicolumn{2}{c}{Sapflow} \\
 & MAE & RMSE & MAE & RMSE & MAE & RMSE \\
\midrule
Linear Regression 
& $22.1$ & $34.8$ 
& $26.3$ & $140.8$ 
& $161.2$ & $375.7$ \\

LSTM 
& $17.2$ & $30.3$ 
& $14.5$ & $33.3$ 
& $230.7$ & $761.1$ \\

Chronos 
& $39.7$ & $65.0$ 
& $24.6$ & $51.4$ 
& $420.2$ & $1007.9$ \\
\bottomrule
\end{tabular}
\end{small}
\end{center}
\end{table}

\begin{table}[t]
\caption{Hyperparameter search spaces used for the proposed methods. ``Use of current covariate'' indicates whether the covariate at the prediction time is included. For context features, \texttt{xr} denotes the concatenation of covariates and residuals, while \texttt{xry} denotes the concatenation of covariates, residuals, and observed outcomes.}
\label{table:proposed_hyperparameters_space}
\begin{center}
\begin{small}
\begin{tabular}{lll}
\toprule
\textbf{Method} & \textbf{Hyperparameter} & \textbf{Search space} \\
\midrule

\multirow{6}{*}{DSCP-QR-RNN}
& Model dimension & $32,\;64,\;128$ \\
& Number of layers & $1,\;2,\;4$ \\
& Use of current covariate & True, False \\
& Context window size & $100,\;200$ \\
& Prediction head hidden layers & $1,\;2$ \\
& Context features & \texttt{xr}, \texttt{xry} \\

\midrule

\multirow{7}{*}{DSCP-QR-Transformer}
& Model dimension & $32,\;64,\;128$ \\
& Number of heads & $2,\;4$ \\
& Number of layers & $2,\;4$ \\
& Use of current covariate & True, False \\
& Context window size & $100,\;200$ \\
& Prediction head hidden layers & $1,\;2$ \\
& Context features & \texttt{xr}, \texttt{xry} \\

\midrule

\multirow{6}{*}{DSCP-QFE-RNN}
& Model dimension & $32,\;64,\;128$ \\
& Number of layers & $1,\;2,\;4$ \\
& Use of current covariate & True, False \\
& Context window size & $100,\;200$ \\
& Prediction head hidden layers & $1,\;2$ \\
& Context features & \texttt{xr}, \texttt{xry} \\

\midrule

\multirow{7}{*}{DSCP-QFE-Transformer}
& Model dimension & $32,\;64,\;128$ \\
& Number of heads & $2,\;4$ \\
& Number of layers & $2,\;4$ \\
& Use of current covariate & True, False \\
& Context window size & $100,\;200$ \\
& Prediction head hidden layers & $1,\;2$ \\
& Context features & \texttt{xr}, \texttt{xry} \\

\midrule

\multirow{9}{*}{DSCP-Local-RNN}
& Model dimension & $32,\;64,\;128$ \\
& Number of layers & $1,\;2,\;4$ \\
& Use of current covariate & True, False \\
& Context window size & $100,\;200$ \\
& Prediction head hidden layers & $1,\;2$ \\
& Context features & \texttt{xr}, \texttt{xry} \\
& Calibration window size &  $100,\; 500$ \\
& Similarity function & \texttt{dot}, \texttt{cos}, \texttt{Euclidean} \\
& Temperature &  $0.1,\;1.0,\;10.0$ \\

\midrule

\multirow{10}{*}{DSCP-Local-Transformer}
& Model dimension & $32,\;64,\;128$ \\
& Number of heads & $2,\;4$ \\
& Number of layers & $2,\;4$ \\
& Use of current covariate & True, False \\
& Context window size & $100,\;200$ \\
& Prediction head hidden layers & $1,\;2$ \\
& Context features & \texttt{xr}, \texttt{xry} \\
& Calibration window size & $100,\; 500$ \\
& Similarity function & \texttt{dot}, \texttt{cos}, \texttt{Euclidean} \\
& Temperature &  $0.1,\;1.0,\;10.0$ \\

\bottomrule
\end{tabular}
\end{small}
\end{center}
\end{table}

\subsection{Implementation Details of Proposed Methods}
\label{appendix:proposed_methods}

\paragraph{DSCP-QR}

DSCP-QR consists of a linear projection layer that maps the input to the model dimension, a backbone sequence encoder, and quantile-specific prediction heads. Each head $g_{\theta^{\tau_j}}$ is a fully connected neural network with one or more hidden layers and ReLU activations. To prevent quantile crossing across the ordered target levels $0 < \tau_1 < \cdots < \tau_J < 1$, we parameterize successive quantiles using nonnegative increments:
\[
\begin{aligned}
\hat q_{T,\tau_1}(z_t)
&= g_{\theta^{\tau_1}}(h_t),\\
\hat q_{T,\tau_j}(z_t)
&= \hat q_{T,\tau_{j-1}}
+ \sigma\!\left(g_{\theta^{\tau_j}}(h_t)\right),
\quad j = 2,\ldots,J,
\end{aligned}
\]
where
$\sigma:\mathbb{R}\to[0,\infty)$ is a nonnegative activation function, such as ReLU or softplus. This construction guarantees
\[
\widehat{q}_{T,\tau_1}(z_t)
\leq \cdots \leq
\widehat{q}_{T,\tau_J}(z_t)
\]
for every context $z_t$. Without this constraint to prevent quantile crossing, each prediction head directly estimates its corresponding quantile:
\[
\widehat{q}_{T,\tau_j}(z_t)
= g_{\theta^{\tau_j}}(h_t),
\quad j = 1,\ldots,J.
\]

We use the first 75\% of the calibration set for training and the remaining 25\% for validation. Training is performed for at most 50 epochs with early stopping using a patience of three epochs, and the checkpoint with the lowest validation loss is selected. The validation loss is the same quantile loss used for training, evaluated at the target quantile levels. We use AdamW~\citep{loshchilov2017decoupled} with $\beta_1=0.9$, $\beta_2=0.999$, $\epsilon=10^{-8}$, and a weight decay of $0.01$. The initial learning rate is set to $10^{-3}$ for the RNN encoder and $10^{-4}$ for the Transformer encoder. To construct prediction intervals with target coverage levels of $0.8$, $0.9$, and $0.95$, we use the target quantile sets $\{0.10,0.90\}$, $\{0.05,0.95\}$, and $\{0.025,0.975\}$, respectively.

The best hyperparameter configuration is selected through a grid search over the search space specified in Table~\ref{table:proposed_hyperparameters_space}. Each configuration is trained and evaluated separately using a subset of sequences, with 20\% of the sequences in each dataset used for hyperparameter selection. For each selected sequence, we use a chronological split, with the first 85\% of the training data used for model fitting and the remaining 15\% used for validation.

Because we prioritize coverage, we adopt a two-stage hyperparameter selection procedure. First, we filter configurations based on empirical coverage. A configuration is considered eligible only if the difference between its empirical and nominal coverage is at least $-0.01$ for every target quantile pair and every selected sequence. Among the eligible configurations, we select the one with the lowest average Winkler score. The Winkler score is first averaged over evaluation observations within each sequence and then averaged equally across sequences. This coverage-first selection procedure is similar to the hyperparameter selection criterion used by \citet{auer2024conformal}. The best hyperparameter configurations selected for each dataset and base predictor are presented in Tables~\ref{table:proposed_hyperparameters_air}--\ref{table:proposed_hyperparameters_sapflow}.

\paragraph{DSCP-QFE}
\label{sec:monotonic_prediction_head}

DSCP-QFE consists of a linear projection layer that maps the input to the model dimension, a backbone sequence encoder, and a prediction head. The prediction head is a fully connected neural network with one or more hidden layers and ReLU activations. To prevent quantile crossing in conditional quantile function estimation, we use a partially monotonic prediction head. As defined in \S~\ref{sec:method_cqfe}, let
\[
h_t
=
\operatorname{Enc}_{\theta'}(z_t)
\in\mathbb{R}^{d_h}
\]
denote the latent representation of the historical context $z_t$.
The encoder does not take the quantile level as an input, so
$h_t$ remains fixed when evaluating different quantile levels
for the same context. The scalar quantile level $\tau\in(0,1)$
is provided directly to the prediction head, yielding
\begin{equation}
\widehat q_T(z_t,\tau)
=
g_{\theta''}(h_t,\tau)
=
g_{\theta''}\!\left(
\operatorname{Enc}_{\theta'}(z_t),\tau
\right).
\end{equation}

Let $K\geq 1$ denote the number of hidden layers in the
prediction head, and let $d_k$ denote the width of
the $k$-th hidden layer. The hidden representation from the first hidden layer $r_t^{(1)}(\tau)\in\mathbb{R}^{d_1}$ is defined by
\begin{equation}
\label{eq:pm-head-first-layer}
r_t^{(1)}(\tau)
=
\phi_1\!\left(
A_1h_t+b_1^{+}\tau+c_1
\right),
\end{equation}
where
$A_1\in\mathbb{R}^{d_1\times d_h}$, $b_1^{+}\in[0,\infty)^{d_1}$, and $c_1\in\mathbb{R}^{d_1}$.
For $k=2,\ldots,K$, the hidden representations from the subsequent hidden layers $r_t^{(k)}(\tau)\in\mathbb{R}^{d_k}$ are defined recursively by
\begin{equation}
\label{eq:pm-head-hidden-layers}
r_t^{(k)}(\tau)
=
\phi_k\!\left(
A_kh_t+B_k^{+}r_t^{(k-1)}(\tau)+c_k
\right),
\qquad k=2,\ldots,K,
\end{equation}
where $A_k\in\mathbb{R}^{d_k\times d_h}$, $B_k^{+}\in[0,\infty)^{d_k\times d_{k-1}}$, and $c_k\in\mathbb{R}^{d_k}$. When $K=1$, the recursive layers in \eqref{eq:pm-head-hidden-layers} are omitted.

The output of the prediction head is
\begin{equation}
\label{eq:pm-head-output}
g_{\theta''}(h_t,\tau)
=
a(h_t)+(v^{+})^{\top}r_t^{(K)}(\tau),
\end{equation}
where $v^{+}\in[0,\infty)^{d_K}$ and $a:\mathbb{R}^{d_h}\rightarrow\mathbb{R}$ is a continuous scalar-valued function that does not depend on $\tau$. We use a linear mapping,
\begin{equation}
a(h_t)=w_a^{\top}h_t+b_a,
\qquad
w_a\in\mathbb{R}^{d_h},
\quad
b_a\in\mathbb{R}.
\end{equation}

Each $\phi_k:\mathbb{R}\rightarrow\mathbb{R}$ is a continuous,
nondecreasing activation function applied componentwise, such
as $\tanh$ or sigmoid. The matrices $A_k$ and bias vectors $c_k$ have no sign constraints. In contrast, the weight vectors $b_1^{+}$ and $v^{+}$ and the weight matrices $B_k^{+}$ are elementwise nonnegative. These constraints preserve the ordering of the quantile-dependent hidden representations throughout the network. Neither the hidden representations nor the final quantile estimates are required to be nonnegative.

The context representation enters each hidden layer through
$A_kh_t$. These context-dependent terms and $a(h_t)$
remain unchanged when computing different quantile levels
at the same context and therefore do not affect the
monotonicity guarantee. To verify this property, fix a context representation $h_t$ and consider any $0<\tau_1<\tau_2<1$. Since $b_1^{+}$ is
elementwise nonnegative and $\phi_1$ is nondecreasing,
\[
r_t^{(1)}(\tau_1)
\leq
r_t^{(1)}(\tau_2)
\]
componentwise. At each subsequent layer, multiplication by
the elementwise nonnegative matrix $B_k^{+}$, addition of the
common term $A_kh_t+c_k$, and application of the nondecreasing
activation $\phi_k$ preserve this ordering. Hence,
$$
r_t^{(k)}(\tau_1)
\leq
r_t^{(k)}(\tau_2),
\qquad k=1,\ldots,K,
$$
componentwise. Finally, the nonnegativity of $v^{+}$ implies

$$
g_{\theta''}(h_t,\tau_2)
-
g_{\theta''}(h_t,\tau_1)
=
(v^{+})^{\top}
\left[
r_t^{(K)}(\tau_2)-r_t^{(K)}(\tau_1)
\right]
\geq 0.
$$
Therefore, for every context $z_t$,
$$
\widehat q_T(z_t,\tau_1)
\leq
\widehat q_T(z_t,\tau_2),
\qquad
0<\tau_1<\tau_2<1.
$$
Thus, the estimated conditional quantile function is nondecreasing in the quantile level, preventing quantile crossing by construction.

If we do not use the partially monotonic prediction head, we use quantile embedding and fuse it with the encoded representation from the sequence encoder following \citet{dabney2018implicit,gouttes2021probabilistic}. Specifically, for each quantile level $\tau\in(0,1)$, we construct a vector of $K$ cosine features and apply a learned affine transformation followed by a ReLU activation:
\[
\begin{aligned}
\mathbf{c}(\tau)
&= \bigl[\cos(\pi i\tau)\bigr]_{i=0}^{K-1},\\
\phi(\tau)
&= \operatorname{ReLU}\!\left(
    \mathbf{W}_{\phi}\mathbf{c}(\tau) + \mathbf{b}_{\phi}
\right),
\end{aligned}
\]
where $\mathbf{W}_{\phi}\in\mathbb{R}^{d_h\times K}$ and $\mathbf{b}_{\phi}\in\mathbb{R}^{d_h}$ are learned parameters, and $d_h$ is the dimension of the encoded representation $\mathbf{h}_t$. We then fuse the quantile embedding with $\mathbf{h}_t$ through residual element-wise multiplication:
\[
\mathbf{h}_{t,\tau}
= \mathbf{h}_t \odot \bigl(\mathbf{1}+\phi(\tau)\bigr)
= \mathbf{h}_t + \mathbf{h}_t \odot \phi(\tau),
\]
where $\odot$ denotes element-wise multiplication and $\mathbf{1}$ is the all-ones vector. This formulation retains an identity pathway while allowing quantile-dependent modulation of the encoded features. The fused representation is fed into the prediction head. The encoder, quantile embedding, and prediction head are trained jointly. Unlike the partially monotonic head, this construction does not guarantee to prevent quantile crossing.

We use the first 75\% of the calibration set for training and the remaining 25\% for validation. The model is trained for at most 50 epochs with early stopping using a patience of three epochs, and the checkpoint with the lowest validation loss is selected. Training minimizes the quantile loss evaluated at 64 quantile levels sampled for each training observation. Increasing the number of sampled quantile levels did not affect performance in our experiments. For validation, we evaluate the same quantile loss at the target quantile sets $\{0.10,0.90\}$, $\{0.05,0.95\}$, and $\{0.025,0.975\}$, corresponding to target coverage levels of $0.8$, $0.9$, and $0.95$, respectively. We use AdamW~\citep{loshchilov2017decoupled} with $\beta_1=0.9$, $\beta_2=0.999$, $\epsilon=10^{-8}$, and a weight decay of $0.01$. The initial learning rate is set to $10^{-3}$ for the RNN encoder and $10^{-4}$ for the Transformer encoder. The best hyperparameter configuration is selected using the same grid-search procedure as for DSCP-QR, over the search space specified in Table~\ref{table:proposed_hyperparameters_space}. The best hyperparameter configurations selected for each dataset and base predictor are presented in Tables~\ref{table:proposed_hyperparameters_air}--\ref{table:proposed_hyperparameters_sapflow}.

\paragraph{DSCP-Local}

DSCP-Local constructs prediction intervals using contextual representations from the sequence encoder rather than directly using prediction heads. This design allows flexibility in the choice of the encoder training objective. To encourage the learned representations to capture information about the conditional distribution of prediction residuals, we adopt the same architecture as DSCP-QR and train the model using the quantile loss. The quantile-specific prediction heads support representation learning during training but are not used to construct prediction intervals; instead, DSCP-Local uses only the learned encoder representations.

For similarity computation, we consider dot product, cosine similarity,
and squared Euclidean distance, with a temperature parameter controlling
the concentration of calibration weights. Given a query representation
$h_t$ and a calibration representation $h_i$, we compute
\[
m_i =
\begin{cases}
T h_t^\top h_i, & \text{dot product}, \\[4pt]
\displaystyle T \frac{h_t^\top h_i}{\|h_t\|_2 \|h_i\|_2},
& \text{cosine similarity}, \\[8pt]
\displaystyle -\frac{\|h_t-h_i\|_2^2}{T},
& \text{squared Euclidean distance},
\end{cases}
\]
where $T>0$. We normalize these scores across the calibration window $\mathcal{C}$ using softmax:
\[
w_i = \frac{\exp(m_i)}{\sum_{j \in \mathcal{C}} \exp(m_j)}.
\]
The resulting weights define a sampling distribution over calibration residuals, from which we estimate the residual quantiles used to construct prediction intervals. The experiments use softmax weights, while the DSCP-Local theoretical analysis uses a radial kernel estimator under Assumptions \ref{ass:H1}--\ref{ass:H4}. The experimental softmax variants are therefore not directly covered by the stated compact-support and bandwidth conditions. Both formulations use representation-based weighting, and our analysis identifies sufficient conditions under which this principle yields asymptotic conditional coverage.

We split the calibration set chronologically, using the first 60\% for training, the next 15\% for validation, and the remaining 25\% as a held-out calibration subset. This held-out subset is used to define a similarity-weighted sampling distribution. We train the model for at most 50 epochs with early stopping using a patience of three epochs and select the checkpoint with the lowest validation loss. The validation loss is the same quantile loss used for training, evaluated at the target quantile levels. We compare three choices of target quantile levels: the median alone ($\{0.5\}$), three levels ($\{0.1,0.5,0.9\}$), and nine levels ($\{0.1,0.2,0.3,0.4,0.5,0.6,0.7,0.8,0.9\}$). As shown in Table~\ref{table:ablation_local}, using multiple quantile levels yields modest performance improvements over median-only training, while increasing the number of levels from three to nine produces no noticeable further improvement. We therefore train the encoder using the three quantile levels $\{0.1,0.5,0.9\}$ in all experiments.

We use AdamW~\citep{loshchilov2017decoupled} with $\beta_1=0.9$, $\beta_2=0.999$, $\epsilon=10^{-8}$, and a weight decay of $0.01$. The initial learning rate is set to $10^{-3}$ for the RNN encoder and $10^{-4}$ for the Transformer encoder. To construct prediction intervals with target coverage levels of $0.8$, $0.9$, and $0.95$, we use the target quantile sets $\{0.10,0.90\}$, $\{0.05,0.95\}$, and $\{0.025,0.975\}$, respectively. The best hyperparameter configuration is selected using the same grid-search procedure as for DSCP-QR, over the search space specified in Table~\ref{table:proposed_hyperparameters_space}. The best hyperparameter configurations selected for each dataset and base predictor are presented in Tables~\ref{table:proposed_hyperparameters_air}--\ref{table:proposed_hyperparameters_sapflow}.

\begin{table}[t]
\captionsetup{justification=raggedright,singlelinecheck=false}
\caption{Selected hyperparameters for the proposed methods on the Air dataset across different base predictors.}
\label{table:proposed_hyperparameters_air}
\begin{center}
\begin{small}
\begin{tabular}{lllll}
\toprule
\textbf{Method} & \textbf{Hyperparameter} & \textbf{LR} & \textbf{LSTM} & \textbf{Chronos} \\
\midrule

\multirow{6}{*}{DSCP-QR-RNN}
& Model dimension & 64 & 64 & 32 \\
& Number of layers & 1 & 1 & 2 \\
& Use of current covariate & True & True &  True\\
& Context window size & 200 & 200 & 200 \\
& Prediction head hidden layers & 1 & 1 & 1 \\
& Context features & \texttt{xry} &  \texttt{xr} &   \texttt{xry}\\

\midrule

\multirow{7}{*}{DSCP-QR-Transformer}
& Model dimension & 64 &  32& 64\\
& Number of heads & 4 & 2 &2 \\
& Number of layers & 2 & 4 & 2\\
& Use of current covariate & True & False & False \\
& Context window size & 200 & 100 & 200 \\
& Prediction head hidden layers & 1 & 1 & 1\\
& Context features & \texttt{xry} & \texttt{xr} & \texttt{xry}\\

\midrule

\multirow{6}{*}{DSCP-QFE-RNN}
& Model dimension & 64 & 64 & 64 \\
& Number of layers & 1 & 1 & 1 \\
& Use of current covariate & True & True & True\\
& Context window size & 100 & 100 & 100\\
& Prediction head hidden layers & 1 &1 &1 \\
& Context features & \texttt{xry} & \texttt{xry} & \texttt{xry} \\

\midrule

\multirow{7}{*}{DSCP-QFE-Transformer}
& Model dimension & 64 & 64 & 64\\
& Number of heads & 2& 2& 2\\
& Number of layers &2 & 2& 2\\
& Use of current covariate &False & False & False \\
& Context window size & 100& 100& 100 \\
& Prediction head hidden layers & 1& 1& 1\\
& Context features & \texttt{xry} & \texttt{xry} &\texttt{xry} \\

\midrule

\multirow{9}{*}{DSCP-Local-RNN}
& Model dimension & 64 & 64 & 32 \\
& Number of layers & 2 & 2 & 1 \\
& Use of current covariate & False & False &  False\\
& Context window size & 200 & 200 & 200 \\
& Prediction head hidden layers & 1 & 1 & 1 \\
& Context features & \texttt{xr} &  \texttt{xr} &   \texttt{xry}\\
& Calibration window size  & 500 & 500 & 500 \\
& Similarity function & \texttt{cos} & \texttt{cos} & \texttt{cos} \\
& Temperature & 10.0 & 10.0& 10.0\\

\midrule

\multirow{10}{*}{DSCP-Local-Transformer}
& Model dimension &32 & 64& 32\\
& Number of heads & 2& 4& 4\\
& Number of layers & 4& 2&4 \\
& Use of current covariate &True  & False &  False\\
& Context window size & 200 & 200 &200 \\
& Prediction head hidden layers &1 &1 & 1\\
& Context features & \texttt{xry} & \texttt{xry}& \texttt{xry}\\
& Calibration window size  & 500 & 500 & 500 \\
& Similarity function & \texttt{cos} & \texttt{cos} & \texttt{cos} \\
& Temperature & 10.0 & 10.0& 10.0\\

\bottomrule
\end{tabular}
\end{small}
\end{center}
\end{table}


\begin{table}[t]
\captionsetup{justification=raggedright,singlelinecheck=false}
\caption{Selected hyperparameters for the proposed methods on the Solar dataset across different base predictors.}
\label{table:proposed_hyperparameters_solar}
\begin{center}
\begin{small}
\begin{tabular}{lllll}
\toprule
\textbf{Method} & \textbf{Hyperparameter} & \textbf{LR} & \textbf{LSTM} & \textbf{Chronos} \\
\midrule

\multirow{6}{*}{DSCP-QR-RNN}
& Model dimension & 64 & 32 & 32 \\
& Number of layers & 1 & 1 & 1 \\
& Use of current covariate & True & True & True\\
& Context window size & 200 & 200 & 100\\
& Prediction head hidden layers & 1 & 1 & 1\\
& Context features & \texttt{xry} & \texttt{xry}  & \texttt{xry} \\

\midrule

\multirow{7}{*}{DSCP-QR-Transformer}
& Model dimension & 64 & 64 & 64\\
& Number of heads & 4& 4 & 4\\
& Number of layers &2 &4 & 2\\
& Use of current covariate & True &  False & False\\
& Context window size & 200 & 100 & 200\\
& Prediction head hidden layers &1 & 1& 1\\
& Context features & \texttt{xry} & \texttt{xr} & \texttt{xry} \\

\midrule

\multirow{6}{*}{DSCP-QFE-RNN}
& Model dimension & 64 & 64 & 64 \\
& Number of layers & 1 & 1 & 1 \\
& Use of current covariate & True & True & True\\
& Context window size & 100 & 100 & 100\\
& Prediction head hidden layers & 1 &1 &1 \\
& Context features & \texttt{xry} & \texttt{xry} & \texttt{xry} \\

\midrule

\multirow{7}{*}{DSCP-QFE-Transformer}
& Model dimension & 64 & 64 & 64\\
& Number of heads &2 &2 & 2\\
& Number of layers & 2&2 &2 \\
& Use of current covariate & False & False & False\\
& Context window size & 100 & 100 & 100\\
& Prediction head hidden layers & 1&1 &1 \\
& Context features & \texttt{xry} & \texttt{xry} & \texttt{xry}\\

\midrule

\multirow{9}{*}{DSCP-Local-RNN}
& Model dimension & 64 &64  & 32 \\
& Number of layers & 2 & 2 & 1\\
& Use of current covariate & True & False & False\\
& Context window size & 200 & 200 & 200 \\
& Prediction head hidden layers & 1 & 1 & 1\\
& Context features & \texttt{xr} & \texttt{xry}& \texttt{xr}\\
& Calibration window size  & 500 & 500 & 500 \\
& Similarity function & \texttt{cos} & \texttt{cos} & \texttt{cos} \\
& Temperature & 10.0 & 10.0& 10.0\\

\midrule

\multirow{10}{*}{DSCP-Local-Transformer}
& Model dimension & 32& 64 & 32\\
& Number of heads & 2& 4& 4\\
& Number of layers &4 & 2 & 4\\
& Use of current covariate & True & False & False\\
& Context window size &200 &200 & 200 \\
& Prediction head hidden layers & 1&1 &1 \\
& Context features & \texttt{xry} & \texttt{xry}& \texttt{xry}\\
& Calibration window size  & 500 & 500 & 500 \\
& Similarity function & \texttt{cos} & \texttt{cos} & \texttt{cos} \\
& Temperature & 10.0 & 10.0& 10.0\\

\bottomrule
\end{tabular}
\end{small}
\end{center}
\end{table}


\begin{table}[t]
\captionsetup{justification=raggedright,singlelinecheck=false}
\caption{Selected hyperparameters for the proposed methods on the Sapflow dataset across different base predictors.}
\label{table:proposed_hyperparameters_sapflow}
\begin{center}
\begin{small}
\begin{tabular}{lllll}
\toprule
\textbf{Method} & \textbf{Hyperparameter} & \textbf{LR} & \textbf{LSTM} & \textbf{Chronos} \\
\midrule

\multirow{6}{*}{DSCP-QR-RNN}
& Model dimension & 64 & 32 & 32\\
& Number of layers & 1 & 1 & 1\\
& Use of current covariate & False & False & False\\
& Context window size & 100 & 100 &  100\\
& Prediction head hidden layers &  1&  1& 1 \\
& Context features & \texttt{xr} & \texttt{xr} & \texttt{xr} \\

\midrule

\multirow{7}{*}{DSCP-QR-Transformer}
& Model dimension & 64 &64  & 64 \\
& Number of heads & 4& 4& 4\\
& Number of layers & 2& 2& 2\\
& Use of current covariate &False  &False &False \\
& Context window size & 200 & 200 & 200 \\
& Prediction head hidden layers & 1& 1&1 \\
& Context features & \texttt{xr} & \texttt{xr}& \texttt{xr}\\

\midrule

\multirow{6}{*}{DSCP-QFE-RNN}
& Model dimension & 64 & 64 & 64 \\
& Number of layers & 1 & 1 & 1 \\
& Use of current covariate & False & False & False\\
& Context window size & 100 & 100 & 100\\
& Prediction head hidden layers & 1 &1 &1 \\
& Context features & \texttt{xry} & \texttt{xry} & \texttt{xry} \\

\midrule

\multirow{7}{*}{DSCP-QFE-Transformer}
& Model dimension & 64 & 64 & 64\\
& Number of heads &2 &2 & 4\\
& Number of layers & 2&2 &2 \\
& Use of current covariate & False & False & False\\
& Context window size & 100 & 100 & 100\\
& Prediction head hidden layers & 1&1 &1 \\
& Context features & \texttt{xry} & \texttt{xry} & \texttt{xry}\\

\midrule

\multirow{9}{*}{DSCP-Local-RNN}
& Model dimension & 64 & 64 & 32\\
& Number of layers &2 & 2& 1\\
& Use of current covariate & False & False & False\\
& Context window size & 200 & 200 & 200\\
& Prediction head hidden layers &1 &1 & 1\\
& Context features & \texttt{xr} & \texttt{xry}& \texttt{xr}\\
& Calibration window size  & 500 & 500 & 500 \\
& Similarity function & \texttt{cos} & \texttt{cos} & \texttt{cos} \\
& Temperature & 10.0 & 10.0& 10.0\\

\midrule

\multirow{10}{*}{DSCP-Local-Transformer}
& Model dimension & 32& 64& 32\\
& Number of heads & 2& 4& 4\\
& Number of layers & 4& 2& 4\\
& Use of current covariate & True & False & False\\
& Context window size &200 &200 & 200 \\
& Prediction head hidden layers &1 & 1& 1\\
& Context features & \texttt{xry} & \texttt{xry}& \texttt{xry}\\
& Calibration window size  & 500 & 500 & 500 \\
& Similarity function & \texttt{cos} & \texttt{cos} & \texttt{cos} \\
& Temperature & 10.0 & 10.0& 10.0\\

\bottomrule
\end{tabular}
\end{small}
\end{center}
\end{table}

\begin{table}[t]
\caption{Hyperparameter search spaces used for the baseline methods.}
\label{table:baseline_hyperparameters}
\begin{center}
\begin{small}
\begin{tabular}{lll}
\toprule
\textbf{Method} & \textbf{Hyperparameter} & \textbf{Search space} \\
\midrule

\multirow{2}{*}{NexCP}
& Decay factor $\rho$
& $0.9,\; 0.95,\; 0.99,\; 0.999$ \\
& Maximum past-window length
& $100,\; 200,\; 500$ \\

\midrule
\multirow{3}{*}{SPCI}
& Number of estimators 
& $10,\; 20,\; 50$ \\
& Max depth
& $2,\; 5,\; 10$ \\
& Window size
& $100,\; 200,\; 500$ \\

\midrule
\multirow{2}{*}{KOWCPI}
& Past-window length
& $5,\; 10,\; 20,\; 50,\; 100,\; 200$ \\
& Bandwidth
& $1,\; 5,\; 10,\; 20,\; 50,\; 100$ \\

\midrule
\multirow{5}{*}{ResCP}
& Spectral radius $\gamma$
& $0.5,\; 0.9,\; 1.2,\; 1.5$ \\
& Leak rate
& $0.5,\; 0.8,\; 1.0$ \\
& Input scaling
& $0.1,\; 0.25,\; 0.75,\; 1.5$ \\
& Temperature
& $0.03,\; 0.1,\; 0.3$ \\
& Calibration size
& $100,\; 500,\; 1500,\; 4000$ \\

\midrule
\multirow{2}{*}{DistMatch}
& KS threshold
& $0.01,\; 0.05,\; 0.1$ \\
& Past-window length
& $100$ \\

\bottomrule
\end{tabular}
\end{small}
\end{center}
\end{table}


\subsection{Implementation Details of Baseline Methods}
\label{appendix:baselines}

\paragraph{NexCP~\citep{barber2023conformal}}

We use the calibration set to calibrate the prediction intervals. For hyperparameter selection, we further split the calibration set into training and validation subsets using an 85\% / 15\% split. The best hyperparameter configuration is selected using the same grid-search procedure as for the proposed methods, over the search space specified in Table~\ref{table:baseline_hyperparameters}. The best hyperparameter configurations selected for each dataset and base predictor are presented in Tables~\ref{table:baseline_hyperparameters_air}--\ref{table:baseline_hyperparameters_sapflow}.

\paragraph{HopCPT~\citep{auer2024conformal}}

Because the datasets used in the original HopCPT experiments include the three datasets used in our study (Air, Solar, and Sapflow), we follow the hyperparameter selection procedure described in the original paper. We split the calibration set equally into training and validation subsets.

\paragraph{SPCI~\citep{xu2023sequential}}

We calibrate SPCI prediction intervals using the calibration set. For hyperparameter selection, we divide this set into training and validation subsets containing 85\% and 15\% of the observations, respectively. The best hyperparameter configuration is selected using the same grid-search procedure as for the proposed methods, over the search space specified in Table~\ref{table:baseline_hyperparameters}. The best hyperparameter configurations selected for each dataset and base predictor are presented in Tables~\ref{table:baseline_hyperparameters_air}--\ref{table:baseline_hyperparameters_sapflow}.

In its original formulation, SPCI retrains its random forest regressor at each prediction step, which can be computationally prohibitive for larger datasets. \citet{auer2024conformal} identify the same computational bottleneck and adopt a modified implementation that trains the regressor only on the calibration set, reporting only a small performance decrease in experiments on their smallest dataset. We therefore adopt the same modification and keep the regressor fixed during prediction. This choice is also consistent with the other methods in our comparison, whose models are not retrained after calibration.

\paragraph{KOWCPI~\citep{lee2024kernel}}

We use calibration set to calibrate the prediction intervals. For hyperparameter selection, we divide this set into training and validation subsets containing 85\% and 15\% of the observations, respectively. We use Epanechnikov kernel for all experiments. The best hyperparameter configuration is selected using the same grid-search procedure as for the proposed methods, over the search space specified in Table~\ref{table:baseline_hyperparameters}. The best hyperparameter configurations selected for each dataset and base predictor are presented in Tables~\ref{table:baseline_hyperparameters_air}--\ref{table:baseline_hyperparameters_sapflow}.

\paragraph{ResCP~\citep{neglia2026rescp}}

We calibrate the prediction intervals using the calibration set. Because the Air and Solar datasets were also evaluated in the original publication, we adopt the hyperparameter configurations reported for these datasets and perform hyperparameter selection only for Sapflow. For this selection, we divide the Sapflow calibration set into training and validation subsets containing 85\% and 15\% of the observations, respectively. We select the best configuration using the same grid-search procedure as for the proposed methods, over the search space specified in Table~\ref{table:baseline_hyperparameters}. The hyperparameter configurations used for each dataset and base predictor are reported in Tables~\ref{table:baseline_hyperparameters_air}--\ref{table:baseline_hyperparameters_sapflow}.

\paragraph{DistMatch~\citep{menadjiev2026distmatch}}

We tune the KS matching threshold $\gamma \in \{0.01,0.05,0.1\}$ while fixing the residual patch length at $w=100$, following the authors experimental setup. The number of matching trees, bootstrap sampling ratio, and minimum node-size parameter are fixed at $10$, $0.9$, and $0$, respectively, as specified in Appendix of \citet{menadjiev2026distmatch}. We select the best configuration using the same grid-search procedure as for the proposed methods, over the search space specified in Table~\ref{table:baseline_hyperparameters}. The hyperparameter configurations used for each dataset and base predictor are reported in Tables~\ref{table:baseline_hyperparameters_air}--\ref{table:baseline_hyperparameters_sapflow}.


\begin{table}[t]
\caption{Selected hyperparameters for the baseline methods across different base predictors on Air dataset.}
\label{table:baseline_hyperparameters_air}
\begin{center}
\begin{small}
\begin{tabular}{lllll}
\toprule
\textbf{Method} 
& \textbf{Hyperparameter} 
& \textbf{LR} 
& \textbf{LSTM} 
& \textbf{Chronos} \\
\midrule

\multirow{2}{*}{NexCP}
& Decay factor $\rho$
& 0.99 & 0.999 & 0.99 \\
& Maximum past-window length
&  500 & 200 & 500  \\

\midrule

\multirow{3}{*}{SPCI}
& Number of estimators 
&  50 & 50 & 50 \\
& Max depth
& 10 & 10 & 10 \\
& Window size
& 100 & 100 & 200 \\

\midrule

\multirow{1}{*}{KOWCPI}
& Past-window length
& 100 & 100 & 100 \\

\midrule

\multirow{5}{*}{ResCP}
& Spectral radius $\gamma$
& 1.45  & 1.3 &  1.0 \\
& Leak rate
& 0.65 & 0.95 & 0.8 \\
& Input scaling
& 0.75 & 0.25 &  0.25 \\
& Temperature
& 0.15 & 0.1 &  0.1 \\
& Calibration size
& 3800 & 3200 &  10500 \\

\midrule

\multirow{2}{*}{DistMatch}
& KS threshold
& 0.1  & 0.1 & 0.1 \\
& Past-window length
& 100 & 100 & 100 \\

\bottomrule
\end{tabular}
\end{small}
\end{center}
\end{table}


\begin{table}[t]
\caption{Selected hyperparameters for the baseline methods across different base predictors on Solar dataset.}
\label{table:baseline_hyperparameters_solar}
\begin{center}
\begin{small}
\begin{tabular}{lllll}
\toprule
\textbf{Method} 
& \textbf{Hyperparameter} 
& \textbf{LR} 
& \textbf{LSTM} 
& \textbf{Chronos} \\
\midrule

\multirow{2}{*}{NexCP}
& Decay factor $\rho$
& 0.999 & 0.99 & 0.99 \\
& Maximum past-window length
& 100 & 100 &  100 \\

\midrule

\multirow{3}{*}{SPCI}
& Number of estimators 
&  50 & 50 & 50 \\
& Max depth
& 10 & 10 & 10 \\
& Window size
& 100 & 100 & 100 \\

\midrule

\multirow{1}{*}{KOWCPI}
& Past-window length
& 100 & 100 & 100 \\

\midrule

\multirow{5}{*}{ResCP}
& Spectral radius $\gamma$
& 1.0 &  0.9 &  0.9 \\
& Leak rate
& 0.8 & 0.75 & 0.8 \\
& Input scaling
& 0.2 & 0.7 & 0.4 \\
& Temperature
& 0.1 & 0.1 & 0.1 \\
& Calibration size
& 1500 & 3900 & 7600 \\

\midrule

\multirow{2}{*}{DistMatch}
& KS threshold
& 0.1  & 0.1 & 0.1 \\
& Past-window length
& 100 & 100 & 100 \\

\bottomrule
\end{tabular}
\end{small}
\end{center}
\end{table}


\begin{table}[t]
\caption{Selected hyperparameters for the baseline methods across different base predictors on Sapflow dataset.}
\label{table:baseline_hyperparameters_sapflow}
\begin{center}
\begin{small}
\begin{tabular}{lllll}
\toprule
\textbf{Method} 
& \textbf{Hyperparameter} 
& \textbf{LR} 
& \textbf{LSTM} 
& \textbf{Chronos} \\
\midrule

\multirow{2}{*}{NexCP}
& Decay factor $\rho$
& 0.99  & 0.99 &  0.99 \\
& Maximum past-window length
& 500 & 200 &  200 \\

\midrule

\multirow{3}{*}{SPCI}
& Number of estimators 
&  50 & 50 & 50 \\
& Max depth
& 10 & 10 & 10 \\
& Window size
& 200 & 200 & 200 \\

\midrule

\multirow{1}{*}{KOWCPI}
& Past-window length
& 100 & 100 & 100 \\

\midrule

\multirow{5}{*}{ResCP}
& Spectral radius $\gamma$
& 1.2 & 1.2 & 1.2 \\
& Leak rate
& 0.9 &0.9  & 0.9 \\
& Input scaling
& 0.25 & 0.25 & 0.25 \\
& Temperature
& 0.1 & 0.1 & 0.1 \\
& Calibration size
& 3800 & 3800 & 3800 \\

\midrule

\multirow{2}{*}{DistMatch}
& KS threshold
& 0.1  & 0.1 & 0.1 \\
& Past-window length
& 100 & 100 & 100 \\

\bottomrule
\end{tabular}
\end{small}
\end{center}
\end{table}

\subsection{Evaluation Metrics}
\label{appendix:eval_metrics}

Given a prediction interval at time $t$,
\[
\widehat{C}^{\alpha}_t
=
\widehat{C}^{\alpha}
\left(
\hat{q}_t^{\alpha_l},
\hat{q}_t^{\alpha_h},
\hat{y}_t
\right)
=
\left[
\hat{y}_t + \hat{q}_t^{\alpha_l},
\hat{y}_t + \hat{q}_t^{\alpha_h}
\right],
\]
we denote its lower and upper endpoints by
\[
L_t = \hat{y}_t + \hat{q}_t^{\alpha_l},
\qquad
U_t = \hat{y}_t + \hat{q}_t^{\alpha_h},
\]
respectively. Here, $\alpha$ denotes the significance level, such that the target coverage is $1-\alpha$. Let $\mathcal{D}_{\mathrm{test}}$ denote the set of observations in a test sequence. We evaluate the prediction intervals using the following metrics.

\paragraph{Coverage gap ($\Delta \mathrm{Cov}$).}

The coverage gap for a test sequence at significance level $\alpha$ is
defined as
\begin{equation}
\Delta \mathrm{Cov}
=
\frac{1}{\lvert \mathcal{D}_{\mathrm{test}} \rvert}
\sum_{t \in \mathcal{D}_{\mathrm{test}}}
\mathds{1}
\left(
y_t \in \widehat{C}^{\alpha}_t
\right)
-
(1-\alpha).
\end{equation}
The coverage gap measures the difference between the empirical coverage
and the target coverage $1-\alpha$. A positive value indicates
overcoverage, whereas a negative value indicates undercoverage.
For each dataset, we report the coverage gap averaged across all test
sequences.

\paragraph{Average prediction interval width (PI-W).}

The average prediction interval width for a test sequence is defined as
\begin{equation}
\mathrm{PI\text{-}W}
=
\frac{1}{\lvert \mathcal{D}_{\mathrm{test}} \rvert}
\sum_{t \in \mathcal{D}_{\mathrm{test}}}
\left(
U_t - L_t
\right)
=
\frac{1}{\lvert \mathcal{D}_{\mathrm{test}} \rvert}
\sum_{t \in \mathcal{D}_{\mathrm{test}}}
\left(
\hat{q}_t^{\alpha_h}
-
\hat{q}_t^{\alpha_l}
\right).
\end{equation}
A smaller interval width indicates a more efficient prediction interval,
provided that the target coverage is obtained. For each dataset, we report the average prediction interval width across all test sequences.

\paragraph{Winkler score (WS).}

The Winkler score~\citep{winkler1972decision} jointly evaluates prediction
interval width and coverage. It is equal to the interval width when the
true outcome falls within the prediction interval and adds a penalty when
the outcome falls outside the interval. The penalty is proportional to the distance between the true outcome and the nearest interval endpoint, with a factor of $2/\alpha$. At time $t$, the Winkler score is defined as
\begin{equation}
\textrm{WS}_t =
\begin{cases}
(U_t - L_t)
+ \dfrac{2}{\alpha}(L_t - y_t),
& \text{if } y_t < L_t, \\[6pt]
(U_t - L_t),
& \text{if } L_t \leq y_t \leq U_t, \\[6pt]
(U_t - L_t)
+ \dfrac{2}{\alpha}(y_t - U_t),
& \text{if } y_t > U_t.
\end{cases}
\end{equation}
The average Winkler score for a test sequence is then given by
\begin{equation}
\mathrm{WS}
=
\frac{1}{\lvert \mathcal{D}_{\mathrm{test}} \rvert}
\sum_{t \in \mathcal{D}_{\mathrm{test}}} \textrm{WS}_t.
\end{equation}
Lower Winkler scores indicate smaller prediction intervals with fewer
or less severe coverage violations. For each dataset, we report the
Winkler score averaged across all test sequences.

\begin{table}[!htbp]
\centering
\caption{Coverage gap ($\Delta$Cov), prediction interval width (PI-W), and Winkler score (WS) averaged across sequences in each dataset for the proposed methods with different sequence encoder: RNN and Transformer. The target coverage is $1-\alpha=0.9$. The best result among two encoder choices for each method except the undercoveraged results based on Winkler score is \underline{underlined}.}
\label{tab:...}

\begin{adjustbox}{max width=\linewidth}
\begin{tabular}{lccccccccc}
\toprule
& \multicolumn{9}{c}{Air} \\
\cmidrule(lr){2-10}
Method
& \multicolumn{3}{c}{LR}
& \multicolumn{3}{c}{LSTM}
& \multicolumn{3}{c}{Chronos} \\
\cmidrule(lr){2-4}\cmidrule(lr){5-7}\cmidrule(lr){8-10}
& $\Delta$Cov & PI-W & WS
& $\Delta$Cov & PI-W & WS
& $\Delta$Cov & PI-W & WS \\
\midrule
QR-RNN   & $+0.012_{\scriptscriptstyle \pm .016}$ & $85.3_{\scriptscriptstyle \pm 10.5}$ & $\underline{121.3}_{\scriptscriptstyle \pm 12.5}$ & $-0.021_{\scriptscriptstyle \pm .017}$ & $61.1_{\scriptscriptstyle \pm 9.32}$ & $\underline{103.4}_{\scriptscriptstyle \pm 12.1}$ & $+0.014_{\scriptscriptstyle \pm .014}$ & $87.1_{\scriptscriptstyle \pm 10.7}$ & $\underline{125.5}_{\scriptscriptstyle \pm 10.6}$ \\
QR-Transformer   & $-0.004_{\scriptscriptstyle \pm .014}$ & $92.3_{\scriptscriptstyle \pm 14.7}$ & $136.3_{\scriptscriptstyle \pm 18.5}$ & $-0.018_{\scriptscriptstyle \pm .018}$ & $69.4_{\scriptscriptstyle \pm 10.6}$ & $124.2_{\scriptscriptstyle \pm 16.0}$ & $-0.002_{\scriptscriptstyle \pm .013}$ & $93.5_{\scriptscriptstyle \pm 9.82}$ & $146.3_{\scriptscriptstyle \pm 13.2}$ \\

QFE-RNN   & $-0.064_{\scriptscriptstyle \pm .015}$ & $61.3_{\scriptscriptstyle \pm 6.40}$ & $\underline{132.7}_{\scriptscriptstyle \pm 15.6}$ & $-0.126_{\scriptscriptstyle \pm .026}$ & $37.9_{\scriptscriptstyle \pm 4.73}$ & $\underline{124.4}_{\scriptscriptstyle \pm 16.3}$ & $-0.091_{\scriptscriptstyle \pm .032}$ & $52.5_{\scriptscriptstyle \pm 7.39}$ & $\underline{170.1}_{\scriptscriptstyle \pm 13.1}$ \\
QFE-Transformer   & $-0.084_{\scriptscriptstyle \pm .021}$ & $63.4_{\scriptscriptstyle \pm 6.78}$ & $152.7_{\scriptscriptstyle \pm 19.8}$ & $-0.112_{\scriptscriptstyle \pm .028}$ & $45.7_{\scriptscriptstyle \pm 6.78}$ & $143.4_{\scriptscriptstyle \pm 22.3}$ & $-0.096_{\scriptscriptstyle \pm .022}$ & $58.2_{\scriptscriptstyle \pm 6.31}$ & $196.1_{\scriptscriptstyle \pm 17.1}$ \\
\bottomrule
\end{tabular}
\end{adjustbox}

\end{table}


\begin{table}[!htbp]
\centering
\caption{Coverage gap ($\Delta$Cov), prediction interval width (PI-W), and Winkler score (WS) averaged across sequences in each dataset for the proposed methods with different sequence encoder: RNN and Transformer. The target coverage is $1-\alpha=0.95$. The best result among two encoder choices for each method except the undercoveraged results based on Winkler score is \underline{underlined}.}
\label{tab:...}

\begin{adjustbox}{max width=\linewidth}
\begin{tabular}{lccccccccc}
\toprule

& \multicolumn{9}{c}{Air} \\
\cmidrule(lr){2-10}

Method
& \multicolumn{3}{c}{LR}
& \multicolumn{3}{c}{LSTM}
& \multicolumn{3}{c}{Chronos} \\

\cmidrule(lr){2-4}
\cmidrule(lr){5-7}
\cmidrule(lr){8-10}

& $\Delta$Cov & PI-W & WS
& $\Delta$Cov & PI-W & WS
& $\Delta$Cov & PI-W & WS \\

\midrule

QR-RNN
& $-0.005_{\scriptscriptstyle \pm .009}$
& $107.4_{\scriptscriptstyle \pm 15.0}$
& $\underline{155.6}_{\scriptscriptstyle \pm 17.1}$
& $-0.027_{\scriptscriptstyle \pm .011}$
& $78.1_{\scriptscriptstyle \pm 7.74}$
& $\underline{135.8}_{\scriptscriptstyle \pm 19.0}$
& $+0.001_{\scriptscriptstyle \pm .013}$
& $109.0_{\scriptscriptstyle \pm 13.3}$
& $\underline{159.8}_{\scriptscriptstyle \pm 13.8}$ \\

QR-Transformer
& $-0.010_{\scriptscriptstyle \pm .012}$
& $116.0_{\scriptscriptstyle \pm 18.2}$
& $172.7_{\scriptscriptstyle \pm 21.7}$
& $-0.016_{\scriptscriptstyle \pm .009}$
& $93.7_{\scriptscriptstyle \pm 14.1}$
& $166.2_{\scriptscriptstyle \pm 21.8}$
& $-0.006_{\scriptscriptstyle \pm .012}$
& $121.1_{\scriptscriptstyle \pm 14.9}$
& $191.3_{\scriptscriptstyle \pm 16.6}$ \\

QFE-RNN
& $-0.097_{\scriptscriptstyle \pm .017}$
& $64.8_{\scriptscriptstyle \pm 7.03}$
& $\underline{196.8}_{\scriptscriptstyle \pm 27.0}$
& $-0.153_{\scriptscriptstyle \pm .032}$
& $40.3_{\scriptscriptstyle \pm 4.47}$
& $\underline{203.7}_{\scriptscriptstyle \pm 32.7}$
& $-0.113_{\scriptscriptstyle \pm .035}$
& $60.0_{\scriptscriptstyle \pm 8.84}$
& $\underline{282.4}_{\scriptscriptstyle \pm 22.6}$ \\

QFE-Transformer
& $-0.118_{\scriptscriptstyle \pm .022}$
& $66.8_{\scriptscriptstyle \pm 6.93}$
& $234.7_{\scriptscriptstyle \pm 37.4}$
& $-0.138_{\scriptscriptstyle \pm .025}$
& $49.2_{\scriptscriptstyle \pm 5.23}$
& $230.0_{\scriptscriptstyle \pm 36.9}$
& $-0.128_{\scriptscriptstyle \pm 0.024}$
& $61.8_{\scriptscriptstyle \pm 7.13}$
& $325.3_{\scriptscriptstyle \pm 27.1}$ \\
\bottomrule
\end{tabular}
\end{adjustbox}

\end{table}


\begin{table}[!htbp]
\centering
\caption{Coverage gap ($\Delta$Cov), prediction interval width (PI-W), and Winkler score (WS) averaged across sequences in each dataset for the proposed methods with different sequence encoder: RNN and Transformer. The target coverage is $1-\alpha=0.8$. The best result among two encoder choices for each method except the undercoveraged results based on Winkler score is \underline{underlined}.}
\label{tab:...}

\begin{adjustbox}{max width=\linewidth}
\begin{tabular}{lccccccccc}
\toprule
& \multicolumn{9}{c}{Air} \\
\cmidrule(lr){2-10}
Method
& \multicolumn{3}{c}{LR}
& \multicolumn{3}{c}{LSTM}
& \multicolumn{3}{c}{Chronos} \\
\cmidrule(lr){2-4}\cmidrule(lr){5-7}\cmidrule(lr){8-10}
& $\Delta$Cov & PI-W & WS
& $\Delta$Cov & PI-W & WS
& $\Delta$Cov & PI-W & WS \\
\midrule
QR-RNN
& $+0.011_{\scriptscriptstyle \pm 0.019}$
& $60.9_{\scriptscriptstyle \pm 7.9}$
& $\underline{93.7}_{\scriptscriptstyle \pm 10.9}$
& $-0.022_{\scriptscriptstyle \pm 0.028}$
& $43.1_{\scriptscriptstyle \pm 6.9}$
& $\underline{77.7}_{\scriptscriptstyle \pm 9.5}$
& $+0.016_{\scriptscriptstyle \pm 0.026}$
& $62.5_{\scriptscriptstyle \pm 8.1}$
& $\underline{97.4}_{\scriptscriptstyle \pm 7.8}$ \\

QR-Transformer
& $+0.016_{\scriptscriptstyle \pm 0.023}$
& $69.3_{\scriptscriptstyle \pm 10.7}$
& $105.6_{\scriptscriptstyle \pm 14.6}$
& $-0.022_{\scriptscriptstyle \pm 0.023}$
& $47.9_{\scriptscriptstyle \pm 4.0}$
& $91.9_{\scriptscriptstyle \pm 11.7}$
& $-0.000_{\scriptscriptstyle \pm 0.032}$
& $68.2_{\scriptscriptstyle \pm 7.6}$
& $113.8_{\scriptscriptstyle \pm 9.9}$ \\

QFE-RNN
& $-0.002_{\scriptscriptstyle \pm 0.022}$
& $55.0_{\scriptscriptstyle \pm 5.8}$
& $\underline{96.4}_{\scriptscriptstyle \pm 10.3}$
& $-0.073_{\scriptscriptstyle \pm 0.022}$
& $33.1_{\scriptscriptstyle \pm 4.1}$
& $\underline{81.9}_{\scriptscriptstyle \pm 9.7}$
& $-0.032_{\scriptscriptstyle \pm 0.034}$
& $45.4_{\scriptscriptstyle \pm 6.6}$
& $\underline{109.8}_{\scriptscriptstyle \pm 8.6}$ \\

QFE-Transformer
& $-0.033_{\scriptscriptstyle \pm 0.032}$
& $55.2_{\scriptscriptstyle \pm 4.4}$
& $109.5_{\scriptscriptstyle \pm 13.7}$
& $-0.055_{\scriptscriptstyle \pm 0.030}$
& $40.2_{\scriptscriptstyle \pm 4.6}$
& $95.0_{\scriptscriptstyle \pm 12.8}$
& $-0.029_{\scriptscriptstyle \pm 0.028}$
& $51.9_{\scriptscriptstyle \pm 6.4}$
& $127.4_{\scriptscriptstyle \pm 10.7}$ \\
\bottomrule
\end{tabular}
\end{adjustbox}

\end{table}

\begin{table}[!htbp]
\centering
\caption{Coverage gap ($\Delta$Cov), prediction interval width (PI-W), and Winkler score (WS), averaged across sequences within each dataset, for all methods. The target coverage is $1-\alpha=0.95$.}
\label{table:main_results_2}

\begin{adjustbox}{max width=\linewidth}
\begin{tabular}{lccccccccc}
\toprule
& \multicolumn{9}{c}{Air} \\
\cmidrule(lr){2-10}
Method
& \multicolumn{3}{c}{LR}
& \multicolumn{3}{c}{LSTM}
& \multicolumn{3}{c}{Chronos} \\
\cmidrule(lr){2-4}\cmidrule(lr){5-7}\cmidrule(lr){8-10}
& $\Delta$Cov & PI-W & WS
& $\Delta$Cov & PI-W & WS
& $\Delta$Cov & PI-W & WS \\
\midrule
Split CP
& $+0.014_{\scriptscriptstyle \pm 0.008}$
& $136.3_{\scriptscriptstyle \pm 14.4}$
& $198.8_{\scriptscriptstyle \pm 17.4}$
& $-0.002_{\scriptscriptstyle \pm 0.012}$
& $104.3_{\scriptscriptstyle \pm 11.2}$
& $187.2_{\scriptscriptstyle \pm 20.0}$
& $+0.002_{\scriptscriptstyle \pm 0.005}$
& $256.0_{\scriptscriptstyle \pm 20.4}$
& $422.4_{\scriptscriptstyle \pm 29.6}$ \\

NexCP
& $-0.002_{\scriptscriptstyle \pm 0.001}$
& $118.2_{\scriptscriptstyle \pm 14.4}$
& $185.6_{\scriptscriptstyle \pm 19.1}$
& $-0.005_{\scriptscriptstyle \pm 0.001}$
& $107.3_{\scriptscriptstyle \pm 13.2}$
& $179.6_{\scriptscriptstyle \pm 19.6}$
& $-0.005_{\scriptscriptstyle \pm 0.002}$
& $253.1_{\scriptscriptstyle \pm 18.0}$
& $362.8_{\scriptscriptstyle \pm 22.4}$ \\

HopCPT      & $-0.014_{\scriptscriptstyle \pm .015}$ & $108.4_{\scriptscriptstyle \pm 18.8}$ & $196.9_{\scriptscriptstyle \pm 19.8}$ & $-0.007_{\scriptscriptstyle \pm .009}$ & $100.6_{\scriptscriptstyle \pm 12.2}$ & $181.9_{\scriptscriptstyle \pm 18.8}$ & $-0.010_{\scriptscriptstyle \pm .011}$ & $232.2_{\scriptscriptstyle \pm 28.0}$ & $412.9_{\scriptscriptstyle \pm 41.2}$ \\

SPCI
& $-0.140_{\scriptscriptstyle \pm 0.005}$
& $80.3_{\scriptscriptstyle \pm 9.4}$
& $235.1_{\scriptscriptstyle \pm 23.7}$
& $-0.139_{\scriptscriptstyle \pm 0.005}$
& $67.9_{\scriptscriptstyle \pm 8.4}$
& $208.2_{\scriptscriptstyle \pm 24.6}$
& $-0.118_{\scriptscriptstyle \pm 0.008}$
& $86.2_{\scriptscriptstyle \pm 7.7}$
& $230.9_{\scriptscriptstyle \pm 19.2}$ \\
\bottomrule
\end{tabular}
\end{adjustbox}

\end{table}

\begin{table}[!htbp]
\centering
\caption{Coverage gap ($\Delta$Cov), prediction interval width (PI-W), and Winkler score (WS), averaged across sequences within each dataset, for all methods. The target coverage is $1-\alpha=0.8$.}
\label{table:main_results_3}

\begin{adjustbox}{max width=\linewidth}
\begin{tabular}{lccccccccc}
\toprule
& \multicolumn{9}{c}{Air} \\
\cmidrule(lr){2-10}
Method
& \multicolumn{3}{c}{LR}
& \multicolumn{3}{c}{LSTM}
& \multicolumn{3}{c}{Chronos} \\
\cmidrule(lr){2-4}\cmidrule(lr){5-7}\cmidrule(lr){8-10}
& $\Delta$Cov & PI-W & WS
& $\Delta$Cov & PI-W & WS
& $\Delta$Cov & PI-W & WS \\
\midrule
Split CP
& $+0.045_{\scriptscriptstyle \pm 0.015}$
& $68.4_{\scriptscriptstyle \pm 7.9}$
& $110.1_{\scriptscriptstyle \pm 11.7}$
& $-0.001_{\scriptscriptstyle \pm 0.017}$
& $48.9_{\scriptscriptstyle \pm 5.2}$
& $98.4_{\scriptscriptstyle \pm 11.0}$
& $+0.030_{\scriptscriptstyle \pm 0.016}$
& $125.8_{\scriptscriptstyle \pm 13.7}$
& $227.9_{\scriptscriptstyle \pm 17.0}$ \\

NexCP
& $+0.001_{\scriptscriptstyle \pm 0.002}$
& $61.0_{\scriptscriptstyle \pm 7.6}$
& $106.7_{\scriptscriptstyle \pm 12.0}$
& $-0.005_{\scriptscriptstyle \pm 0.002}$
& $51.0_{\scriptscriptstyle \pm 5.9}$
& $97.3_{\scriptscriptstyle \pm 10.9}$
& $-0.005_{\scriptscriptstyle \pm 0.002}$
& $118.0_{\scriptscriptstyle \pm 10.8}$
& $214.8_{\scriptscriptstyle \pm 15.6}$ \\

HopCPT      & $-0.023_{\scriptscriptstyle \pm .028}$ & $55.1_{\scriptscriptstyle \pm 8.32}$ & $107.7_{\scriptscriptstyle \pm 11.8}$ & $-0.033_{\scriptscriptstyle \pm .033}$ & $44.9_{\scriptscriptstyle \pm 4.87}$ & $96.9_{\scriptscriptstyle \pm 10.6}$ & $-0.010_{\scriptscriptstyle \pm .025}$ & $105.7_{\scriptscriptstyle \pm 17.8}$ & $219.6_{\scriptscriptstyle \pm 29.9}$ \\

SPCI
& $-0.151_{\scriptscriptstyle \pm 0.008}$
& $50.2_{\scriptscriptstyle \pm 6.1}$
& $118.1_{\scriptscriptstyle \pm 13.4}$
& $-0.146_{\scriptscriptstyle \pm 0.008}$
& $40.4_{\scriptscriptstyle \pm 5.1}$
& $99.7_{\scriptscriptstyle \pm 12.5}$
& $-0.113_{\scriptscriptstyle \pm 0.011}$
& $53.2_{\scriptscriptstyle \pm 4.9}$
& $116.7_{\scriptscriptstyle \pm 10.3}$ \\
\bottomrule
\end{tabular}
\end{adjustbox}

\end{table}


\begin{table}[!htbp]
\centering
\caption{Ablation study of architectural constraints for preventing quantile crossing in DSCP-QR and DSCP-QFE. The suffix ``-C'' denotes the corresponding method without these constraints. CR denotes the quantile crossing occurrence rate across all sequences.}
\label{table:ablation_crossing}

\begin{adjustbox}{max width=\linewidth}
\begin{tabular}{lcccccccccccc}
\toprule

& \multicolumn{12}{c}{Air} \\
\cmidrule(lr){2-13}

Method
& \multicolumn{4}{c}{LR}
& \multicolumn{4}{c}{LSTM}
& \multicolumn{4}{c}{Chronos} \\

\cmidrule(lr){2-5}
\cmidrule(lr){6-9}
\cmidrule(lr){10-13}

& $\Delta$Cov & PI-W & WS & CR
& $\Delta$Cov & PI-W & WS & CR
& $\Delta$Cov & PI-W & WS & CR \\

\midrule

QR-RNN   & $+0.012_{\scriptscriptstyle \pm .016}$ & $85.3_{\scriptscriptstyle \pm 10.5}$ & $121.3_{\scriptscriptstyle \pm 12.5}$ & $0.000\%$ & $-0.021_{\scriptscriptstyle \pm .017}$ & $61.1_{\scriptscriptstyle \pm 9.32}$ & $103.4_{\scriptscriptstyle \pm 12.1}$ & $0.000\%$  & $+0.014_{\scriptscriptstyle \pm .014}$ & $87.1_{\scriptscriptstyle \pm 10.7}$ & $125.5_{\scriptscriptstyle \pm 10.6}$ & $0.000\%$  \\
QR-Transformer   
& $-0.004_{\scriptscriptstyle \pm .014}$ & 
$92.3_{\scriptscriptstyle \pm 14.7}$ & 
$136.3_{\scriptscriptstyle \pm 18.5}$ & 
$0.000\%$  &
$-0.018_{\scriptscriptstyle \pm .018}$ & 
$69.4_{\scriptscriptstyle \pm 10.6}$ & 
$124.2_{\scriptscriptstyle \pm 16.0}$ & 
$0.000\%$  &
$-0.002_{\scriptscriptstyle \pm .013}$ & 
$93.5_{\scriptscriptstyle \pm 9.82}$ & 
$146.3_{\scriptscriptstyle \pm 13.2}$ &
$0.000\%$  \\

QFE-RNN   
& $-0.064_{\scriptscriptstyle \pm .015}$ 
& $61.3_{\scriptscriptstyle \pm 6.40}$ 
& $132.7_{\scriptscriptstyle \pm 15.6}$
& $0.000\%$
& $-0.126_{\scriptscriptstyle \pm .026}$ 
& $37.9_{\scriptscriptstyle \pm 4.73}$ 
& $124.4_{\scriptscriptstyle \pm 16.3}$ 
& $0.000\%$
& $-0.091_{\scriptscriptstyle \pm .032}$ 
& $52.5_{\scriptscriptstyle \pm 7.39}$ 
& $170.1_{\scriptscriptstyle \pm 13.1}$ 
& $0.000\%$ \\
\bottomrule
\end{tabular}
\end{adjustbox}

\end{table}


\begin{table}[!htbp]
\centering
\caption{Coverage gap ($\Delta$Cov), prediction interval width (PI-W), and Winkler score (WS), averaged across sequences within each dataset, for DSCP-Local using RNN and Transformer encoders with three sets of target quantile levels: the median only (${0.5}$), three quantile levels (${0.1,0.5,0.9}$), and nine quantile levels (${0.1,0.2,0.3,0.4,0.5,0.6,0.7,0.8,0.9}$). The target coverage is $1-\alpha=0.9$.}
\label{table:ablation_local}

\begin{adjustbox}{max width=\linewidth}
\begin{tabular}{lccccccccc}
\toprule
& \multicolumn{9}{c}{Air} \\
\cmidrule(lr){2-10}
Method
& \multicolumn{3}{c}{LR}
& \multicolumn{3}{c}{LSTM}
& \multicolumn{3}{c}{Chronos} \\
\cmidrule(lr){2-4}\cmidrule(lr){5-7}\cmidrule(lr){8-10}
& $\Delta$Cov & PI-W & WS
& $\Delta$Cov & PI-W & WS
& $\Delta$Cov & PI-W & WS \\

\midrule

DSCP-RNN-Local-Median
& $-0.017_{\scriptscriptstyle \pm 0.005}$
& $80.8_{\scriptscriptstyle \pm 10.4}$
& $137.0_{\scriptscriptstyle \pm 14.6}$
& $-0.019_{\scriptscriptstyle \pm 0.006}$
& $66.0_{\scriptscriptstyle \pm 9.2}$
& $119.5_{\scriptscriptstyle \pm 14.7}$
& $+0.029_{\scriptscriptstyle \pm 0.006}$
& $120.1_{\scriptscriptstyle \pm 10.2}$
& $163.9_{\scriptscriptstyle \pm 11.8}$ \\

DSCP-RNN-Local-3   & $-0.015_{\scriptscriptstyle \pm .003}$ & $81.0_{\scriptscriptstyle \pm 10.0}$ & $133.2_{\scriptscriptstyle \pm 14.4}$ & $-0.013_{\scriptscriptstyle \pm .005}$ & $67.8_{\scriptscriptstyle \pm 8.67}$ & $116.0_{\scriptscriptstyle \pm 13.3}$ & $+0.027_{\scriptscriptstyle \pm .006}$ & $117.4_{\scriptscriptstyle \pm 9.14}$ & $160.7_{\scriptscriptstyle \pm 11.4}$ \\

DSCP-RNN-Local-9
& $-0.006_{\scriptscriptstyle \pm 0.004}$
& $82.4_{\scriptscriptstyle \pm 10.3}$
& $134.3_{\scriptscriptstyle \pm 14.8}$
& $-0.004_{\scriptscriptstyle \pm 0.004}$
& $67.7_{\scriptscriptstyle \pm 9.0}$
& $114.0_{\scriptscriptstyle \pm 13.6}$
& $+0.033_{\scriptscriptstyle \pm 0.006}$
& $118.3_{\scriptscriptstyle \pm 10.9}$
& $160.4_{\scriptscriptstyle \pm 12.3}$ \\

DSCP-Transformer-Local-Median
& $-0.013_{\scriptscriptstyle \pm 0.003}$
& $81.1_{\scriptscriptstyle \pm 10.4}$
& $136.1_{\scriptscriptstyle \pm 15.5}$
& $-0.009_{\scriptscriptstyle \pm 0.003}$
& $69.4_{\scriptscriptstyle \pm 8.2}$
& $121.7_{\scriptscriptstyle \pm 13.6}$
& $+0.011_{\scriptscriptstyle \pm 0.005}$
& $135.1_{\scriptscriptstyle \pm 12.8}$
& $188.6_{\scriptscriptstyle \pm 14.7}$ \\

DSCP-Transformer-Local-3   & $-0.007_{\scriptscriptstyle \pm .004}$ & $83.4_{\scriptscriptstyle \pm 9.60}$ & $140.8_{\scriptscriptstyle \pm 15.5}$ & $-0.007_{\scriptscriptstyle \pm .003}$ & $71.8_{\scriptscriptstyle \pm 9.29}$ & $127.6_{\scriptscriptstyle \pm 15.1}$ & $+0.015_{\scriptscriptstyle \pm .005}$ & $130.4_{\scriptscriptstyle \pm 11.9}$ & $179.9_{\scriptscriptstyle \pm 14.2}$ \\

DSCP-Transformer-Local-9
& $-0.012_{\scriptscriptstyle \pm 0.002}$
& $80.8_{\scriptscriptstyle \pm 10.3}$
& $135.8_{\scriptscriptstyle \pm 15.2}$
& $-0.008_{\scriptscriptstyle \pm 0.003}$
& $69.5_{\scriptscriptstyle \pm 8.4}$
& $123.0_{\scriptscriptstyle \pm 13.6}$
& $+0.017_{\scriptscriptstyle \pm 0.003}$
& $127.9_{\scriptscriptstyle \pm 11.6}$
& $178.4_{\scriptscriptstyle \pm 13.7}$ \\

\midrule
& \multicolumn{9}{c}{Solar} \\
\cmidrule(lr){2-10}
Method
& \multicolumn{3}{c}{LR}
& \multicolumn{3}{c}{LSTM}
& \multicolumn{3}{c}{Chronos} \\
\cmidrule(lr){2-4}\cmidrule(lr){5-7}\cmidrule(lr){8-10}
& $\Delta$Cov & PI-W & WS
& $\Delta$Cov & PI-W & WS
& $\Delta$Cov & PI-W & WS \\

\midrule

DSCP-RNN-Local-Median
& $-0.013_{\scriptscriptstyle \pm 0.009}$
& $89.6_{\scriptscriptstyle \pm 19.3}$
& $141.7_{\scriptscriptstyle \pm 30.2}$
& $-0.003_{\scriptscriptstyle \pm 0.009}$
& $61.0_{\scriptscriptstyle \pm 15.7}$
& $117.1_{\scriptscriptstyle \pm 20.5}$
& $+0.007_{\scriptscriptstyle \pm 0.007}$
& $82.3_{\scriptscriptstyle \pm 13.5}$
& $116.2_{\scriptscriptstyle \pm 13.8}$ \\

DSCP-RNN-Local-3  & $-0.001_{\scriptscriptstyle \pm .009}$ & $91.0_{\scriptscriptstyle \pm 18.9}$ & $124.3_{\scriptscriptstyle \pm 28.7}$ & $+0.016_{\scriptscriptstyle \pm .010}$ & $64.0_{\scriptscriptstyle \pm 15.3}$ & $87.0_{\scriptscriptstyle \pm 17.6}$ & $+0.011_{\scriptscriptstyle \pm .007}$ & $83.1_{\scriptscriptstyle \pm 14.3}$ & $109.9_{\scriptscriptstyle \pm 13.2}$ \\

DSCP-RNN-Local-9
& $+0.001_{\scriptscriptstyle \pm 0.009}$
& $91.5_{\scriptscriptstyle \pm 19.1}$
& $125.9_{\scriptscriptstyle \pm 31.6}$
& $+0.011_{\scriptscriptstyle \pm 0.010}$
& $62.3_{\scriptscriptstyle \pm 15.9}$
& $87.1_{\scriptscriptstyle \pm 17.8}$
& $+0.010_{\scriptscriptstyle \pm 0.007}$
& $83.3_{\scriptscriptstyle \pm 14.2}$
& $110.4_{\scriptscriptstyle \pm 13.1}$ \\

DSCP-Transformer-Local-Median
& $+0.009_{\scriptscriptstyle \pm 0.007}$
& $93.6_{\scriptscriptstyle \pm 18.9}$
& $138.1_{\scriptscriptstyle \pm 27.6}$
& $+0.005_{\scriptscriptstyle \pm 0.007}$
& $65.0_{\scriptscriptstyle \pm 15.9}$
& $113.2_{\scriptscriptstyle \pm 19.8}$
& $-0.000_{\scriptscriptstyle \pm 0.009}$
& $86.3_{\scriptscriptstyle \pm 15.6}$
& $145.0_{\scriptscriptstyle \pm 17.6}$ \\

DSCP-Transformer-Local-3   & $-0.001_{\scriptscriptstyle \pm .008}$ & $88.6_{\scriptscriptstyle \pm 18.4}$ & $123.8_{\scriptscriptstyle \pm 27.6}$ & $+0.002_{\scriptscriptstyle \pm .009}$ & $60.7_{\scriptscriptstyle \pm 15.7}$ & $105.7_{\scriptscriptstyle \pm 19.0}$ & $+0.001_{\scriptscriptstyle \pm .011}$ & $75.9_{\scriptscriptstyle \pm 12.4}$ & $122.6_{\scriptscriptstyle \pm 14.3}$ \\

DSCP-Transformer-Local-9
& $+0.003_{\scriptscriptstyle \pm 0.008}$
& $88.5_{\scriptscriptstyle \pm 18.4}$
& $130.9_{\scriptscriptstyle \pm 30.7}$
& $+0.002_{\scriptscriptstyle \pm 0.007}$
& $60.2_{\scriptscriptstyle \pm 16.2}$
& $110.4_{\scriptscriptstyle \pm 20.0}$
& $+0.007_{\scriptscriptstyle \pm 0.009}$
& $79.2_{\scriptscriptstyle \pm 13.1}$
& $128.7_{\scriptscriptstyle \pm 13.6}$ \\

\midrule
& \multicolumn{9}{c}{Sapflow} \\
\cmidrule(lr){2-10}
Method
& \multicolumn{3}{c}{LR}
& \multicolumn{3}{c}{LSTM}
& \multicolumn{3}{c}{Chronos} \\
\cmidrule(lr){2-4}\cmidrule(lr){5-7}\cmidrule(lr){8-10}
& $\Delta$Cov & PI-W & WS
& $\Delta$Cov & PI-W & WS
& $\Delta$Cov & PI-W & WS \\

\midrule
DSCP-RNN-Local-Median
& $-0.004_{\scriptscriptstyle \pm 0.008}$
& $542.9_{\scriptscriptstyle \pm 348.2}$
& $878.3_{\scriptscriptstyle \pm 605.7}$
& $+0.007_{\scriptscriptstyle \pm 0.010}$
& $625.7_{\scriptscriptstyle \pm 527.0}$
& $941.8_{\scriptscriptstyle \pm 746.5}$
& $+0.016_{\scriptscriptstyle \pm 0.012}$
& $1512.6_{\scriptscriptstyle \pm 1199.0}$
& $2040.5_{\scriptscriptstyle \pm 1520.1}$ \\

DSCP-RNN-Local-3  & $+0.001_{\scriptscriptstyle \pm .012}$ & $558.2_{\scriptscriptstyle \pm 375.5}$ & $864.2_{\scriptscriptstyle \pm 619.6}$ & $+0.007_{\scriptscriptstyle \pm .013}$ & $632.1_{\scriptscriptstyle \pm 550.7}$ & $911.4_{\scriptscriptstyle \pm 743.6}$ & $+0.017_{\scriptscriptstyle \pm .010}$ & $1400.5_{\scriptscriptstyle \pm 1102.3}$ & $1860.4_{\scriptscriptstyle \pm 1382.5}$ \\

DSCP-RNN-Local-9
& $-0.001_{\scriptscriptstyle \pm 0.014}$
& $546.3_{\scriptscriptstyle \pm 356.4}$
& $856.3_{\scriptscriptstyle \pm 609.4}$
& $+0.008_{\scriptscriptstyle \pm 0.014}$
& $616.9_{\scriptscriptstyle \pm 526.0}$
& $905.2_{\scriptscriptstyle \pm 737.6}$
& $+0.025_{\scriptscriptstyle \pm 0.013}$
& $1370.2_{\scriptscriptstyle \pm 1075.7}$
& $1771.3_{\scriptscriptstyle \pm 1321.5}$ \\

DSCP-Transformer-Local-Median
& $+0.009_{\scriptscriptstyle \pm 0.007}$
& $589.2_{\scriptscriptstyle \pm 384.4}$
& $916.0_{\scriptscriptstyle \pm 639.3}$
& $+0.007_{\scriptscriptstyle \pm 0.007}$
& $868.0_{\scriptscriptstyle \pm 1172.2}$
& $1207.7_{\scriptscriptstyle \pm 1387.7}$
& $+0.010_{\scriptscriptstyle \pm 0.010}$
& $2031.6_{\scriptscriptstyle \pm 1703.9}$
& $3242.6_{\scriptscriptstyle \pm 2676.1}$ \\

DSCP-Transformer-Local-3  & $+0.009_{\scriptscriptstyle \pm .006}$ & $586.1_{\scriptscriptstyle \pm 380.5}$ & $912.5_{\scriptscriptstyle \pm 634.6}$ & $+0.007_{\scriptscriptstyle \pm .008}$ & $898.3_{\scriptscriptstyle \pm 1357.8}$ & $1245.6_{\scriptscriptstyle \pm 1264.3}$ & $+0.007_{\scriptscriptstyle \pm .005}$ & $1927.9_{\scriptscriptstyle \pm 1557.9}$ & $3044.2_{\scriptscriptstyle \pm 2340.9}$  \\

DSCP-Transformer-Local-9
& $+0.009_{\scriptscriptstyle \pm 0.007}$
& $585.5_{\scriptscriptstyle \pm 380.2}$
& $913.0_{\scriptscriptstyle \pm 635.2}$
& $+0.006_{\scriptscriptstyle \pm 0.007}$
& $870.7_{\scriptscriptstyle \pm 1204.3}$
& $1220.5_{\scriptscriptstyle \pm 1439.9}$
& $+0.007_{\scriptscriptstyle \pm 0.007}$
& $1856.9_{\scriptscriptstyle \pm 1535.2}$
& $2953.7_{\scriptscriptstyle \pm 2358.1}$ \\

\bottomrule
\end{tabular}
\end{adjustbox}

\end{table}

\end{document}